\documentclass{article}

\usepackage{microtype}
\usepackage{graphicx}
\usepackage{subcaption}
\usepackage{booktabs} %

\usepackage{hyperref}
\usepackage{pgfplots} 
\pgfplotsset{compat=1.18}
\usepgfplotslibrary{external}

\let\realaddcontentsline\addcontentsline %
\usepackage[accepted]{icml2026}
\usepackage{custom}

\usepackage{amsmath}
\usepackage{amssymb}
\usepackage{mathtools}
\usepackage{amsthm}
\usepackage{thmtools}
\usepackage{bbm}
\usepackage{tikz}
\usetikzlibrary{patterns,positioning,shapes.geometric,calc,decorations.pathreplacing}
\usepackage{xfrac}
\usepackage[inline]{enumitem}

\usepackage{thm-restate}
\usepackage[capitalize,noabbrev]{cleveref}
\usepackage{etoc} %

\crefname{section}{\S}{\S\S}
\crefname{table}{Tab.}{Tabs.}
\crefname{figure}{Fig.}{Figs.}
\crefname{algorithm}{Alg.}{Algs.}
\crefname{equation}{Eq.}{Eqs.}
\crefname{example}{Example}{Examples}
\crefname{fact}{Fact}{Facts}
\crefname{appendix}{App.}{Apps.}
\crefname{theorem}{Thm.}{Thms.}
\crefname{aquestion}{Question}{Questions}
\crefname{assumption}{Assumption}{Assumptions}
\crefname{lemma}{Lem.}{Lemmata}
\crefname{conjecture}{Conj.}{Conjs.}
\crefname{proposition}{Prop.}{Props.}
\crefname{chapter}{Ch.}{Chapters}
\crefname{line}{line}{lines}
\crefname{principle}{Principle}{Principles}
\crefname{definition}{Def.}{Defs.}
\crefname{corollary}{Cor.}{Cors.}
\crefname{construction}{Construction}{Construction}
\crefname{takeaway}{Takeaway}{Takeaways}
\crefformat{section}{\S#2#1#3} 

\theoremstyle{plain}
\newtheorem{theorem}{Theorem}[section]

\newtheorem{lemma}[theorem]{Lemma}
\newtheorem{corollary}[theorem]{Corollary}
\newtheorem{conjecture}[theorem]{Conjecture}

\newtheorem{definition}[theorem]{Definition}

\theoremstyle{remark}
\newtheorem{remark}[theorem]{Remark}

\usepackage[textsize=tiny]{todonotes}

\definecolor{mintgreen}{RGB}{152, 255, 152}
\makeatletter
\newcommand*\iftodonotes{\if@todonotes@disabled\expandafter\@secondoftwo\else\expandafter\@firstoftwo\fi}  %
\makeatother

\usepackage{inconsolata}

\newcommand{\mymacro}[1]{{#1}}

\newcommand{\defn}[1]{\textbf{#1}}

\newcommand{\paroutline}[3][false]{%
    \ifnum\pdfstrcmp{#1}{true}=0
        #3%
    \else
        [\textit{\textcolor{DiverseMagenta}{#2}}] \textcolor{AccentBlue}{#3}%
    \fi
}

\newcommand{\uniform}{{\mymacro{ \mathrm{Unif}}}}

\newcommand{\weights}{{\mymacro{\boldsymbol{w}}}}

\newcommand{\bias}{{\mymacro{\omega}}}

\newcommand{\symRep}{{\mymacro{ \boldsymbol{e}}}}

\newcommand{\parityLan}{{\mymacro{\textsc{Parity}}}\xspace}
\newcommand{\majorityLan}{{\mymacro{\textsc{Majority}}}\xspace}
\newcommand{\dictLan}{{\mymacro{\textsc{Dict}}}\xspace}
\newcommand{\firstLan}{{\mymacro{\textsc{First}}}\xspace}

\newcommand{\paramspace}{{\mymacro{\Theta}}}
\newcommand{\param}{{\mymacro{\boldsymbol{\theta}}}}

\newcommand{\ind}[1]{{\mymacro{\mathbbm{1} \left\{ #1 \right\}}}}

\newcommand{\R}{{\mymacro{ \mathbb{R}}}}

\newcommand{\Z}{{\mymacro{ \mathbb{Z}}}}

\newcommand{\alphabet}{{\mymacro{ \Sigma}}}

\newcommand{\lang}{{\mymacro{ \mathcal{L}}}}
\newcommand{\kleene}[1]{{\mymacro{#1^*}}}

\newcommand{\str}{{\mymacro{\mathbf{x}}}}

\newcommand{\strPrime}{{\mymacro{ \str'}}}

\newcommand{\strlen}{{\mymacro{N}}}

\newcommand{\defeq}{\mathrel{{\mymacro{\stackrel{\textnormal{\tiny def}}{=}}}}}

\newcommand{\justification}[1]{\quad \text{\footnotesize {\color{ETHGray}(#1)}}}

\newcommand{\idxn}{{\mymacro{ n}}}

\newcommand{\idxh}{{\mymacro{ h}}}

\newcommand{\idxl}{{\mymacro{ \ell}}}
\newcommand{\idxm}{{\mymacro{ m}}}

\newcommand{\eos}{{\mymacro{\textsc{eos}}}\xspace}
\newcommand{\sym}[1]{{\mymacro{\ensuremath{\mathrm{#1}}}}}

\newcommand{\functionf}{{\mymacro{f}}}
\newcommand{\functiong}{{\mymacro{g}}}
\newcommand{\functionClass}{{\mymacro{\mathcal{F}}}}
\newcommand{\mlpfunc}{{\mymacro{\nu}}}
\newcommand{\mlpW}{{\mymacro{\boldsymbol{W}}}}
\newcommand{\mlpBias}{{\mymacro{\boldsymbol{\phi}}}}
\newcommand{\flip}[2]{{\mymacro{{#1}^{\oplus #2}}}}
\newcommand{\booleanset}{{\mymacro{\{\texttt{0}, \texttt{1}\}}}}
\newcommand{\sensitivity}{{\mymacro{\mathrm{s}}}}
\newcommand{\sensitivityfunc}[2]{{\mymacro{\sensitivity_{\strlen}\left( #1,#2\right)}}}
\newcommand{\avgsensitivity}{{\mymacro{\mathrm{as}}}}
\newcommand{\avgsensitivityfunc}[2]{{\mymacro{\avgsensitivity_{#2}\left(#1\right)}}}
\newcommand{\maxsensitivity}{{\mymacro{\mathrm{max \sensitivity}}}}
\newcommand{\maxsensitivityfunc}[2]{{\mymacro{\maxsensitivity_{#2}\left(#1\right)}}}
\newcommand{\minsensitivity}{{\mymacro{\mathrm{min \sensitivity}}}}
\newcommand{\minsensitivityfunc}[2]{{\mymacro{\minsensitivity_{#2}\left(#1\right)}}}

\newcommand{\hiddDim}{{\mymacro{ D}}}
\newcommand{\Rhid}{{\mymacro{ \R^\hiddDim}}}
\newcommand{\paramDim}{{\mymacro{M}}}
\newcommand{\paramDiam}{{\mymacro{\mathrm{diam}(\paramspace)}}}

\newcommand{\negterm}[1]{{\mymacro{ {\raise.17ex\hbox{$\scriptstyle\sim$}} #1}}}

\newcommand{\ignore}[1]{}
\newcommand{\expandLater}[1]{}

\newcommand{\tfheadnum}{\mymacro{H}}

\newcommand{\qMtx}{\mymacro{\boldsymbol{Q}}}
\newcommand{\kMtx}{\mymacro{\boldsymbol{K}}}
\newcommand{\vMtx}{\mymacro{\boldsymbol{V}}}

\newcommand{\activation}{\mymacro{\boldsymbol{y}}}

\newcommand{\attnScore}{{\mymacro{a}}}
\newcommand{\attnWeight}{{\mymacro{b}}}
\newcommand{\headOut}{{\mymacro{\boldsymbol{c}}}}
\newcommand{\preLN}{{\mymacro{\boldsymbol{d}}}}
\newcommand{\meanFun}{{\mymacro{\mu}}}

\newcommand{\margin}{{\mymacro{\xi}}}
\newcommand{\varFun}{{\mymacro{\sigma^2}}}

\newcommand{\transformernetwork}{\mymacro{T}}

\newcommand{\tfnumlayer}{\mymacro{L}}

\newcommand{\posEnc}{\mymacro{\boldsymbol{p}}}

\def\1{\boldsymbol{1}}

\def\eps{{\mymacro{ \epsilon}}}

\def\va{{{\mymacro{ \boldsymbol{a}}}}}

\def\vu{{{\mymacro{ \boldsymbol{u}}}}}
\def\vv{{{\mymacro{ \boldsymbol{v}}}}}

\def\vy{{{\mymacro{ \boldsymbol{y}}}}}
\def\vz{{{\mymacro{ \boldsymbol{z}}}}}

\newcommand{\N}{{\mymacro{ \mathbb{N}}}}

\newcommand{\ReLU}{{\mymacro{ \mathrm{ReLU}}}}

\newcommand{\bigO}[1]{{\mymacro{ \mathcal{O}(#1)}}}

\newcommand{\poly}[1]{\mathrm{poly}(#1)}

\newcommand{\normfactor}{{\mymacro{z}}}
\newcommand{\influence}{{\mymacro{\operatorname{I}}}}%
\newcommand{\Blowup}{{\mymacro{\beta}}}%
\newcommand{\perLayerBlowup}{{\mymacro{\tau}}}%
\newcommand{\hammingNbhd}{{\mymacro{B}}}%
\newcommand{\paramPert}{{\mymacro{\boldsymbol{\Delta}}}}%
\newcommand{\paramPertReal}{{\mymacro{\boldsymbol{\delta}}}}
\DeclareMathOperator{\diam}{diam}
\newcommand{\accept}{{\mymacro{\alpha}}}

\icmltitlerunning{Understanding the Parameter Space Geometry of Transformers Encoding Boolean Functions}

\begin{document}

\etocdepthtag.toc{mainmatter} %

\twocolumn[
\icmltitle{Understanding the Parameter Space Geometry of Transformers Encoding Boolean Functions
}

\icmlsetsymbol{equal}{*}

\begin{icmlauthorlist}
    \icmlauthor{Blanka Kövér}{ethz-math}
    \icmlauthor{Alexandra Butoi}{ethz-cs}
    \icmlauthor{Anej Svete}{ethz-cs}
    \icmlauthor{Michael Hahn}{saarland}
    \icmlauthor{Ryan Cotterell}{ethz-cs}
\end{icmlauthorlist}

\icmlaffiliation{ethz-math}{Department of Mathematics, ETH Zürich, Zürich,  Switzerland}
\icmlaffiliation{ethz-cs}{Department of Computer Science, ETH Zürich, Zürich,  Switzerland}
\icmlaffiliation{saarland}{Department of Language Science and Technology, Saarland University, Saarbrücken, Germany}

\icmlcorrespondingauthor{Blanka Kövér}{koeverb@student.ethz.ch}
\icmlcorrespondingauthor{Alexandra Butoi}{alexandra.butoi@inf.ethz.ch}
\icmlcorrespondingauthor{Anej Svete}{anej.svete@inf.ethz.ch}
\icmlcorrespondingauthor{Michael Hahn}{mhahn@lst.uni-saarland.de}
\icmlcorrespondingauthor{Ryan Cotterell}{ryan.cotterell@inf.ethz.ch}

\icmlkeywords{Machine Learning, ICML}

\vskip 0.3in
]

\printAffiliationsAndNotice{}  %

\begin{abstract}
    Transformers consistently fail to learn certain simple functions that are provably expressible with specific parameter settings.
    This gap between \emph{learnability} and \emph{expressivity} is particularly prominent for sensitive functions---functions whose output is likely to change if a single bit of the input is flipped---for example, \parityLan.
    While prior work has established that transformers exhibit a bias toward functions with low average sensitivity, the precise mechanism underlying this bias remains poorly understood.
    To shed light on this phenomenon, we study the geometry of transformers' parameter space.
    We show that sensitive functions---even when representable---occupy a vanishingly small region that random initialization is very likely to miss.
    Specifically, we shift the focus from average sensitivity to the full sensitivity profile---the distribution of sensitivity values across all inputs---and prove that randomly initialized transformers almost surely compute functions which have low-sensitivity strings.
    Consequently, any function that lacks such strings is provably unlearnable.
\end{abstract}

\section{Introduction}
Much theoretical work on transformers aims to better understand their abilities and limitations.  A central tool for this is studying their \emph{expressivity}, i.e., for which functions there exist parameter settings that allow a transformer to compute them.
See \citet{strobl-etal-2024-formal} for a recent survey.
Expressivity alone, however, is not enough to predict what transformers can learn in practice: even when the right parameter settings exist, they may not be reachable through training, which is precisely the question of \emph{learnability}.

A canonical example of the difference between expressivity and learnability is \parityLan, the function which returns whether the input has an even or odd number of ones.
With layer norm, transformers are expressive enough to represent it \cite{chiang-cholak-2022-overcoming}, yet fail to learn it in practice \cite{bhattamishra-etal-2023-simplicity,butoi2025trainingneuralnetworksrecognizers}.
A generalization of this finding has emerged from related empirical and theoretical work \citep{bhattamishra-etal-2023-simplicity, abbe-2023-generalization,hahn-rofin-2024-sensitive}: transformers struggle to learn Boolean functions that are \emph{sensitive}, i.e., those for which flipping a single input bit is likely to change the output.
However, a precise connection between sensitivity and what optimization can find is still missing; we take an initial step in this direction by analyzing the \emph{volume} of parameter space that different classes of Boolean functions occupy.

\begin{figure*}[t]
    \centering
    \tikzexternaldisable
    \begin{tikzpicture}[xscale=1.15, yscale=0.85]

        \fill[ETHBlue!15] plot[smooth cycle, tension=0.7] coordinates {
        (-3.0,2.0) (-0.5,2.3) (1.8,1.8) (2.5,0.4) (2.2,-1.3) (0.5,-1.9) (-1.5,-1.7) (-2.5,-0.7) (-2.8,0.7)
        };
        \draw[ETHBlue!80, thick] plot[smooth cycle, tension=0.7] coordinates {
        (-3.0,2.0) (-0.5,2.3) (1.8,1.8) (2.5,0.4) (2.2,-1.3) (0.5,-1.9) (-1.5,-1.7) (-2.5,-0.7) (-2.8,0.7)
        };

        \node[ETHBlue!80] at (0, 2.75) {Parameter space $\paramspace$};

        \fill[AccentGreen!25, opacity=0.7] plot[smooth cycle, tension=0.8] coordinates {
        (-1.8,1.2) (-0.4,1.4) (0.6,1.0) (0.8,0.2) (0.4,-0.4) (-0.6,-0.6) (-1.6,-0.1) (-2.0,0.5)
        };
        \draw[AccentGreen!80!black, thick, densely dashed] plot[smooth cycle, tension=0.8] coordinates {
        (-1.8,1.2) (-0.4,1.4) (0.6,1.0) (0.8,0.2) (0.4,-0.4) (-0.6,-0.6) (-1.6,-0.1) (-2.0,0.5)
        };
        \node[AccentGreen!80!black, font=\small] at (-0.5, 0.4) {\majorityLan};
        \node[AccentGreen!70!black, font=\tiny, anchor=north] at (-0.5, 0.2) {\emph{(conjectured)}};

        \draw[ETHRed, thick, dotted] plot[smooth, tension=0.9] coordinates {
        (-1.4,-1.3) (-0.4,-1.5) (0.6,-1.6) (1.4,-1.5)
        };
        \node[ETHRed, font=\small, anchor=south] at (-0.1,-1.5) {\parityLan};

        \draw[ETHPurple, thick, dotted] plot[smooth, tension=0.8] coordinates {
        (1.4,0.9) (1.7,0.2) (1.85,-0.5) (1.7,-1.1)
        };
        \node[ETHPurple, font=\small, anchor=south] at (1.65,0.9) {\firstLan};

        \fill[ETHBlue!70] (-1.2,0.8) circle (1.5pt);
        \fill[ETHBlue!70] (-0.1,0.9) circle (1.5pt);
        \fill[ETHBlue!70] (-0.7,0.2) circle (1.5pt);
        \fill[ETHBlue!70] (0.3,0.4) circle (1.5pt);
        \fill[ETHBlue!70] (-1.4,0.3) circle (1.5pt);

        \node[anchor=north east, font=\scriptsize, inner sep=1pt] at (-3,-0.5) {
        \begin{tabular}{@{}r@{\,}l@{}}
            \tikz[baseline=-0.5ex]\fill[ETHBlue!70] (0,0.1) circle (1.5pt); & Random initializations \\[-2pt]
            \tikz[baseline=-0.5ex]\draw[ETHRed, very thick, dotted] (0,0) -- (0.3,0); & Measure $0$ \\[-2pt]
            \tikz[baseline=-0.5ex]{\fill[AccentGreen!25, opacity=0.7] (0,0) rectangle (0.3,0.2); \draw[AccentGreen!80!black, densely dashed] (0,0) rectangle (0.3,0.2);} & Measure $>0$ (conjectured)
        \end{tabular}
        };
        \node[anchor=north west, font=\small] at (3.5, 2.3) {
        \begin{tabular}{@{}lccc@{}}
            \toprule
            Function & $K_0(\strlen)$ & Avg.\ Sens. & Learn? \\
            \midrule
            \textcolor{ETHRed}{\parityLan} & $0$ & $\strlen$ & \textcolor{ETHRed}{\texttimes} \\[2pt]
            \textcolor{ETHPurple}{\firstLan} & $0$ & $1$ & \textcolor{ETHPurple}{\texttimes} \\[2pt]
            \textcolor{AccentGreen!80!black}{\majorityLan} & ${\approx}2^\strlen$ & $\Theta(\sqrt{\strlen})$ & \textcolor{AccentGreen!80!black}{\checkmark}~\scriptsize(conj.) \\
            \bottomrule
        \end{tabular}
        };

        \node[font=\scriptsize, align=left, anchor=north west, gray] at (3.5, -0.2) {
        $K_0(\strlen)$ = \# strings with sensitivity $0$\\[1pt]
        Unlearnable when $K_0(\strlen) < \strlen^{\frac{\hiddDim-1}{2\tfnumlayer}-5}$
        };
    \end{tikzpicture}
    \tikzexternalenable
    \caption{Summary of our results. \textbf{Left:} Parameter space visualization. Functions with few sensitivity-$0$ strings (\parityLan, \firstLan) occupy measure-zero subsets. 
    The \majorityLan region with positive measure is \emph{conjectured} based on empirical observations (dashed border). 
    Random initialization almost surely avoids measure-zero regions.
    \textbf{Right:} Functions with $K_0(\strlen) < \strlen^{\frac{\hiddDim-1}{2\tfnumlayer} - 5}$ are provably unlearnable; \majorityLan learnability is conjectured.
    }\vspace{-5pt}
    \label{fig:summary}
\end{figure*}

    Throughout, we assume a soft-attention transformer, which we treat as a \emph{recognizer} of a formal language---a function that classifies input strings as belonging to the language or not---over a binary alphabet.
    Our analysis is divided into two cases, corresponding to the two settings of the layer-norm stabilizing constant $\eps$ in the denominator of \cref{eq:layer-norm}, which prevents division by zero.
    The first is a warm-up: when $\eps$ is bounded away from zero, the layer-norm denominator stays bounded below and the whole transformer is uniformly Lipschitz in its parameters.
    Even more sharply, we show that no non-constant Boolean function is recognizable for arbitrarily long inputs (\cref{cor:tf-lipschitz-unlearnable}).
    This suggests that, when $\eps > 0$, transformers are deeply inexpressive as models of computation.\looseness=-1

    Our main contribution is our analysis of the second case, $\eps = 0$, where the layer-norm denominator can become arbitrarily small and the per-layer Lipschitz constant is no longer uniformly bounded.
    We provide a general measure-theoretic framework that generalizes and strengthens \citeposs{hahn-rofin-2024-sensitive} results on sensitive functions.
    In particular, we shift the focus from average sensitivity to a function's \emph{sensitivity profile}, i.e., the distribution over input strings of each possible sensitivity value.
    The sensitivity profile captures more fine-grained information than the average sensitivity alone.
    We show (\cref{thm:main-result,cor:general-sens-0}) that the sensitivity profile of a randomly initialized transformer 
    has a \emph{non-empty lower tail}: for all large enough $\strlen$, at least polynomially many strings have low sensitivity.
    Equivalently, from a measure-theoretic perspective, the parameters that encode Boolean functions with very few sensitivity-zero strings occupy a Lebesgue-measure-zero subset of the parameter space.
    Our analysis implies that transformers cannot recognize any Boolean function 
    which lacks such strings, a class that captures many functions of central interest to the theoretical literature on transformers---\parityLan, sparse parities, and dictator functions, all of which have zero strings of sensitivity zero.
    Notably, our characterization rules out \firstLan for long enough inputs, despite its low average sensitivity (equal to $1$), a case that average-sensitivity analysis alone cannot capture.
  In contrast to the $\eps > 0$ case, our bound does \emph{not} rule out functions like \majorityLan, whose exponentially many sensitivity-zero strings clear the polynomial floor with room to spare.

    Empirically, we observe that the sensitivity profiles of randomly initialized transformers are highly skewed toward zero, corroborating our theoretical predictions.
    This bias persists even after training, 
    suggesting that the inductive bias imposed by initialization constrains what functions gradient-based optimization can reach.
    However, we show that transformers can reliably learn \majorityLan.
    This leads us to conjecture that \majorityLan occupies a positive-measure subset of parameter space (\cref{fig:summary}, left)---a positive complement to the measure-zero unlearnability picture for \parityLan and \firstLan.\looseness=-1

\section{Preliminaries}\label{sec:preliminaries}
Let $\N=\{1,2,\ldots\}$ and $\N_0=\N \cup \{0\}$.
We write $[\strlen] \defeq \{1, \ldots, \strlen\}$ to denote the set of natural numbers up to $\strlen$.
We use bold symbols to denote vectors and matrices.
We denote by $B_p^{d}(r) \defeq \{\vz \in \R^d: \|\vz\|_p \leq r\}$ the $d$-dimensional closed $\ell_p$-ball with radius $r$, centered at $\boldsymbol{0}$. 
$\lambda_q$ denotes the $q$-dimensional Lebesgue measure for $q \in \N$. 

\subsection{Boolean Functions}\label{subsec:boolean-functions}
An \defn{alphabet} $\alphabet$ is a finite, non-empty set of \defn{symbols}, and a \defn{string} $\str$ is a finite-length sequence of symbols. We denote by $\alphabet^\strlen$ and $\kleene{\alphabet}$ the set of strings over $\alphabet$ of length $\strlen$, and the set of all strings over $\alphabet$, respectively. Without loss of generality,\footnote{We can always come up with a binary encoding scheme to reduce a non-binary alphabet to a binary one.} we assume that $\alphabet$ is binary, i.e., $\alphabet = \booleanset$.
Throughout the rest of the paper, 
we consider \defn{bit strings} $\str \in \kleene{\booleanset}$ and \defn{Boolean functions} $\functionf \colon \kleene{\booleanset}\to \R$ mapping bit strings to real values.
Unless further specified, we assume that $\str = \sym{x}_1 \cdots \sym{x}_\strlen \in \booleanset^\strlen $ is of length $\strlen$ where we write $\sym{x}_\idxn$ to denote $\str$'s $\idxn^\text{th}$ bit.
Additionally, we write $\flip{\str}{\idxn}\in \booleanset^\strlen$ to denote the string resulting from flipping the $\idxn^{\text{th}}$ bit $\sym{x}_\idxn$, for any $\idxn\in[\strlen]$.\looseness=-1

\paragraph{Language Recognition.}
We now discuss language recognition by Boolean functions.
Let $\alphabet$ be an alphabet.
Then, a \defn{language} $\lang \subseteq \kleene{\alphabet}$ is a set of strings over $\alphabet$.
We write $\lang_{|\strlen}$ for the restriction of $\lang$ to strings of length $\strlen$, i.e., $\lang_{|\strlen} \defeq \alphabet^\strlen \cap \lang$.
\begin{definition}
Let $\alphabet$ be an alphabet.
A Boolean function $\functionf_\strlen \colon \booleanset^\strlen \rightarrow \booleanset$ recognizes a language $\lang \subseteq \kleene{\alphabet}$ if 
\begin{align}
\functionf_\strlen(\str)=
    \begin{cases}
        \texttt{1}, \quad \text{if }\str \in \lang_{|\strlen} \\
        \texttt{0}, \quad \text{if }\str \in \alphabet^\strlen \setminus \lang_{|\strlen}.
    \end{cases}
\end{align}
Moreover, a family of Boolean functions $\{\functionf_\strlen\}_{\strlen \in \N_0}$ recognizes $\lang$ if $\functionf_\strlen$ recognizes $\lang$
for all $\strlen \in \N_0$.
\end{definition}

\paragraph{Sensitivity.}
We recall the following definitions related to the sensitivity of Boolean functions \citep{ODonnell_2014}.\looseness=-1
\begin{definition}\label{def:sensitivity}
    The \defn{sensitivity} of function $\functionf$ on a string $\str \in \booleanset^\strlen$ is defined as
    \begin{equation}
        \sensitivityfunc{\str}{\functionf} \defeq  \sum_{\idxn=1}^\strlen \left|\functionf(\str) - \functionf\left(\flip{\str}{\idxn}\right)\right|^2.
    \end{equation}
\end{definition} If $\functionf$ maps to $\booleanset$, then the sensitivity $\sensitivityfunc{\str}{\functionf}$ is the number of Hamming neighbors of $\str$ on which $\functionf$ flips.

\begin{definition}
    The \defn{average sensitivity} of function $\functionf$ restricted to strings of length $\strlen$ is defined by
    \begin{equation}
    \avgsensitivityfunc{\functionf}{\strlen}\defeq \frac{1}{2^{\strlen}} \sum_{\str \in \booleanset^{\strlen}} \sensitivityfunc{\str}{\functionf}.
    \end{equation}
\end{definition}
Intuitively, the average sensitivity reflects how sensitive $\functionf$'s output is, on average, to small changes in the input.
In addition to the average sensitivity, we also consider the \emph{maximum} and \emph{minimum sensitivity}. The maximum sensitivity measures the worst-case sensitivity of any string for a specified length $\strlen$, while the minimum sensitivity measures the best-case sensitivity.
We give formal definitions below.
\begin{definition}
  The \defn{maximum sensitivity} of function $\functionf$ restricted to strings of length $\strlen$ is defined by
    \begin{equation}
        \maxsensitivityfunc{\functionf}{\strlen}\defeq \max_{\str \in \booleanset^{\strlen}} \sensitivityfunc{\str}{\functionf}.
    \end{equation}
\end{definition}
\begin{definition}
   The \defn{minimum sensitivity} of function $\functionf$ restricted to strings of length $\strlen$ is defined by
    \begin{equation}
        \minsensitivityfunc{\functionf}{\strlen}\defeq \min_{\str \in \booleanset^{\strlen}} \sensitivityfunc{\str}{\functionf}.
    \end{equation}
\end{definition}

\begin{definition}
    Let $\functionf \colon \kleene{\booleanset} \to \booleanset$ be a Boolean function.
    Then, $\functionf$'s \defn{sensitivity profile} (or \defn{sensitivity spectrum}) is a collection of tuples $\{(K_0(\strlen), K_1(\strlen), \ldots, K_\strlen(\strlen))\}_{\strlen \in \N_0}$ such that, for all $\strlen$, $\functionf$ has precisely $K_\idxn(\strlen)$ strings $\str \in \booleanset^\strlen$ with sensitivity $\sensitivityfunc{\str}{\functionf}=\idxn$ for all $0 \leq \idxn \leq \strlen$.
\end{definition}

\begin{definition}\label{def:influence}
    Let $\functionf \colon \booleanset^\strlen \to \R^P$ be a Boolean function and $\str$ an input string.
    The \defn{absolute influence}
$\influence_\idxn(\functionf, \str)$ of the $\idxn^\text{th}$ bit of $\functionf$ for string $\str$ is defined as 
    \begin{equation}
        \influence_\idxn(\functionf, \str)\defeq\left\|\functionf(\str) - \functionf(\flip{\str}{n})\right\|_2.
    \end{equation}
\end{definition}

\paragraph{Examples of functions.} 
We study several functions that have played an important role in the existing literature on transformer expressivity and learnability.

\textbf{\parityLan} evaluates to $\texttt{1}$ if the number of $\texttt{1}$s in the input is even, and $\texttt{0}$ otherwise. Formally, $\parityLan\left(\str\right)\defeq 1-\sym{x}_1 \oplus \cdots \oplus \sym{x}_\strlen$, where $\oplus$ denotes the XOR operation. Because flipping any input bit changes the output, every input has sensitivity $\strlen$, i.e., $\sensitivityfunc{\str}{\parityLan}=\strlen$ for all $\str \in \booleanset^\strlen$. Consequently, the function's average, minimum, and maximum sensitivities all coincide with this per-input value, $\avgsensitivityfunc{\parityLan}{\strlen}=\maxsensitivityfunc{\parityLan}{\strlen} = \minsensitivityfunc{\parityLan}{\strlen} = \strlen$.

\textbf{\majorityLan} has value $\texttt{1}$ if the input string has more $\texttt{1}$s than $\texttt{0}$s. Formally, $\majorityLan\left(\str\right)\defeq \ind{\sum_{\idxn=1}^\strlen \ind{{\sym{x}_\idxn} = \texttt{1}} > \frac{\strlen}{2}}$.
In \majorityLan, only the nearly balanced (with an almost equal number of $\texttt{0}$s and $\texttt{1}$s) inputs have nonzero sensitivity.
When $\strlen = 2m + 1$ is odd, the $2\binom{\strlen}{m+1}$ inputs with one more ones than zeros, or vice versa, have sensitivity $m+1$. When $\strlen = 2m$ is even, the $\binom{\strlen}{m}$ balanced strings have sensitivity $m$ (flipping any zero changes the output), while the $\binom{\strlen}{m+1}$ strings with $m+1$ ones have sensitivity $m+1$ (flipping any one changes the output).
All remaining (exponentially many) inputs have sensitivity $0$. Taking the minimum and maximum of $\sensitivityfunc{\strPrime}{\majorityLan}$ over $\strPrime \in \booleanset^{{\strlen}}$ then gives $\minsensitivityfunc{\majorityLan}{\strlen}=0$ and $\maxsensitivityfunc{\majorityLan}{\strlen}=\lfloor \strlen/2 \rfloor + 1$.
Moreover, it can be shown that $\avgsensitivityfunc{\majorityLan}{\strlen}=\Theta(\sqrt{\strlen})$ \citep[Exercise 2.22]{ODonnell_2014}.

\textbf{$m$-\textsc{Sparse} functions} (also known as $m$-juntas) depend only on at most $m$ bits of the input. Formally, a function $\functionf\colon \booleanset^\strlen \to \booleanset$ is called $m$-\textsc{Sparse} if there exist $m$ indices $1\leq \idxn_1 < \cdots < \idxn_m \leq \strlen$ and a function $\functiong \colon \booleanset^m \to \booleanset$ such that $\functionf\left(\sym{x}_1, \ldots, \sym{x}_\strlen\right)=\functiong\left(\sym{x}_{\idxn_1}, \ldots, \sym{x}_{\idxn_m}\right)$ for all $\str\in\booleanset^\strlen$. Examples include $m$-\textsc{Sparse} \parityLan and $m$-\textsc{Sparse} \majorityLan functions. For a subset of $m$ indices $1\leq \idxn_1 < \cdots < \idxn_m \leq \strlen$, an $m$-\textsc{Sparse} \parityLan function $\functionf$ is defined as $\functionf\left(\str\right)\defeq 1-\sym{x}_{\idxn_1} \oplus \cdots \oplus \sym{x}_{\idxn_m}$. Similar to the \parityLan function itself, such $m$-\textsc{Sparse} \parityLan functions have the average, minimum, and maximum sensitivity equal to $m$.

\textbf{\textsc{Dictator} functions} depend on exactly one input bit. The dictator function depending on the $\idxn^\text{th}$ bit is defined as $\dictLan_\idxn\left(\str\right)\defeq \ind{{\sym{x}_\idxn} = \texttt{1}}$. The function \firstLan is an instance of dictator function that depends only on the first input bit, i.e., $\firstLan=\dictLan_1$. Dictator functions have the average, minimum, and maximum sensitivity equal to $1$. 
Dictator functions are precisely negated $1$-\textsc{Sparse Parities}.
Thus, $1$-\textsc{Sparse} \majorityLan is a dictator function.

\subsection{Transformers}\label{sec:transformer}
We study transformer encoders in the style of \citet{hahn-rofin-2024-sensitive}.
We consider a binary alphabet $\alphabet=\booleanset$, and an additional end-of-sequence symbol $\eos \not \in \alphabet$.
We restrict inputs to strings of the form $\str \eos$ where $\str \in \booleanset^\strlen$ for some $\strlen \in \N_0$---a Boolean string followed by the end-of-sequence symbol. We denote such strings simply by $\str$, with the convention that $\sym{x}_{\strlen+1} \defeq \eos$.

A \defn{transformer} is a length-preserving function $\kleene{\alphabet}\{\eos\} \rightarrow \kleene{(\R^\hiddDim)}$.
We now offer an inductive definition. 
To start, we assume a \defn{symbol representation} function $\symRep \colon \alphabet \cup \{\eos\} \to \Rhid$ that assigns each alphabet symbol a real vector, and a \defn{positional encoding} function $ \posEnc \colon \N \to \Rhid$ that assigns each natural number a vector.
Given a string $\str \in \kleene{\alphabet}$, the first layer of a transformer is defined as 
\begin{equation}\label{eq:layer-0}
\activation_\idxn^{0}(\str) \defeq {\symRep}(\sym{x}_\idxn) + {\posEnc}(\idxn), \quad \forall \idxn \in [\strlen+1].
\end{equation}
The remainder of the transformer is defined inductively.
At higher layers, the activations $\activation_\idxn^{\idxl}$ at position $\idxn$ at the $\idxl^\text{th}$ layer, for $\idxl \in [\tfnumlayer]$, are computed as follows. 
Each layer has $\tfheadnum$ \defn{attention heads}. 
The $\idxh^\text{th}$ head first computes the attention scores as follows
\begin{subequations}
\begin{align}
{\attnScore}_{\idxn\idxm}^{\idxl\idxh}(\str) &\defeq (\kMtx^{\idxl\idxh}\activation_\idxm^{\idxl-1}(\str)
)^\top \qMtx^{\idxl\idxh}\activation_\idxn^{\idxl-1}(\str) \label{eq:attn-score}\\
{\attnWeight}_{\idxn\idxm}^{\idxl\idxh}(\str)
&\defeq \frac{\exp {\attnScore}_{\idxn\idxm}^{\idxl\idxh}(\str)}{\sum_{\idxm'=1}^{\strlen+1} \exp {\attnScore} _{\idxn\idxm'}^{\idxl\idxh}(\str)} \label{eq:attn-weight}\\
{\headOut}_{\idxn}^{\idxl\idxh}(\str) &\defeq \sum_{\idxm=1}^{\strlen+1} {\attnWeight}_{\idxn\idxm}^{\idxl\idxh}(\str) \vMtx^{\idxl\idxh} \activation_\idxm^{\idxl-1}(\str) \label{eq:head-output}
\end{align}
\end{subequations}
where $\kMtx^{\idxl\idxh}, \vMtx^{\idxl \idxh}, \qMtx^{\idxl\idxh} \in \R^{\hiddDim \times \hiddDim}$ are \defn{key}, \defn{query} and \defn{value} matrices, respectively.
Layers are indexed in the upper left, heads in the upper right, and positions in the lower index.
Then, we construct
\begin{equation}\label{eq:pre-ln}
\preLN_{\idxn}^{\idxl}(\str) = \activation_{\idxn}^{\idxl-1}(\str) +
\mlpfunc^\idxl \left( \sum_{\idxh=1}^\tfheadnum {\headOut}_{\idxn}^{\idxl\idxh}(\str)\right)
\end{equation} where $\mlpfunc^\idxl\colon \R^\hiddDim \to \R^\hiddDim$ is a layer-specific one-layer multi-layer perceptron (MLP)%
\begin{equation}\label{eq:mlp}
\mlpfunc^\idxl(\boldsymbol{y}) =  \mlpW_2^\idxl\, \ReLU(\mlpW_1^\idxl \boldsymbol{y} + \mlpBias_1^\idxl) + \mlpBias_2^\idxl
\end{equation}
with a residual connection, where $\mlpW_1^\idxl, \mlpW_2^\idxl \in \R^{\hiddDim \times \hiddDim}$ and $\mlpBias_1^\idxl, \mlpBias_2^\idxl \in \R^{\hiddDim}$.
Finally, we apply \defn{layer normalization}\footnote{Transformers may differ in where exactly layer normalization is applied \cite{takase2023b2tconnectionservingstability}. 
We assume that it is applied only once per layer---specifically after the MLP block, though neither of these choices affects our results.} to obtain the activations.
To do so, we define the following two auxiliary functions
\begin{subequations}
\begin{align}
    {\meanFun}(\boldsymbol{v}) &\defeq \frac{1}{\hiddDim}\sum_{d=1}^{\hiddDim} v_d, \label{eq:layer-norm-mean}\\
    {\varFun}(\boldsymbol{v}) &\defeq \frac{1}{\hiddDim}\left\|\boldsymbol{v} - {\meanFun}(\boldsymbol{v})\boldsymbol{1}\right\|_2^2 \label{eq:layer-norm-variance}
\end{align}
\end{subequations} for the empirical mean and variance of a vector, where $\boldsymbol{1}\in\R^\hiddDim$ is the all-ones vector. We define the \defn{normalizer} as 
\begin{equation}
 \normfactor_{{\idxn}}^\idxl(\str) \defeq 
\frac{1}{\sqrt{\varFun(\preLN_{\idxn}^{\idxl}(\str))+\eps}},   
\end{equation}
 where $\eps \geq 0$ ensures numerical stability.
Then, the next layer is given by
\begin{equation}\label{eq:layer-norm}
    \activation_{\idxn}^{\idxl}(\str)  \defeq
    \normfactor_{{\idxn}}^\idxl(\str) \cdot
    ({\preLN}_{\idxn}^{\idxl}(\str) - {\meanFun}({\preLN}_{\idxn}^{\idxl}(\str))\boldsymbol{1}.
\end{equation}
Thus, for any string $\str \in \alphabet^\strlen$,
we have $\str \mapsto (\activation_{1}^{\tfnumlayer}(\str), \ldots, \activation_{\strlen+1}^{\tfnumlayer}(\str))$, which constitutes our function.

\subsection{Language Recognition}
We now give a model of computation for the transformer.
Given an $\tfnumlayer$-layer transformer over alphabet $\alphabet$, we define a \defn{transformer recognizer} as follows
\begin{equation}\label{eq:tf-output}
    \transformernetwork(\str) \defeq {\weights}^\top \activation_{\strlen+1}^{\tfnumlayer}(\str) + \bias
\end{equation}
where ${\weights} \in \R^{\hiddDim}$ is a vector and $\bias \in\R$ a scalar.
We call \Cref{eq:tf-output} the \defn{read-out}.
The output $\transformernetwork(\str) \in \R$ can be interpreted as a logit---applying the sigmoid $1/(1 + e^{-z}) \in (0, 1)$ produces the probability of accepting $\str$.
We define acceptance as follows
\begin{equation}\label{eq:acceptance-def}
    {\accept}_{\margin}(\transformernetwork, \str) \defeq\begin{cases}
        \texttt{1} & \text{if }    \transformernetwork(\str) \geq \margin\\
        \texttt{0} & \text{if }    \transformernetwork(\str) \leq - \margin, \\
        \bot & \text{otherwise}
    \end{cases}
\end{equation}
where $\margin>0$ is a predefined margin.
Under this definition, we adopt a model of \emph{margin-based} recognition.
\begin{definition}\label{sec:recognition}
A transformer $\transformernetwork$ recognizes a family of Boolean functions $\{\functionf_\strlen\}_{\strlen \in \N_0}$ with margin $\margin > 0$ if, for all $\strlen \in \N_0$ and all $\str \in \booleanset^\strlen$, ${\accept}_\margin(\transformernetwork(\str)) = \functionf_\strlen(\str)$.
\end{definition}
\Cref{sec:recognition} is motivated by a limitation of standard (margin-less) recognition: a model may classify strings correctly yet with vanishing confidence as input length increases \citep{hahn-2020-theoretical, chiang-cholak-2022-overcoming}.
However, margin-based recognition comes with the consequence that---in principle---some strings may fail to be classified.

\paragraph{Specialization to Boolean Functions.}
With \cref{eq:acceptance-def}, the acceptance function is ${\accept}_{\margin}(\transformernetwork, \cdot) \colon \kleene{\booleanset}\eos \to \booleanset \cup \{\bot\}$.
When ${\accept}_{\margin}(\transformernetwork, \cdot)$ classifies all strings in $\kleene{\booleanset}$ with margin $\margin > 0$, it is itself a Boolean function:
since $\eos$ exclusively appears at the end of input strings and is never flipped, the sensitivity definitions of \cref{subsec:boolean-functions} can be utilized.

\subsection{Parameterization}
A key intellectual move in our exposition will be, following \citet{hahn-2020-theoretical} and \citet{hahn-rofin-2024-sensitive}, to study a transformer recognizer as a function of its parameters.
We establish the notation here.

For the remainder of the paper, we write $\paramspace$ for the set of all parameters and view it as a compact subset $\paramspace \subset \R^\paramDim$ whose boundary $\partial\paramspace$ has Lebesgue measure $0$,\footnote{This is a technical assumption to avoid pathological counterexamples. In practice, any reasonable parameter space $\paramspace$ we encounter has a measure zero boundary.} where $\paramDim$ is the total number of scalar parameters.
The transformer defined in \Cref{sec:transformer} has the following parameters: 
\begin{enumerate*}[label=\bfseries (\roman*)]
\item the per-head key, query, and value matrices $\kMtx^{\idxl\idxh}, \qMtx^{\idxl\idxh}, \vMtx^{\idxl\idxh} \in \R^{\hiddDim \times \hiddDim}$ for each layer $1 \leq \idxl \leq \tfnumlayer$ and head $1 \leq \idxh \leq \tfheadnum$;
\item 
the weights and biases
$\mlpW_1^\idxl, \mlpW_2^\idxl \in \R^{\hiddDim \times \hiddDim}$, $\mlpBias_1^\idxl, \mlpBias_2^\idxl \in \R^{\hiddDim}$
of the layer-specific one-hidden-layer MLPs 
for each layer $1 \leq \idxl \leq \tfnumlayer$
; and
\item the recognizer parameters $\weights \in \R^{\hiddDim}$ and $\bias \in \R$ from the readout in \Cref{eq:tf-output}. 
\end{enumerate*}
Importantly, we \emph{exclude} the symbol representation function and positional encodings from $\param$: they are often fixed 
\cite{vaswani2017attention}, and learnable positional encodings $\posEnc \colon \N \to \R^\hiddDim$ would require infinitely many parameters, incompatible with a finite-dimensional $\paramspace$ of dimension independent of $\strlen$.

We will lift the transformer $\transformernetwork$ to be a function of \emph{both} its parameters and its input string.
Specifically, we write
\begin{equation}
\begin{aligned}
    \transformernetwork \colon \paramspace \times \kleene{\booleanset} &\rightarrow \R \\
    (\param, \str) &\mapsto \transformernetwork(\param, \str).
\end{aligned}
\end{equation}
It will often be useful to view a transformer $\transformernetwork$ as either a function of a string, given a fixed parameter vector, or as a function of its parameter vector, given a fixed string.
In such cases, we write $\transformernetwork(\param, \cdot)$ and $\transformernetwork(\cdot, \str)$, respectively.
The same notation applies to components of the transformer, such as $\preLN_\idxn^\idxl$, $\activation_\idxn^\idxl$, and $\normfactor_\idxn^\idxl$.

\subsection{Blowup}
For our sensitivity analyses to be carried out in \cref{sec:no-layer-norm} and \cref{sec:layer-norm}, it is crucial to quantify the influence (\cref{def:influence}) of flipping a single input bit on the transformer output.
Following \citet{hahn-rofin-2024-sensitive}, the central quantity of our analysis is the per-layer blowup, which measures how much the layer-norm output at layer $\idxl$ can amplify a change in its input. The per-layer blowup at layer $\idxl$ is
\begin{equation}
    \perLayerBlowup^{\idxl}(\param, \str) =\max_{\idxn=1}^{\strlen+1} \left(1+\normfactor_\idxn^{\idxl}(\param, \str) \right).\footnote{We add $1$ for technical reasons \citep[Appendix, Lemma 16]{hahn-rofin-2024-sensitive}.}
\end{equation}
As we did for the transformer itself in \cref{sec:transformer}, we lift the per-layer blowup to be a function of both the parameters and the input string: $\perLayerBlowup^\idxl \colon \paramspace \times \kleene{\booleanset} \to \R$, with $(\param, \str) \mapsto \perLayerBlowup^\idxl(\param, \str)$.
The cumulative blowup through layer $\idxl$ on a string $\str$ is then
\begin{equation}
    \Blowup^\idxl(\param, \str) \defeq \prod_{m=1}^\idxl \perLayerBlowup^{m}(\param, \str).
\end{equation}

The next lemma, adapted from \citet{hahn-rofin-2024-sensitive}, bounds the influence of flipping a single input bit on any position activation, in terms of the blowup.

\begin{restatable}{lemma}{LemmaInfluenceBlowup}\label{lemma:influence-blowup}
Let $\transformernetwork$ be a $\hiddDim$-dimensional transformer with $\tfnumlayer$ layers, as defined in \cref{sec:transformer}. Fix $\eps \geq 0$ in \cref{eq:layer-norm} and $\param \in \paramspace$. For any length $\strlen$, any string $\str \in \booleanset^\strlen$, and any input positions $\idxm, \idxn \in [\strlen+1]$,
\begin{equation}\label{eq:ineq:inf-b}
\influence_\idxn(\activation_\idxm^{\tfnumlayer}(\param, \cdot), \str) 
        \leq  C_\text{infl}   \Blowup^\tfnumlayer(\param, \str) \Blowup^\tfnumlayer(\param, \flip{\str}{\idxn})  
        \left(
        \frac{1}{\strlen} + \delta_{\idxm,\idxn}
        \right),
\end{equation}
where $C_\text{infl}$ is a constant determined by the architecture, $\eps$, and $\paramspace$, and $\delta_{\idxm,\idxn}$ is the Kronecker delta.
\end{restatable}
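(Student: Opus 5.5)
We prove the bound by induction on the layer index $\idxl$, comparing the forward passes on $\str$ and $\strPrime \defeq \flip{\str}{\idxn}$. Write $\Delta^\idxl_\idxm \defeq \|\activation^\idxl_\idxm(\param,\str)-\activation^\idxl_\idxm(\param,\strPrime)\|_2$. The inductive claim is
\begin{equation*}
\Delta^\idxl_\idxm \le C_\idxl\, \Blowup^\idxl(\param,\str)\Blowup^\idxl(\param,\strPrime)\left(\tfrac{1}{\strlen}+\delta_{\idxm,\idxn}\right).
\end{equation*}
At $\idxl=0$ this holds because only position $\idxn$ changes: $\Delta^0_\idxm = \|\symRep(\texttt{0})-\symRep(\texttt{1})\|\,\delta_{\idxm,\idxn}$.

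Two facts drive the induction. First, layer norm makes every activation at layer $\idxl\ge1$ satisfy $\|\activation^\idxl_\idxm\|_2 \le \sqrt{\hiddDim}$, since $\normfactor\,\|\preLN-\meanFun\1\| \le \sqrt{\hiddDim}$ for any $\eps\ge0$. At layer $0$ the activations are bounded by $\max_\sigma\|\symRep(\sigma)\| + \sup_\idxn \|\posEnc(\idxn)\|$. I will assume the positional encodings are bounded, as in \citet{hahn-rofin-2024-sensitive}. Second, $\paramspace$ is compact, so every matrix norm is bounded by a constant depending only on $\paramspace$. Together these bound the attention scores $\attnScore^{\idxl\idxh}_{\idxn\idxm}$ uniformly, which pins every attention weight in $[c/\strlen,\, C/\strlen]$ for constants $c,C>0$. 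This is exactly where the $1/\strlen$ factor comes from.

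\textbf{Attention step.} Write $\headOut=\sum_\idxm \attnWeight_\idxm \vMtx\activation_\idxm$ and subtract the two versions. There are two kinds of difference.
\begin{itemize}
\item The value vectors change. This contributes $\sum_\idxm \attnWeight_\idxm \|\vMtx\|\Delta^{\idxl-1}_\idxm \lesssim \frac1\strlen\sum_\idxm \Delta^{\idxl-1}_\idxm$. By induction the sum is at most $(\strlen+1)\cdot\frac{1}{\strlen}+1$ times the blowup factor, so the whole term is $O(1/\strlen)$ times that factor.
\item The softmax weights change. The Jacobian of softmax with entries of size $O(1/\strlen)$ gives $|\attnWeight_\idxm-\attnWeight'_\idxm| \lesssim \frac1\strlen\left(|\Delta a_\idxm| + \sum_{\idxm'}\attnWeight_{\idxm'}|\Delta a_{\idxm'}|\right)$. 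Here $|\Delta a_{\idxm'}|\lesssim \Delta^{\idxl-1}_{\idxm'}+\Delta^{\idxl-1}_{\idxn_0}$, where $\idxn_0$ is the query position. Summing against bounded value vectors, the $\Delta^{\idxl-1}_{\idxn_0}$ term reproduces the $(\frac1\strlen+\delta_{\idxn_0,\idxn})$ shape, and the remaining terms are $O(1/\strlen)$ times the blowup factor.
\end{itemize}
The MLP is Lipschitz with constant $\|\mlpW_2\|\|\mlpW_1\|$, and the residual connection adds $\Delta^{\idxl-1}_{\idxn_0}$. So $\|\preLN_{\idxn_0}-\preLN'_{\idxn_0}\|$ obeys the same bound, with layer $\idxl-1$ blowups and a larger constant.

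\textbf{Layer-norm step (main obstacle).} Writing $\boldsymbol{u}=\preLN-\meanFun(\preLN)\1$, we have $\activation=\normfactor\boldsymbol{u}$. Split
\begin{equation*}
\normfactor\boldsymbol{u}-\normfactor'\boldsymbol{u}' = \normfactor(\boldsymbol{u}-\boldsymbol{u}') + (\normfactor-\normfactor')\boldsymbol{u}'.
\end{equation*}
The first term is at most $\normfactor\|\preLN-\preLN'\|$. For the second, use $\|\boldsymbol{u}'\| \le \sqrt{\hiddDim}/\normfactor'$ when $\eps=0$, and in general $\|\boldsymbol{u}'\|\normfactor'\le\sqrt\hiddDim$. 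Then
\begin{equation*}
|\normfactor-\normfactor'|\,\|\boldsymbol{u}'\| \le \sqrt{\hiddDim}\,\frac{|\normfactor-\normfactor'|}{\normfactor'} = \sqrt{\hiddDim}\,\normfactor\left|\tfrac{1}{\normfactor}-\tfrac{1}{\normfactor'}\right|.
\end{equation*}
The map $\boldsymbol{u}\mapsto\sqrt{\|\boldsymbol{u}\|^2/\hiddDim+\eps}$ is $1/\sqrt\hiddDim$-Lipschitz, so this is at most $\normfactor\|\preLN-\preLN'\|$ as well. Hence the per-layer amplification is at most $2\normfactor \le 2(1+\normfactor)$. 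Using $\max(\normfactor,\normfactor')\le \perLayerBlowup^\idxl(\str)\perLayerBlowup^\idxl(\strPrime)$, which holds because both factors are at least $1$, this step multiplies the bound by $\perLayerBlowup^\idxl(\param,\str)\perLayerBlowup^\idxl(\param,\strPrime)$. That closes the induction with $C_\idxl=O(C_{\idxl-1})$.

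Setting $C_\text{infl}=C_\tfnumlayer$ gives the statement. The delicate points are two: keeping the bound symmetric in $\str$ and $\strPrime$, which is why the product of both blowups appears, and checking that the softmax-perturbation term never loses the $1/\strlen$ factor for $\idxm\ne\idxn$.
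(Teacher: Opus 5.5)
Your proof is correct and follows the paper's route. It is a layer-by-layer induction in which attention weights of order $1/\strlen$ turn the previous layer's influences into the sum $\sum_w \bigl(\delta_{w,\idxm}+\frac{1}{\strlen}\bigr)\influence_\idxn(\activation_w^{\idxl-1},\str)$, and layer norm contributes a normalizer factor. The only difference is that you derive this per-layer inequality yourself, whereas the paper imports it as Eq.~52 of \citet{hahn-rofin-2024-sensitive}; your layer-norm estimate $\|\activation-\activation'\|\le 2\normfactor\|\preLN-\preLN'\|$ is also slightly sharper than the product $\normfactor\normfactor'$ used there, before you relax it to $\perLayerBlowup^\idxl(\param,\str)\,\perLayerBlowup^\idxl(\param,\flip{\str}{\idxn})$.
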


\section{Layer Norm with $\eps > 0$}\label{sec:no-layer-norm}
Before delving into our measure-theoretic framework, we first analyze the simpler case where $\eps > 0$, which prevents the denominator of $\normfactor_\idxn^\idxl = 1/\sqrt{\varFun(\preLN_\idxn^\idxl) + \eps}$ from vanishing, and thus avoids an arbitrarily large blowup.
We give three elementary results that, together, show that transformer recognizers are shockingly inexpressive when $\eps > 0$.
Consequently, in \cref{sec:layer-norm} we consider the case where $\eps = 0$ to determine how much expressivity this restores.\looseness=-1

\paragraph{Stability with respect to parameters.}
We first show that with $\eps > 0$, a transformer recognizer is Lipschitz continuous in its parameters.
\begin{restatable}{lemma}{tfLipschitzProp}\label{prop:tf-lipschitz}
Let $\transformernetwork$ be a $\hiddDim$-dimensional transformer recognizer with $\tfnumlayer$ layers over $\alphabet$, as defined in \cref{sec:transformer}.
Fix the layer-norm normalizer $\eps > 0$ in \cref{eq:layer-norm}.
Then there exists a constant $C_\text{Lip}$ (depending on the architecture, $\eps$, and $\paramspace$, but not on the input string) such that, for all $\param_1, \param_2 \in \paramspace$ and all $\str \in \kleene{\alphabet}$,
\begin{equation}\label{eq:tf-uniform-lipschitz}
|\transformernetwork(\param_1, \str) - \transformernetwork(\param_2, \str)| \leq C_\text{Lip} \|\param_1 - \param_2\|_2.
\end{equation}
That is, $\transformernetwork$ is \emph{uniformly} Lipschitz in $\param$ across all $\kleene{\alphabet}$.
\end{restatable}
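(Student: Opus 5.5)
We argue by induction over layers, proving simultaneously two uniform statements for every layer $\idxl \in \{0,\dots,\tfnumlayer\}$ and every position $\idxn$:
\begin{enumerate*}[label=\textbf{(\alph*)}]
\item a \emph{boundedness} statement, $\|\activation_\idxn^\idxl(\param,\str)\|_2 \le R_\idxl$; and
\item a \emph{Lipschitz} statement, $\|\activation_\idxn^\idxl(\param_1,\str)-\activation_\idxn^\idxl(\param_2,\str)\|_2 \le C_\idxl\|\param_1-\param_2\|_2$.
\end{enumerate*}
The constants $R_\idxl, C_\idxl$ may depend on the architecture, $\eps$ and $\paramspace$, but not on $\strlen$, $\str$, $\idxn$, or $\param$. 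The readout is then handled directly. We have $|\weights_1^\top\activation_1 - \weights_2^\top\activation_2| \le \|\weights_1-\weights_2\|R_\tfnumlayer + \|\weights_2\|C_\tfnumlayer\|\param_1-\param_2\|$. Also $|\bias_1-\bias_2|\le\|\param_1-\param_2\|$, and $\|\weights_2\|$ is bounded by compactness of $\paramspace$.

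\paragraph{Base case and boundedness.}
At layer $0$, $\activation^0_\idxn$ does not depend on $\param$, since the embeddings and positional encodings are excluded from $\paramspace$, so $C_0=0$.

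Boundedness of $\activation^0_\idxn$ needs $\sup_\idxn\|\posEnc(\idxn)\|<\infty$. I would first try to avoid this assumption. For $\idxl\ge1$, boundedness is automatic: $\activation^\idxl_\idxn = \normfactor(\preLN-\meanFun\boldsymbol 1)$, so $\|\activation^\idxl_\idxn\|_2^2 = \hiddDim\varFun(\preLN)/(\varFun(\preLN)+\eps) < \hiddDim$. Hence $R_\idxl=\sqrt{\hiddDim}$ for all $\idxl\ge1$. This is exactly where layer norm with $\eps>0$ rescues us.

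Layer $0$ still enters layer $1$ through attention scores and the residual. Here I would simply assume, or note as implicit in the setup, that positional encodings are bounded, e.g. sinusoidal. If that is not available, layer $1$ needs a separate argument, which I flag as a gap to check.

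\paragraph{Inductive step.}
Assume (a) and (b) hold at layer $\idxl-1$. We propagate through the layer in order.

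\emph{Attention scores.} The scores $\attnScore_{\idxn\idxm}$ are bilinear in bounded activations and bounded matrices. Hence they are bounded and Lipschitz in $\param$ with a constant depending only on $R_{\idxl-1}$, $C_{\idxl-1}$, and $\diam\paramspace$.

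\emph{Attention weights.} Softmax is $1$-Lipschitz from $\ell_\infty$ to $\ell_1$. This gives $\sum_\idxm|\attnWeight_{\idxn\idxm}(\param_1)-\attnWeight_{\idxn\idxm}(\param_2)| \le 2\max_\idxm|\Delta\attnScore_{\idxn\idxm}|$, which is independent of $\strlen$. This is the crucial point making the constant length-free.

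\emph{Head outputs.} Write $\headOut(\param_1)-\headOut(\param_2) = \sum_\idxm(\Delta\attnWeight_{\idxn\idxm})\vMtx_1\activation_\idxm + \sum_\idxm \attnWeight_{\idxn\idxm}(\param_2)\,\Delta(\vMtx\activation_\idxm)$. The first sum is controlled by the $\ell_1$ bound just obtained times $\max_\idxm\|\vMtx_1\activation_\idxm\|$. The second is a convex combination of uniformly bounded differences. Both bounds are uniform in $\strlen$.

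\emph{MLP and residual.} The MLP is Lipschitz in both its input and its parameters on bounded sets. Adding the residual then gives bounded, uniformly Lipschitz $\preLN^\idxl_\idxn$.

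\paragraph{Layer norm (main obstacle).}
The last step, and the one I expect to need the most care, is that the layer-norm map $g(\vv)=(\vv-\meanFun(\vv)\boldsymbol1)/\sqrt{\varFun(\vv)+\eps}$ is globally Lipschitz.

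Write $g(\vv)=P\vv/\sqrt{\|P\vv\|^2/\hiddDim+\eps}$ with $P$ the centering projection. The map $u\mapsto u/\sqrt{\|u\|^2/\hiddDim+\eps}$ is smooth. Its Jacobian is $(\varFun+\eps)^{-1/2}(I - uu^\top/(\hiddDim(\varFun+\eps)))$, whose operator norm is at most $\eps^{-1/2}$. Thus $g$ is $\eps^{-1/2}$-Lipschitz on all of $\R^\hiddDim$.

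Composing gives $C_\idxl \le \eps^{-1/2}\cdot(\text{Lipschitz constant of }\preLN^\idxl)$, which closes the induction. With $\eps=0$ this bound fails, which explains why the paper separates the two cases.
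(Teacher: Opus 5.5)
Your proof is correct and follows the same skeleton as the paper's: a layer-by-layer composition that carries a uniform bound on the activations alongside a uniform parameter-Lipschitz bound. Two steps are handled differently, and in both your version is the sharper one.

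At the softmax, the paper quotes an $\ell_2$-Lipschitz constant. By itself that does not give a length-free bound: $\|\Delta\attnScore_{\idxn\cdot}\|_2$ can be as large as $\sqrt{\strlen+1}\,\max_\idxm|\Delta\attnScore_{\idxn\idxm}|$, and turning $\|\Delta\attnWeight_{\idxn\cdot}\|_2$ into a bound on $\Delta\headOut_\idxn$ costs another $\sqrt{\strlen+1}$. Your $\ell_\infty\to\ell_1$ estimate, together with splitting $\Delta\headOut_\idxn$ into a $\Delta\attnWeight$ term and a convex combination of value differences, is exactly what makes the constant independent of $\strlen$.

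At layer norm, the paper combines $\normfactor\le 1/\sqrt{\eps}$ with a product-rule bound, which needs $\preLN$ to be bounded. Your Jacobian computation instead gives a global $\eps^{-1/2}$-Lipschitz bound and $\|\activation^\idxl_\idxn\|_2<\sqrt{\hiddDim}$ for free. Both work; yours needs less bookkeeping.

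The bounded positional encodings you flag are not a gap you could close with a separate layer-1 argument. The paper's proof uses the same assumption tacitly: its $\Gamma_0$ is asserted to be uniform in $\str$. Moreover, the uniform statement fails without it. For instance, take $\symRep\equiv\boldsymbol{0}$ and $\posEnc(\idxn)=\idxn\boldsymbol{u}$. Then the layer-1 scores are $\idxn\idxm\,(\kMtx\boldsymbol{u})^\top\qMtx\boldsymbol{u}$. One can choose value and MLP weights so that the logit depends, with nonvanishing slope through layer norm, on the attention-weighted mean position. The derivative of that mean position in $(\kMtx,\qMtx)$ grows polynomially in $\strlen$. So treat $\sup_\idxn\|\posEnc(\idxn)\|_2<\infty$ as a standing hypothesis, as the paper implicitly does.
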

See \cref{app:tf-lipschitz} for a full proof.
This bound has a strong consequence: the entire map from parameters to logits is Lipschitz in $\param$, with a constant that does not depend on the input string $\str$ or its length $\strlen$.
In particular, two parameter vectors that are close in $\paramspace$ yield transformers whose outputs are close on \emph{every} string, no matter how long.
This input-uniform stability is what enables the packing argument of \cref{prop:finite-class} below to count realizable functions purely in terms of the geometry of $\paramspace$, with the input length entering nowhere.

\paragraph{Stability with respect to inputs.}

We now turn to analyzing the influence of a single input bit on the transformer output.
First, we show that $\eps > 0$ ensures that the cumulative blowup is always bounded as follows.

\begin{lemma}\label{lemma:blowup-eps-bound}
Let $\transformernetwork$ be a $\hiddDim$-dimensional transformer with $\tfnumlayer$ layers, as defined in \cref{sec:transformer}. If $\eps > 0$ in \cref{eq:layer-norm}, then
\begin{equation}
\Blowup^\tfnumlayer(\param, \str) \leq  \left(1 + \frac{1}{\sqrt{\eps}}\right)^\tfnumlayer   
\end{equation}  
for all $\param \in \paramspace$, $\strlen \in \N_0$, and $\str \in \booleanset^\strlen$.
\end{lemma}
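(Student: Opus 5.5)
The bound follows directly from the definitions of the normalizer, the per-layer blowup, and the cumulative blowup; no earlier lemma is needed. I will first bound each normalizer uniformly, then each per-layer blowup, and finally take the product over layers.

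\emph{Step 1: bound the normalizer.} Fix $\param \in \paramspace$, $\strlen \in \N_0$, $\str \in \booleanset^\strlen$, a layer $\idxl \in [\tfnumlayer]$ and a position $\idxn \in [\strlen+1]$. By \cref{eq:layer-norm-variance}, $\varFun(\preLN_\idxn^\idxl(\param,\str))$ is $\frac{1}{\hiddDim}$ times a squared Euclidean norm, so it is nonnegative. Hence $\varFun(\preLN_\idxn^\idxl(\param,\str)) + \eps \geq \eps > 0$. Since $t \mapsto 1/\sqrt{t}$ is decreasing on $(0,\infty)$, this gives $\normfactor_\idxn^\idxl(\param,\str) \leq 1/\sqrt{\eps}$.

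\emph{Step 2: bound each per-layer blowup.} The estimate in Step 1 does not depend on $\idxn$, so it survives the maximum over the finitely many positions $\idxn \in [\strlen+1]$. Therefore $\perLayerBlowup^\idxl(\param,\str) = \max_{\idxn}(1+\normfactor_\idxn^\idxl(\param,\str)) \leq 1 + 1/\sqrt{\eps}$.

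\emph{Step 3: take the product.} Every factor $\perLayerBlowup^{m}(\param,\str)$ is positive, because each normalizer is positive. So we may multiply the $\tfnumlayer$ inequalities from Step 2, which gives $\Blowup^\tfnumlayer(\param,\str) = \prod_{m=1}^{\tfnumlayer} \perLayerBlowup^m(\param,\str) \leq (1+1/\sqrt{\eps})^\tfnumlayer$.

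The argument uses only the nonnegativity of the empirical variance; the compactness of $\paramspace$ and the input length play no role. There is no real obstacle, but one point deserves care: the bound must be uniform over all $\param$, $\strlen$ and $\str$. It is, because the estimate in Step 1 involves only $\eps$.
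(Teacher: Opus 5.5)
Your proof is correct and follows the same route as the paper's: bound each normalizer by $1/\sqrt{\epsilon}$ using nonnegativity of the variance, carry this through the maximum over positions to bound each per-layer blowup by $1+1/\sqrt{\epsilon}$, and multiply over the $L$ layers. Your note that the factors are positive, which justifies multiplying the inequalities, is a small detail the paper leaves implicit.
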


\begin{proof}
    $\eps > 0$ gives $\normfactor_\idxn^\idxl = 1 / \sqrt{\varFun(\preLN_\idxn^\idxl) + \eps} \leq 1/ \sqrt{\eps}$ for any layer $\idxl \in [\tfnumlayer]$ and position $\idxn \in [\strlen +1]$. Hence 
    \begin{equation}
    \perLayerBlowup^\idxl(\param, \str) =
    \max_{\idxn=1}^{\strlen+1} \left(1+\normfactor_\idxn^{\idxl}(\param, \str) \right)
    \leq 1 + 1/\sqrt{\eps}
    \end{equation} for all $\idxl \in [\tfnumlayer]$, and
    \begin{equation}
        \Blowup^\tfnumlayer(\param, \str) = \prod_{\idxl=1}^\tfnumlayer \perLayerBlowup^{\idxl}(\param, \str)
        \leq \left(1 + \frac{1}{\sqrt{\eps}}\right)^\tfnumlayer.\qedhere
    \end{equation}
\end{proof}

Recall that \cref{lemma:influence-blowup} provides a way to upper-bound the influence of flipping the $\idxn^\text{th}$ bit in terms of the blowups $\Blowup^\tfnumlayer(\param, \str)$ and $\Blowup^\tfnumlayer(\param, \flip{\str}{\idxn})$.
Thus for $\eps > 0$, it yields a $\bigO{1/\strlen}$-bound on the change in the $\eos$-position activation: since we never flip the \eos position, the $\bigO{1}$ term vanishes, while \cref{lemma:blowup-eps-bound} ensures that the blowup is bounded purely in terms of $\tfnumlayer$ and $\eps$, independently of $\strlen$. This is formalized in the following corollary.

\begin{corollary}\label{cor:influence-blowup}
Let $\transformernetwork$ be a $\hiddDim$-dimensional transformer with $\tfnumlayer$ layers, as defined in \cref{sec:transformer}. Fix $\eps > 0$ in \cref{eq:layer-norm} and $\param \in \paramspace$.
For any length $\strlen$, any string $\str \in \booleanset^\strlen$, and any input position $\idxn \in [\strlen]$ (so the flipped bit is not the \eos token at position $\strlen + 1$),
\begin{equation}\label{eq:influence-blowup-bound}
\influence_\idxn(\activation_{\strlen +1}^{\tfnumlayer}(\param, \cdot), \str)
= \bigO{1/\strlen},
\end{equation}
with constant depending on the architecture, $\eps$, and $\paramspace$,
but not on $\strlen$ or $\str$.
\end{corollary}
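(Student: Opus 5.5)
The plan is to specialize \cref{lemma:influence-blowup} to the $\eos$ position and then remove the blowup factors using \cref{lemma:blowup-eps-bound}. We apply \cref{lemma:influence-blowup} with $\idxm = \strlen+1$ and with the flipped position $\idxn \in [\strlen]$. Since $\idxn \neq \strlen+1$, the Kronecker delta $\delta_{\strlen+1,\idxn}$ vanishes, so the bracket collapses to $1/\strlen$. This gives
\begin{equation*}
\influence_\idxn(\activation_{\strlen+1}^{\tfnumlayer}(\param, \cdot), \str) \leq C_\text{infl}\, \Blowup^\tfnumlayer(\param, \str)\, \Blowup^\tfnumlayer(\param, \flip{\str}{\idxn})\, \frac{1}{\strlen}.
\end{equation*}

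Next, we bound both blowup factors uniformly. \Cref{lemma:blowup-eps-bound} holds for every string of every length, so it applies both to $\str$ and to its neighbor $\flip{\str}{\idxn} \in \booleanset^\strlen$. Each factor is therefore at most $(1 + 1/\sqrt{\eps})^\tfnumlayer$. Substituting, we obtain
\begin{equation*}
\influence_\idxn(\activation_{\strlen+1}^{\tfnumlayer}(\param, \cdot), \str) \leq C_\text{infl} \left(1+\frac{1}{\sqrt{\eps}}\right)^{2\tfnumlayer} \frac{1}{\strlen}.
\end{equation*}

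It remains to check that the constant has the claimed dependence. $C_\text{infl}$ depends only on the architecture, $\eps$, and $\paramspace$, and the factor $(1+1/\sqrt{\eps})^{2\tfnumlayer}$ depends only on $\eps$ and $\tfnumlayer$. Neither depends on $\strlen$, $\str$, or the particular $\param \in \paramspace$, which is exactly the $\bigO{1/\strlen}$ statement.

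There is no real obstacle here. The only points needing care are, first, noting that the $\delta_{\idxm,\idxn}$ term vanishes precisely because the $\eos$ position is never flipped, and second, that \cref{lemma:blowup-eps-bound} must be invoked on the flipped string as well as on $\str$. The case $\strlen = 0$ is vacuous, since then $[\strlen]$ is empty and there is no position to flip.
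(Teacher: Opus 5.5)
Your proposal is correct and follows essentially the same argument the paper gives in the text before the corollary. You specialize \cref{lemma:influence-blowup} to the \eos position $\idxm = \strlen+1$, so the Kronecker-delta term vanishes because \eos is never flipped, and you bound both $\Blowup^\tfnumlayer(\param,\str)$ and $\Blowup^\tfnumlayer(\param,\flip{\str}{\idxn})$ by $(1+1/\sqrt{\eps})^\tfnumlayer$ via \cref{lemma:blowup-eps-bound}, which yields a constant independent of $\strlen$ and $\str$ times $1/\strlen$.
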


An immediate consequence of \cref{cor:influence-blowup} is that any function with even a single bit that flips its output on some string cannot be recognized at margin $\margin > 0$ for sufficiently large $\strlen$.
That is, a transformer recognizer with $\eps > 0$ is unable to encode any non-constant Boolean function asymptotically.
We make this idea formal in the following corollary.

\begin{restatable}{corollary}{tfLipschitzUnlearnable}\label{cor:tf-lipschitz-unlearnable}
Let $\transformernetwork$ be a $\hiddDim$-dimensional transformer with $\tfnumlayer$ layers, as defined in \cref{sec:transformer}, with $\eps > 0$ in \cref{eq:layer-norm}, and let $\{\functionf_\strlen\}_{\strlen \in \N_0}$ be a family of Boolean functions.
Then for every $\margin > 0$ and $\param \in \paramspace$, $\transformernetwork(\param, \cdot)$ fails to recognize $\functionf_\strlen$ at margin $\margin$ for all sufficiently large $\strlen$ such that $\maxsensitivityfunc{\functionf_\strlen}{\strlen} \geq 1$.
\end{restatable}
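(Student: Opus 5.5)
The plan is to combine \cref{cor:influence-blowup} with the margin requirement in a direct contradiction argument.

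Fix $\margin > 0$ and $\param \in \paramspace$. By \cref{cor:influence-blowup}, there is a constant $C$, depending only on the architecture, $\eps$, and $\paramspace$, such that for every $\strlen$, every $\str \in \booleanset^\strlen$, and every $\idxn \in [\strlen]$,
\begin{equation}
\left\|\activation_{\strlen+1}^{\tfnumlayer}(\param, \str) - \activation_{\strlen+1}^{\tfnumlayer}(\param, \flip{\str}{\idxn})\right\|_2 \leq \frac{C}{\strlen}.
\end{equation}
The readout in \cref{eq:tf-output} is affine in $\activation_{\strlen+1}^{\tfnumlayer}$. Cauchy--Schwarz therefore gives
\begin{equation}
\left|\transformernetwork(\param, \str) - \transformernetwork(\param, \flip{\str}{\idxn})\right| \leq \frac{\|\weights\|_2\, C}{\strlen} \leq \frac{R\, C}{\strlen}.
\end{equation}
Here $R \defeq \max_{\param \in \paramspace}\|\weights\|_2 < \infty$ is finite because $\paramspace$ is compact. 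We could instead use the fixed $\|\weights\|_2$ for the given $\param$, which also suffices.

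Now take any $\strlen$ with $\strlen > 2RC/\margin$ and $\maxsensitivityfunc{\functionf_\strlen}{\strlen} \geq 1$. Suppose, for contradiction, that $\transformernetwork(\param, \cdot)$ recognizes $\functionf_\strlen$ with margin $\margin$. Since the maximum sensitivity is at least $1$, there exist $\str$ and $\idxn \in [\strlen]$ with $\functionf_\strlen(\str) \neq \functionf_\strlen(\flip{\str}{\idxn})$. Without loss of generality, $\functionf_\strlen(\str) = \texttt{1}$ and $\functionf_\strlen(\flip{\str}{\idxn}) = \texttt{0}$. By \cref{eq:acceptance-def}, recognition forces $\transformernetwork(\param,\str) \geq \margin$ and $\transformernetwork(\param,\flip{\str}{\idxn}) \leq -\margin$. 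The two logits thus differ by at least $2\margin > 2RC/\strlen \geq RC/\strlen$, which contradicts the bound above.

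There is no real obstacle; the threshold $2RC/\margin$ is independent of $\str$, which is exactly what ``for all sufficiently large $\strlen$'' requires. The only points to check are that the flipped position lies in $[\strlen]$, so the Kronecker-delta term of \cref{lemma:influence-blowup} vanishes (already built into \cref{cor:influence-blowup}), and that the constant in \cref{cor:influence-blowup} is uniform in $\strlen$ and $\str$.
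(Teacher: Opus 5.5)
Your proposal is correct and follows essentially the same argument as the paper. Both take a string and bit witnessing $\maxsensitivityfunc{\functionf_\strlen}{\strlen} \geq 1$, note that margin recognition forces a logit gap of at least $2\margin$, and contradict this for large $\strlen$ using Cauchy--Schwarz, the boundedness of $\|\weights\|_2$ on $\paramspace$, and the $\bigO{1/\strlen}$ bound of \cref{cor:influence-blowup}. Your threshold $2RC/\margin$ is simply more conservative than necessary, which is harmless.
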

See \cref{app:tf-lipschitz-unlearnable} for a proof.
The condition $\maxsensitivityfunc{\functionf_\strlen}{\strlen} \geq 1$ is exactly the statement that $\functionf_\strlen$ is non-constant, as every non-constant Boolean function has at least one string and at least one bit whose flip changes the output.
Thus, for large enough $\strlen$, a transformer recognizer is a \emph{constant function}.

Define the set of functions realizable at length $\strlen$ by a transformer recognizer as follows
\begin{equation}
\begin{multlined}
\functionClass_\strlen \defeq \Big\{\, \functionf \colon \booleanset^\strlen \to \booleanset \;\mid\; \exists\, \param \in \paramspace : \\
\bigl(2\functionf(\str) - 1\bigr)\, \transformernetwork(\param, \str) \geq \margin, \forall \str \in \booleanset^\strlen \,\Big\}.
\end{multlined}
\end{equation}
\Cref{cor:tf-lipschitz-unlearnable} says that $|\functionClass_\strlen| \rightarrow 2$ as $\strlen \rightarrow \infty$. 
However, we are also interested in non-asymptotic results.
We show that a transformer recognizer with $\eps > 0$ can only recognize a small number of functions for all $\strlen$.
Before we state our finding, recall the notion of \defn{diameter} of a compact set. For a non-empty, compact subset $\paramspace \subset \R^\paramDim$, the diameter is
\begin{equation}
\diam(\paramspace) \defeq \max_{\param_1, \param_2 \in \paramspace} \|\param_1 - \param_2\|_2,
\end{equation} where the maximum is attained because $\|\param_1 - \param_2\|_2$ is a continuous function on the compact product $\paramspace \times \paramspace$.
 
\begin{restatable}{proposition}{finiteClassProp}\label{prop:finite-class}
Let $C_\text{Lip}$ be the uniform Lipschitz constant of \cref{prop:tf-lipschitz}. With recognition margin $\margin > 0$, let $\functionClass_\strlen$ denote the Boolean functions on $\booleanset^\strlen$ recognizable in the margin-$\margin$ sense. Then for every $\strlen \in \N_0$,
\begin{equation}\label{eq:counting-bound}
|\functionClass_\strlen| \leq \left(1 + \frac{C_\text{Lip}\, \paramDiam}{\margin}\right)^{\paramDim}.
\end{equation}
\end{restatable}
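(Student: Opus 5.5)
The plan is a covering argument in parameter space, with \cref{prop:tf-lipschitz} supplying the input-uniform stability. Set $r \defeq \margin / C_\text{Lip}$. The key observation is this: if $\|\param_1 - \param_2\|_2 < r$, then for every $\str \in \booleanset^\strlen$ we have $|\transformernetwork(\param_1,\str) - \transformernetwork(\param_2,\str)| < \margin$. So $\param_1$ and $\param_2$ cannot realize two different functions in $\functionClass_\strlen$. Indeed, if $\functionf \neq \functiong$ were realized by $\param_1$ and $\param_2$ respectively, pick $\str$ with $\functionf(\str) \neq \functiong(\str)$. Then one logit is $\geq \margin$ and the other is $\leq -\margin$, so they differ by at least $2\margin > \margin$, a contradiction.

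Next we choose one witness parameter $\param_\functionf \in \paramspace$ for each $\functionf \in \functionClass_\strlen$. By the observation, these witnesses are pairwise at distance at least $2\margin/C_\text{Lip} \geq r$; the sharper factor $2$ is actually available and gives slack. Hence the balls $B(\param_\functionf, r/2)$ are pairwise disjoint. Since each $\param_\functionf \in \paramspace$, all of these balls lie inside $B(\param_0, \paramDiam + r/2)$ for any fixed $\param_0 \in \paramspace$.

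Comparing $\lambda_\paramDim$-volumes then gives
\[
|\functionClass_\strlen| \,(r/2)^\paramDim \leq (\paramDiam + r/2)^\paramDim,
\]
that is,
\[
|\functionClass_\strlen| \leq \Bigl(1 + \frac{2\,\paramDiam}{r}\Bigr)^\paramDim.
\]
To land exactly on the stated constant, we exploit the factor $2$ noted above: using the separation $2\margin/C_\text{Lip}$, the disjoint balls have radius $\margin/C_\text{Lip}$. This yields $\bigl(1 + C_\text{Lip}\,\paramDiam/\margin\bigr)^\paramDim$, which is \eqref{eq:counting-bound}. Nothing in this argument depends on $\strlen$, because the Lipschitz constant is uniform over all inputs.

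The only delicate point is the constant bookkeeping: getting the separation $2\margin$ rather than $\margin$, and using the enclosing ball of radius $\paramDiam$ plus the packing radius. If the constant came out off by a factor of $2$, I would instead enclose $\paramspace$ in a ball centered at a point of $\paramspace$, or use Jung-type bounds, but the $2\margin$ separation already suffices. Degenerate cases are trivial. If $\functionClass_\strlen$ is empty the bound holds trivially, and if $C_\text{Lip} = 0$ every parameter gives the same logits, so $|\functionClass_\strlen| \leq 1$.
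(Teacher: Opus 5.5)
Your proposal is correct and is essentially the paper's proof: the uniform Lipschitz bound forces witnesses of distinct functions to be at distance at least $2\margin/C_\text{Lip}$, so open balls of radius $\margin/C_\text{Lip}$ around them are pairwise disjoint and lie inside a ball of radius $\paramDiam + \margin/C_\text{Lip}$ centered at a point of $\paramspace$, and comparing volumes gives the bound. The intermediate pass with radius $r/2$ and the mention of Jung-type bounds are unnecessary; go directly to radius $\margin/C_\text{Lip}$, taking the balls open so that disjointness is strict.
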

See \cref{app:finite-class} for a full proof. Note that $|\functionClass_\strlen|$ is bounded by a constant independent of $\strlen$, while there are $2^{2^\strlen}$ Boolean functions $\booleanset^\strlen \rightarrow \booleanset$.

\section{Layer Norm with $\eps = 0$}\label{sec:layer-norm}
In \cref{sec:no-layer-norm}, we analyzed the expressivity of a transformer recognizer when $\eps > 0$.
We now investigate whether we can express a richer class of functions with $\eps = 0$.
Interestingly, we find that while the analysis is considerably more complicated, the finding remains the same---even with $\eps = 0$, transformer recognizers are strikingly inexpressive.

For $\eps=0$, then the blowup may in theory be arbitrarily large---no upper bound analogous to \cref{lemma:blowup-eps-bound} holds deterministically. However, we find that, for any fixed string $\str \in \booleanset^\strlen$, the blowup is still bounded \emph{with high probability}
over the choice of a random transformer
not only on $\str$, but on a specific set of Hamming neighbors of $\str$.

\begin{definition}
    For a string $\str \in \booleanset^\strlen$ and an index set $S \subseteq [\strlen]$, the \defn{Hamming neighborhood} of $\str$ with respect to $S$ is
    \begin{equation}\label{eq:hamming-nbhd}
        \hammingNbhd_S(\str) \defeq \{\str\} \cup \{\flip{\str}{\idxn} : \idxn \in S\}.
    \end{equation}
\end{definition}
We have already seen, through \cref{lemma:influence-blowup}, that bounded blowup ensures the change in the output remains small. If the blowup is bounded on a Hamming neighborhood $\hammingNbhd_S(\str)$ of size $|S| = k$, then margin-based recognition implies that flipping any single bit in $S$ cannot change the output for large $\strlen$. This means that only flipping one of the other $\strlen-k$ bits can possibly change acceptance, so $\sensitivityfunc{\str}{\transformernetwork(\param, \cdot)} \leq \strlen - k$.

Finally, we aggregate over strings and input lengths to show that almost every transformer---under the uniform distribution on $\paramspace$---has sensitivity-zero strings of length $\strlen$ for all sufficiently large $\strlen$.
Our results also hold when the parameters $\param$ are drawn from certain Gaussian distributions (see \cref{app:proofs}).\looseness=-1

Throughout this section, constants in $\mathcal{O}$-notation may depend on the architecture and on $\paramspace$, but not on the input $\str$ or its length $\strlen$.
Recall that $B_\infty^\paramDim(\rho)=\{\boldsymbol{z}\in\R^{\paramDim}\colon \|\boldsymbol{z}\|_\infty \leq \rho \}$ denotes the $\ell_\infty$-ball (hypercube) of radius $\rho$.

\paragraph{The blowup is bounded with high probability.}
We proceed through an intermediate step when establishing the high-probability bound, motivated by Inequality (69) of \citet{hahn-rofin-2024-sensitive}. Namely, we take a fixed transformer $\transformernetwork(\param, \cdot)$ and an input string $\str$, and apply a small perturbation $\paramPert$ to the parameters of $\transformernetwork(\param, \cdot)$.
We prove that, with high probability over $\paramPert$, the blowup remains bounded on a Hamming neighborhood of $\str$ with respect to some $S$.\footnote{The probability bound depends on the size $k$ of $S$.}

\begin{restatable}{lemma}{XDeltaBlowupLemma}
\label{lemma:x-delta-blowup}
Let $\transformernetwork$ be a $\hiddDim$-dimensional transformer with $\tfnumlayer$ layers, as defined in \cref{sec:transformer}, and fix a string $\str \in \booleanset^\strlen$.
Fix $\zeta \in \left(2\tfnumlayer/(\hiddDim-1),\,1\right)$.\footnote{$\zeta > 2\tfnumlayer/(\hiddDim-1)$ ensures that the probability bound tends to $1$ as $\strlen \to \infty$. Ultimately, we will choose $\zeta$ to be a number very close to $1$.}
    Fix $\rho > 0$, and let $\paramPert \sim \uniform(B_\infty^\paramDim(\rho))$ be a perturbation. 
    Then,  
    with probability at least 
    \begin{equation}
    1- (k+1)
    \left(\frac{1}{\rho}\right)^{\hiddDim - 1}
    \bigO{\strlen^{1 - (\hiddDim-1)\zeta/(2\tfnumlayer)}},
    \end{equation}
    there exists $S \subseteq [\strlen]$ with $|S| = k$ such that
    \begin{equation}\label{eq:x-delta-per-layer-blowup}
   \perLayerBlowup^{\idxl}(\param+\paramPert, \str') \leq 1+\strlen^{\zeta/(2\tfnumlayer)}
    \end{equation}
for all $\str' \in \hammingNbhd_S(\str)$ and all layers $\idxl \in [\tfnumlayer]$.

Under these assumptions,
\begin{equation}\label{eq:x-delta-blowup}
\Blowup^\tfnumlayer(\param+\paramPert, \str') = \bigO{\strlen^{\zeta/2}}
\end{equation} for all $\str' \in \hammingNbhd_S(\str)$.
\end{restatable}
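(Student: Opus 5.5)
The plan follows the anti-concentration argument behind Inequality (69) of \citet{hahn-rofin-2024-sensitive}. The per-layer blowup can only be large when some normalizer $\normfactor_\idxn^\idxl$ is large. With $\eps = 0$, this means $\varFun(\preLN_\idxn^\idxl(\param+\paramPert,\str'))$ is small, namely at most $t \defeq \strlen^{-\zeta/\tfnumlayer}$, so that $\normfactor_\idxn^\idxl \le \strlen^{\zeta/(2\tfnumlayer)}$ fails. Thus the task is to show that, for a fixed string and position, the pre-layer-norm vector $\preLN_\idxn^\idxl$ lies in the thin tube $\{\vv : \|\vv - \meanFun(\vv)\1\|_2^2 \le \hiddDim t\}$ with small probability over $\paramPert$. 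Since this tube around the line $\R\1$ has codimension $\hiddDim-1$, we expect a probability of order $(\sqrt{t}/\rho)^{\hiddDim-1} = \rho^{-(\hiddDim-1)}\strlen^{-(\hiddDim-1)\zeta/(2\tfnumlayer)}$.

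\textbf{Step 1 (single string, single position, single layer).} Condition on every coordinate of $\paramPert$ except the output bias $\mlpBias_2^\idxl$ of the layer-$\idxl$ MLP. Everything feeding into $\preLN_\idxn^\idxl$ except this bias is then fixed: $\activation^{\idxl-1}$ depends only on earlier layers, and the attention and $\mlpW$'s are conditioned. Hence $\preLN_\idxn^\idxl = \va + \mlpBias_2^\idxl + \paramPert_{\mlpBias_2^\idxl}$ for a fixed vector $\va$, where $\paramPert_{\mlpBias_2^\idxl}$ is uniform on a $\hiddDim$-dimensional cube of side $2\rho$. Projecting onto $\1^\perp$, the event becomes that a $(\hiddDim-1)$-dimensional marginal of a uniform cube vector lies in a Euclidean ball of radius $\sqrt{\hiddDim t}$. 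The density of this projection is bounded by $C\rho^{-(\hiddDim-1)}$, because the cube has $\hiddDim$ free coordinates and any $\hiddDim-1$ of them, in a suitable basis, are independent uniforms. This gives a probability of at most $C\rho^{-(\hiddDim-1)}t^{(\hiddDim-1)/2}$.

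\textbf{Step 2 (union bound).} We take a union over the $\strlen+1$ positions and the $\tfnumlayer$ layers; the position count produces the factor $\strlen$ in $\strlen^{1-(\hiddDim-1)\zeta/(2\tfnumlayer)}$. We also take a union over the strings in the neighborhood. The natural way to obtain the factor $(k+1)$ is to fix a candidate set $S$ (for instance any fixed $S$ of size $k$, or the first $k$ indices) and bound over the $k+1$ strings in $\hammingNbhd_S(\str)$. We read the existential in the statement as witnessed by this fixed $S$, which makes the claim only stronger. On the complement of the bad event, every $\normfactor_\idxn^\idxl \le \strlen^{\zeta/(2\tfnumlayer)}$, which gives \eqref{eq:x-delta-per-layer-blowup}.

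\textbf{Step 3 (cumulative blowup).} Multiplying the per-layer bounds over the $\tfnumlayer$ layers gives $\Blowup^\tfnumlayer \le (1+\strlen^{\zeta/(2\tfnumlayer)})^\tfnumlayer \le 2^\tfnumlayer \strlen^{\zeta/2} = \bigO{\strlen^{\zeta/2}}$, which is \eqref{eq:x-delta-blowup}.

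\textbf{Main obstacle.} The main difficulty is Step 1, for two reasons. First, the $\hiddDim$-dimensional density bound for the projection onto $\1^\perp$ must be made uniform in $\va$. Second, the result must hold for $\param+\paramPert$, which may leave $\paramspace$. We would handle the latter by letting the constants depend only on $\rho$ and $\diam$ of $\paramspace$, since the argument only uses the additive structure of $\mlpBias_2^\idxl$ and not the boundedness of the other parameters. If the residual-plus-bias structure fails for some architectural variant, the fallback is to use the readout-independent bias $\mlpBias_1^\idxl$ in the same way, together with a Jacobian argument.
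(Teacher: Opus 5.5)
Your proposal is correct and follows essentially the same route as the paper. In both, you condition on all perturbation coordinates other than $\paramPert(\mlpBias_2^\idxl)$, use that this bias enters $\preLN_\idxn^\idxl$ additively so the low-variance event is a thin cylinder around $\R\1$ of codimension $\hiddDim-1$ with probability $\bigO{(\strlen^{-\zeta/(2\tfnumlayer)}/\rho)^{\hiddDim-1}}$, union-bound over the $\strlen+1$ positions, the $k+1$ strings of $\hammingNbhd_S(\str)$ for a fixed $S$, and the $\tfnumlayer$ layers, and then multiply the per-layer bounds. The only difference is cosmetic: you bound the density of the projected cube vector (rigorously, condition on one coordinate; the projection is then an affine bijection of the remaining $\hiddDim-1$ independent uniforms onto $\1^\perp$), while the paper computes the same volume by Cavalieri slicing along $\1$ in \cref{lemma:sd-small-prob}.
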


In order to prove \cref{lemma:x-delta-blowup}, we first examine how perturbing the last bias of a single layer affects the blowup at that layer, treating all previous perturbations as fixed. 
We capture probability through volume in parameter space: $\perLayerBlowup^\idxl$ is bounded with high probability because the \emph{danger zones}---parameter regions where the variance $\varFun(\preLN_{\idxn}^{\idxl})$ is small enough to trigger a large blowup---occupy only a small volume in the parameter space. A union bound over layers $\idxl \in [\tfnumlayer]$ then gives the bound on the full blowup $\Blowup^\tfnumlayer$. 
See \cref{app:proofs} for a full proof.\looseness=-1

\paragraph{The output is stable with high probability.}
For parameter settings where \cref{eq:x-delta-blowup} holds, \cref{lemma:influence-blowup} allows us to bound the influence of flipping the $\idxn^\text{th}$ bit of $\str$ for $\idxn \in S$. This is formalized in the following corollary, which is a probabilistic analogue of \cref{cor:influence-blowup} for a fixed string $\str \in \booleanset^\strlen$.

\begin{restatable}{corollary}{XDeltaOutputCorollary}\label{cor:x-delta-output}
    Let $\transformernetwork$ be a $\hiddDim$-dimensional transformer with $\tfnumlayer$ layers, as defined in \cref{sec:transformer}, and let $\str \in \booleanset^\strlen$. Fix $\zeta \in \left(2\tfnumlayer/(\hiddDim-1),\,1\right)$. Fix $\rho > 0$, and let $\paramPert \sim \uniform(B_\infty^\paramDim(\rho))$ 
    be a perturbation of the transformer parameters.
    Then, with probability at least
    \begin{equation}\label{eq:x-delta-blowup-prob}
     1-\mleft(k+1\mright)
     \left(\frac{1}{\rho}\right)^{\hiddDim - 1}
     \bigO{\strlen^{1 - (\hiddDim-1)\zeta/(2\tfnumlayer)}},  
    \end{equation}
    there exists $S \subseteq [\strlen]$ with $|S| = k$ such that for all $\idxn \in S$,
    \begin{equation}\label{eq:x-delta-output} \left|\transformernetwork(\param+\paramPert, \str)-\transformernetwork(\param+\paramPert, \flip{\str}{\idxn})\right| = \bigO{\strlen^{\zeta - 1}}.
    \end{equation}
\end{restatable}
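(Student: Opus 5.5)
The plan is to combine \cref{lemma:x-delta-blowup} with \cref{lemma:influence-blowup} on the event where the blowup is controlled, and then pass from activations to the scalar logit through the read-out.

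\textbf{Step 1: fix the good event.} Let $E$ be the event of \cref{lemma:x-delta-blowup}. On $E$ there is $S \subseteq [\strlen]$ with $|S| = k$ such that
\begin{equation*}
\Blowup^\tfnumlayer(\param+\paramPert, \str') = \bigO{\strlen^{\zeta/2}}
\end{equation*}
for every $\str' \in \hammingNbhd_S(\str)$. By that lemma, $E$ has probability at least the quantity in \cref{eq:x-delta-blowup-prob}, so it suffices to prove \cref{eq:x-delta-output} deterministically on $E$ with this same $S$.

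\textbf{Step 2: bound the activation change.} Fix $\idxn \in S$. Both $\str$ and $\flip{\str}{\idxn}$ lie in $\hammingNbhd_S(\str)$, so both blowup factors on the right-hand side of \cref{eq:ineq:inf-b} are $\bigO{\strlen^{\zeta/2}}$. Apply \cref{lemma:influence-blowup} with $\idxm = \strlen+1$ and the perturbed parameter $\param+\paramPert$. Since $\idxn \le \strlen < \strlen+1$, the Kronecker delta vanishes, giving
\begin{equation*}
\influence_\idxn(\activation_{\strlen+1}^{\tfnumlayer}(\param+\paramPert,\cdot),\str) \le C_\text{infl}\,\bigO{\strlen^{\zeta/2}}\,\bigO{\strlen^{\zeta/2}}\,\frac{1}{\strlen} = \bigO{\strlen^{\zeta-1}}.
\end{equation*}

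\textbf{Step 3: pass to the logit.} The read-out is affine, so by Cauchy--Schwarz
\begin{equation*}
|\transformernetwork(\param+\paramPert,\str) - \transformernetwork(\param+\paramPert,\flip{\str}{\idxn})| \le \|\weights\|_2 \,\influence_\idxn(\activation_{\strlen+1}^{\tfnumlayer},\str).
\end{equation*}
Together with Step 2, this gives \cref{eq:x-delta-output}.

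\textbf{Main obstacle: uniformity of constants.} The hidden constants must not depend on $\strlen$, $\str$, or the perturbation. For $\|\weights\|_2$ this is fine because $\paramspace$ is compact. The subtle point is that $\param+\paramPert$ may leave $\paramspace$. I would handle this by applying \cref{lemma:influence-blowup} on the enlarged compact set $\paramspace + B_\infty^\paramDim(\rho)$, since its constant $C_\text{infl}$ depends only on the architecture, $\eps$, and a compact parameter set. That lemma is stated for all $\eps \ge 0$, so it covers the case $\eps = 0$ needed here. The constants then depend on $\rho$, but not on $\strlen$ or $\str$, as required.
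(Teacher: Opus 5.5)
Your proposal is correct and follows essentially the same route as the paper: invoke \cref{lemma:x-delta-blowup} to obtain $\Blowup^{\tfnumlayer} = \bigO{\strlen^{\zeta/2}}$ on $\hammingNbhd_S(\str)$, apply \cref{lemma:influence-blowup} at position $\strlen+1$, where the Kronecker term vanishes, and conclude with Cauchy--Schwarz on the read-out. The paper's proof applies the influence lemma and the bound on $\|\weights\|_2$ at $\param+\paramPert$ without noting that this point may lie outside $\paramspace$; your enlargement to the compact set $\paramspace + B_\infty^{\paramDim}(\rho)$ is a valid way to close that gap, at the cost of constants depending on $\rho$.
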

See \cref{app:proofs} for a full proof.

\paragraph{From perturbation to random transformer.}
We now use a measure-theoretic argument to extend this bound from a neighborhood of a single parameter setting $\param$ to the whole parameter space $\paramspace$ while holding $\str$ fixed. Concretely, we cover the parameter space with essentially disjoint, sufficiently small cubes so that the cover's volume closely approximates that of $\paramspace$. We then apply \cref{cor:x-delta-output} inside each cube, and since the probability bound holds uniformly inside each one, the bound extends to the full parameter space. See \cref{app:proofs} for a full proof.

\begin{restatable}{lemma}{tfLowSensitivityFixedString}\label{lemma:x-theta-output}
    Let $\transformernetwork$ be a $\hiddDim$-dimensional transformer with $\tfnumlayer$ layers, as defined in \cref{sec:transformer}, and let $\str \in \booleanset^\strlen$.
    Let $\param \sim \uniform(\paramspace)$ be a set of parameters drawn uniformly from the parameter space $\paramspace \subset \R^{\paramDim}$.
    With probability at least
    \begin{equation}
    1-(k+1)\bigO{\strlen^{1 - (\hiddDim-1)\zeta/(2\tfnumlayer)}},
    \end{equation}
    there exists $S \subseteq [\strlen]$ with $|S| = k$ such that for all $\idxn \in S$,
    \begin{equation}\label{eq:x-theta-output}  \left|\transformernetwork(\param, \str)-\transformernetwork(\param, \flip{\str}{\idxn})\right| = \bigO{\strlen^{\zeta - 1}}.
    \end{equation}
\end{restatable}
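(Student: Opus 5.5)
We derive the statement from \cref{cor:x-delta-output} by a covering argument, as sketched above. Fix $\str$, $k$ and $\zeta$, and let $G \subseteq \paramspace$ be the set of \emph{good} parameters: those $\param$ for which some $S \subseteq [\strlen]$ with $|S| = k$ satisfies \cref{eq:x-theta-output}. The $\bigO{\cdot}$ constant here is the one provided by \cref{cor:x-delta-output}. That constant depends only on the architecture and on the compact set $\paramspace$, not on the base point, because it comes from \cref{lemma:influence-blowup} applied on a compact set. Let $\mathcal{B} = \paramspace \setminus G$ be the bad set. The statement amounts to showing $\lambda_\paramDim(\mathcal{B})/\lambda_\paramDim(\paramspace) \le (k+1)\bigO{\strlen^{1-(\hiddDim-1)\zeta/(2\tfnumlayer)}}$.

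\textbf{Step 1 (cube cover).} Fix a side length $2\rho$. Consider the grid of closed cubes $\param_j + B_\infty^\paramDim(\rho)$ with centres $\param_j \in 2\rho\Z^\paramDim$. These cubes are essentially disjoint, meaning their overlaps have measure zero.
\begin{itemize}
\item Let $J_{\mathrm{in}}$ index the cubes contained in $\paramspace$.
\item Let $J_{\partial}$ index the cubes that meet both $\paramspace$ and its complement. Every such cube intersects $\partial\paramspace$.
\end{itemize}
Since $\lambda_\paramDim(\partial\paramspace) = 0$ and $\paramspace$ is compact, the union of the cubes in $J_\partial$ lies inside a $2\sqrt{\paramDim}\rho$-neighbourhood of $\partial\paramspace$. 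The measure of that neighbourhood tends to $0$ as $\rho \to 0$, by continuity of measure from above on compact sets. We therefore choose $\rho = \rho_0 > 0$, independent of $\strlen$, so that the boundary cubes have total volume at most $\eta\,\lambda_\paramDim(\paramspace)$, for an $\eta$ to be fixed in Step 3.

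\textbf{Step 2 (apply \cref{cor:x-delta-output} in each interior cube).} For $j \in J_{\mathrm{in}}$, a uniform draw from the cube $\param_j + B_\infty^\paramDim(\rho_0)$ has the form $\param_j + \paramPert$ with $\paramPert \sim \uniform(B_\infty^\paramDim(\rho_0))$. By \cref{cor:x-delta-output} with base point $\param_j$, the bad set satisfies
\begin{equation}
\lambda_\paramDim\bigl(\mathcal{B}\cap(\param_j+B_\infty^\paramDim(\rho_0))\bigr) \le (2\rho_0)^\paramDim (k+1)\rho_0^{-(\hiddDim-1)}\bigO{\strlen^{1-(\hiddDim-1)\zeta/(2\tfnumlayer)}}.
\end{equation}
Summing over $j \in J_{\mathrm{in}}$ uses $\sum_j (2\rho_0)^\paramDim \le \lambda_\paramDim(\paramspace)$. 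Because $\rho_0$ is a fixed constant, the factor $\rho_0^{-(\hiddDim-1)}$ is absorbed into the $\bigO{\cdot}$.

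\textbf{Step 3 (boundary cubes; main obstacle).} The boundary cubes are the main obstacle. On them we cannot invoke the corollary, since the perturbed parameters may leave $\paramspace$, where constants are uncontrolled. Moreover, their volume fraction $\eta$ is a constant rather than a quantity that decays in $\strlen$. I would handle this in one of two ways.
\begin{itemize}
\item \emph{First approach: enlarged compact set.} Apply \cref{cor:x-delta-output} on a slightly larger compact set $\paramspace' \supset \paramspace$, whose constants are still uniform. Then every cube meeting $\paramspace$ can be treated as an interior cube, and the boundary is never a problem: the bound on $\lambda_\paramDim(\mathcal{B})$ becomes $\lambda_\paramDim(\paramspace')(k+1)\bigO{\cdot}$. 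Dividing by $\lambda_\paramDim(\paramspace)$ changes only the constant.
\item \emph{Fallback: $\strlen$-dependent side length.} Let $\rho = \rho(\strlen)$ shrink slowly enough that $\rho^{-(\hiddDim-1)}$ loses only an $\strlen^{o(1)}$ factor, while the boundary volume $\eta(\rho)$ still decays. Compensate the $\strlen^{o(1)}$ loss by replacing $\zeta$ with a slightly smaller $\zeta'$, which is admissible because $\zeta$ ranges over an open interval.
\end{itemize}
Either way, we conclude that $\Pr_{\param\sim\uniform(\paramspace)}[\param \in G] \ge 1-(k+1)\bigO{\strlen^{1-(\hiddDim-1)\zeta/(2\tfnumlayer)}}$, which is the statement.
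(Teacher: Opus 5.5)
Your first option in Step 3 is essentially the paper's argument. The paper uses as its cover the union of all grid cubes meeting $\paramspace$ and applies \cref{cor:x-delta-output} in every such cube. In doing so it tacitly relies on constants that are uniform over this slightly enlarged compact set, which your $\paramspace'$ makes explicit. It then divides by $\lambda_\paramDim(\paramspace)$; its Jordan-measurability and $\chi$ bookkeeping only makes the volume ratio close to $1$, so your Step 1 boundary estimate is unnecessary.

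You should drop the fallback, though, because it does not work for two reasons. First, the boundary cubes contribute a bad fraction up to $\eta(\rho(\strlen))$, and this must be polynomially small in $\strlen$. An $\strlen^{o(1)}$ loss in $\rho^{-(\hiddDim-1)}$ cannot achieve that, and a general null boundary gives no rate for $\eta$ at all. Second, $\zeta$ governs both the probability exponent and the output bound, so it cannot compensate any loss. Shrinking $\zeta$ increases the failure exponent $1-(\hiddDim-1)\zeta/(2\tfnumlayer)$, while enlarging it weakens the $\bigO{\strlen^{\zeta-1}}$ conclusion.
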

Due to margin-based recognition, \Cref{eq:x-theta-output} implies that $\sensitivityfunc{\str}{\transformernetwork(\param, \cdot)} \leq \strlen-k$ with high probability
if $\strlen$ is large enough.
Therefore, \cref{lemma:x-theta-output} establishes that any sufficiently long fixed string $\str$ has low sensitivity with high probability over the choice of a random transformer. An equivalent result (\cref{lemma:x-theta-output-gaussian}) is shown for a Gaussian distribution in \cref{app:proofs}.\looseness=-1

\paragraph{Transformers must have low-sensitivity strings.} 
We can now state our main result: \emph{Almost every} transformer must have some low-sensitivity strings for \emph{every} sufficiently large input length. 
Unlike in the results above, here we do not assume a fixed length $\strlen$: we aggregate our result from \cref{lemma:x-theta-output} over all large enough $\strlen$ using the first Borel--Cantelli lemma. See \cref{app:proofs} for a full proof.
\begin{restatable}{theorem}{numLowSensStringsTf}\label{thm:main-result}
    Let $\transformernetwork$ be a $\hiddDim$-dimensional transformer with $\tfnumlayer$ layers, as defined in \cref{sec:transformer}, and let $\param \sim \uniform(\paramspace)$. With probability $1$,
    $\transformernetwork(\param, \cdot)$ has at least $\frac{1}{k+1}
    \strlen^{\frac{\hiddDim-1}{2\tfnumlayer} - 4}$ strings with sensitivity at most $\strlen - k$
    for all sufficiently large $\strlen$. In particular, $\transformernetwork(\param, \cdot)$ has at least $\strlen^{\frac{\hiddDim-1}{2\tfnumlayer} - 5}$ strings with sensitivity $0$
    for all sufficiently large $\strlen$.
\end{restatable}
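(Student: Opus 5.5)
Fix $k$ and write $\gamma \defeq \frac{\hiddDim-1}{2\tfnumlayer}$. For each length $\strlen$, the plan is to pick a set $X_\strlen \subseteq \booleanset^\strlen$ of $m_\strlen$ distinct strings and apply \cref{lemma:x-theta-output} to each one. For a fixed $\str \in X_\strlen$, call $\str$ \emph{bad} if there is no $S$ with $|S|=k$ for which \cref{eq:x-theta-output} holds. The lemma bounds
\begin{equation*}
\Pr[\str \text{ bad}] \leq (k+1)\, C\, \strlen^{1-\gamma\zeta}.
\end{equation*}
Let $Y_\strlen$ be the number of bad strings in $X_\strlen$. By linearity of expectation, $\mathbb{E}[Y_\strlen] \leq m_\strlen (k+1) C \strlen^{1-\gamma\zeta}$. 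Markov's inequality then gives
\begin{equation*}
\Pr[Y_\strlen \geq m_\strlen/2] \leq 2(k+1)C\strlen^{1-\gamma\zeta}.
\end{equation*}
This does not depend on $m_\strlen$, so the union over strings costs nothing. However, it is summable in $\strlen$ only if $\gamma\zeta > 2$. That forces a restriction on the dimensions, and even then the resulting count is not the one the theorem claims.

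To get the stated count, which degrades by a power of $\strlen$, I would instead tune Markov so the threshold absorbs the failure probability. Choose $\zeta$ close to $1$ (say $\gamma\zeta \geq \gamma - 1$), take $X_\strlen = \booleanset^\strlen$, and ask for at least $t_\strlen \defeq \frac{1}{k+1}\strlen^{\gamma-4}$ good strings. Markov gives
\begin{equation*}
\Pr[\#\text{good} < t_\strlen] = \Pr\bigl[Y_\strlen > 2^\strlen - t_\strlen\bigr] \leq \frac{(k+1)C\strlen^{1-\gamma\zeta}\, 2^\strlen}{2^\strlen - t_\strlen},
\end{equation*}
which is $\bigO{(k+1)\strlen^{2-\gamma}}$. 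This is still not summable unless $\gamma > 3$. But when $\gamma \leq 3$ the claim is trivial up to constants anyway, because $\strlen^{\gamma-4}$ or $\strlen^{\gamma-5}$ is then below $1$, and the question reduces to whether at least one good string exists. So I would treat the regime $\gamma \leq 5$ separately. When the exponent $\gamma-5$ is negative, the second claim asks for at least one sensitivity-zero string for large $\strlen$. The route there is to make the failure probability summable via a lacunary/blocked union bound that reuses one string across a range of lengths. \textbf{This summability is the main obstacle.} My first attempt would be Borel--Cantelli along the subsequence $\strlen_j = 2^j$, combined with the fact that the constants in the lemma hold uniformly, to interpolate between consecutive lengths. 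The slack of several powers of $\strlen$ in the statement (the $-4$ and $-5$) suggests that the authors pay exactly such polynomial factors to buy summability. Accordingly, I would aim for a failure probability of order $\strlen^{-2}$, obtained by sacrificing $\strlen^{3}$ in the count relative to $\strlen^{\gamma-1}$.

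Given summability, Borel--Cantelli gives almost surely a threshold $\strlen_0(\param)$ beyond which at least $t_\strlen$ strings $\str$ each have $k$ bits $\idxn$ with $|\transformernetwork(\param,\str)-\transformernetwork(\param,\flip{\str}{\idxn})| = \bigO{\strlen^{\zeta-1}}$. To convert this into sensitivity, take $\strlen$ large enough that this bound is below $2\margin$. If $\transformernetwork(\param,\cdot)$ classifies both $\str$ and $\flip{\str}{\idxn}$ with margin $\margin$, their labels must agree, so $\sensitivityfunc{\str}{\transformernetwork(\param,\cdot)} \leq \strlen - k$. For the ``in particular'' statement, set $k = \strlen$ (so $S=[\strlen]$). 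The count becomes $\frac{1}{\strlen+1}\strlen^{\gamma-4} \geq \strlen^{\gamma-5}$ for large $\strlen$, and the probability factor $(k+1)$ is absorbed by the extra power already budgeted. I expect the budget to close exactly because of this factor, which explains the shift from $-4$ to $-5$ between the two claims.
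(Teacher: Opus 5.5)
As written, the proposal has a genuine gap: it never closes the summability step, and the fixes you sketch would not close it.

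\textbf{Where the written plan fails.}
\begin{itemize}
\item Your ``tuned'' Markov bound gains nothing. In an averaging argument the bound $\mathbb{E}[Y_\strlen]/(2^\strlen-t_\strlen)$ never drops below $\mathbb{E}[Y_\strlen]/2^\strlen$, however few good strings you ask for.
\item The lacunary idea cannot work. Borel--Cantelli along $\strlen_j=2^j$ controls only those lengths. Moreover, \cref{lemma:x-theta-output} gives no coupling between the behavior of $\transformernetwork(\param,\cdot)$ on strings of different lengths. So there is nothing to interpolate, and no sense in which one string can be ``reused across a range of lengths.''
\item For the same reason, your claim that for $k=\strlen$ the factor $k+1$ is ``absorbed by the extra power already budgeted'' is unjustified in your framework. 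The failure probability there is $2(\strlen+1)C\strlen^{1-\gamma\zeta}$, which is summable only if $\gamma\zeta>3$, no matter how small the requested count.
\item Minor: $\frac{1}{\strlen+1}\strlen^{\gamma-4}$ is slightly \emph{smaller} than $\strlen^{\gamma-5}$, not larger.
\end{itemize}

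\textbf{Your first paragraph already works.} It is a complete proof, and a stronger one than the paper's. With $X_\strlen=\booleanset^\strlen$, Markov gives $\mathbb{P}[Y_\strlen\geq 2^{\strlen-1}]\leq 2(k+1)C\strlen^{1-\gamma\zeta}$. This is summable once $\zeta$ is chosen with $\gamma\zeta>2$ (fixed $k$) or $\gamma\zeta>3$ ($k=\strlen$). Borel--Cantelli then yields more than $2^{\strlen-1}$ strings of sensitivity at most $\strlen-k$ for all large $\strlen$. That exceeds $\frac{1}{k+1}\strlen^{\gamma-4}$, so it proves the claimed count rather than a different one.

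\textbf{The paper's route.} The paper instead fixes a set $\mathcal{X}$ of $K(\strlen)$ strings and union-bounds the event that \emph{all} of them are good. Its failure probability is therefore $K(\strlen)(k+1)\bigO{\strlen^{1-\gamma\zeta}}$. It buys summability by shrinking the count to $K(\strlen)=\frac{1}{k+1}\strlen^{\gamma-4}$, which gives $\bigO{\strlen^{-3+\gamma(1-\zeta)}}$ for $\zeta$ near $1$. That is the mechanism behind the $-4$ and the $\frac{1}{k+1}$ (which cancels the $k+1$) that you guessed at. It has no counterpart in an averaging argument.

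\textbf{Comparison.} Whenever $K(\strlen)\geq 1$, i.e.\ $k+1\leq\strlen^{\gamma-4}$, your Markov failure probability is at most twice the paper's. So your route dominates while giving exponentially many strings.

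\textbf{The small-$\gamma$ regime.} Below these thresholds ($\gamma\leq 2$, resp.\ $\gamma\leq 3$ for sensitivity zero) the target count is below $1$. No argument using only the per-string bound of \cref{lemma:x-theta-output} then gives a summable failure probability for even one good string. The paper does not handle this regime either: it plugs a non-integer $K(\strlen)<1$ into a union bound, which cannot certify a single string.
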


\Cref{thm:main-result} is a statement about the sensitivity profile: it provides a polynomial lower bound on the number of strings with low sensitivity. By choosing $k$, we can control what constitutes low sensitivity. However, note that setting $k = \strlen$ suffices most of the time---lower-bounding the number of sensitivity zero strings already allows us to prove useful properties about a range of formal languages. 

In \cref{sec:no-layer-norm}, we saw that asymptotically, every string has sensitivity zero under any transformer if $\eps > 0$. \Cref{thm:main-result}, in turn, states that even in the more flexible $\eps = 0$ case, almost every transformer has a significant number of sensitivity-zero strings for any large enough input length. Consequently, transformers asymptotically fail to recognize any language which lacks such strings: prime examples include \parityLan and \firstLan. We formalize this consequence in the following corollary, and discuss further applications in \cref{sec:implications}.

\begin{restatable}{corollary}{MainCorollaryGeneralSens0}
    \label{cor:general-sens-0}
    Let $\transformernetwork$ be a $\hiddDim$-dimensional transformer with $\tfnumlayer$ layers, as defined in \cref{sec:transformer}, and let $\{\functionf_\strlen\}_{\strlen \in \N_0}$ be a family of Boolean functions. Denote by $K_0(\strlen)$ the number of sensitivity-zero strings of $\functionf_\strlen$.
    Then for every $\margin > 0$ and \emph{almost every} $\param \in \paramspace$, $\transformernetwork(\param, \cdot)$ fails to recognize $\functionf_\strlen$ at margin $\margin$ for all sufficiently large $\strlen$ such that
    \begin{equation}\label{eq:num-low-sens-strings-condition-0}
        K_0(\strlen) < \strlen^{\frac{\hiddDim-1}{2\tfnumlayer} - 5}.
    \end{equation}
\end{restatable}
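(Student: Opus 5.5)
The corollary follows directly from \cref{thm:main-result}. Let $\Theta_0 \subseteq \paramspace$ be the set of parameters $\param$ for which there is $\strlen_0(\param)$ such that, for every $\strlen \geq \strlen_0(\param)$, the transformer $\transformernetwork(\param,\cdot)$ has at least $\strlen^{\frac{\hiddDim-1}{2\tfnumlayer}-5}$ strings of length $\strlen$ with sensitivity $0$. \Cref{thm:main-result} says $\Theta_0$ has full Lebesgue measure in $\paramspace$, since $\param\sim\uniform(\paramspace)$ lies in it with probability $1$. So it suffices to fix $\param\in\Theta_0$ and show that $\transformernetwork(\param,\cdot)$ fails to recognize $\functionf_\strlen$ at margin $\margin$ for every $\strlen \geq \strlen_0(\param)$ satisfying \cref{eq:num-low-sens-strings-condition-0}.

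We argue by contradiction. Fix such an $\strlen$ and suppose $\transformernetwork(\param,\cdot)$ recognizes $\functionf_\strlen$ at margin $\margin$ on $\booleanset^\strlen$. Then for every $\str\in\booleanset^\strlen$ we have $\accept_\margin(\transformernetwork(\param,\cdot),\str)=\functionf_\strlen(\str)\neq\bot$. Hence on length-$\strlen$ strings the Boolean function induced by the transformer (in the sense of the ``Specialization to Boolean Functions'' paragraph) coincides with $\functionf_\strlen$. All Hamming neighbours $\flip{\str}{\idxn}$ also have length $\strlen$ and \eos is never flipped, so the two functions agree on $\str$ and on all its neighbours. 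Therefore $\sensitivityfunc{\str}{\accept_\margin(\transformernetwork(\param,\cdot),\cdot)}=\sensitivityfunc{\str}{\functionf_\strlen}$ for every $\str$. In particular the two functions have the same number of sensitivity-zero strings. That number is at least $\strlen^{\frac{\hiddDim-1}{2\tfnumlayer}-5}$ because $\param\in\Theta_0$, and strictly less than it by \cref{eq:num-low-sens-strings-condition-0}, which is a contradiction.

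The one point to check is what ``sensitivity of the transformer'' means in \cref{thm:main-result}. In that theorem it refers to the acceptance function $\accept_\margin$. If $\accept_\margin$ were allowed to output $\bot$ on some strings, the correspondence above could break. Under the contradiction hypothesis, however, no string of length $\strlen$ is assigned $\bot$, so the induced function is a genuine Boolean function on $\booleanset^\strlen$ and the identification goes through. The quantifier order also matches what the corollary requires: one fixes $\margin$, takes the full-measure set $\Theta_0$ (independent of $\functionf$), and then for each $\param\in\Theta_0$ obtains the threshold $\strlen_0(\param)$ beyond which every qualifying $\strlen$ fails. No step is a real obstacle; the work lies entirely in \cref{thm:main-result}.
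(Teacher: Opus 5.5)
Your argument is correct and is the intended one. The paper gives no separate proof; it presents the corollary as an immediate consequence of \cref{thm:main-result}: margin-$\margin$ recognition of $\functionf_\strlen$ makes $\accept_{\margin}(\transformernetwork(\param,\cdot),\cdot)$ coincide with $\functionf_\strlen$ on $\booleanset^\strlen$, so the theorem's $\strlen^{\frac{\hiddDim-1}{2\tfnumlayer}-5}$ sensitivity-zero strings contradict \cref{eq:num-low-sens-strings-condition-0}. Your caveat about $\bot$ is well placed, because the theorem's proof only establishes logit gaps below $2\margin$, and these force equal acceptance values exactly when no string is assigned $\bot$, as your contradiction hypothesis guarantees.
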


\section{Implications}\label{sec:implications}
In this section, we explore some implications of \cref{cor:general-sens-0}.

\cref{eq:num-low-sens-strings-condition-0} is always satisfied when $\minsensitivityfunc{\strlen}{\functionf} \geq 1$. Transformers almost surely fail to recognize such $\functionf_\strlen$, and the reason is again a mismatch in sensitivity: $\functionf_\strlen$ lacks sensitivity-zero strings, but almost all transformers must have some if $\strlen$ is large enough. We now apply this to the languages introduced in \cref{sec:preliminaries}.

\begin{restatable}{corollary}{MainCorollaryParity}\label{cor:parity}
    Since \parityLan has minimum sensitivity $\strlen$ for all $\strlen$,
    a uniformly random transformer ${\transformernetwork}(\param, \cdot)$ almost surely fails to recognize it for all sufficiently large $\strlen$.
\end{restatable}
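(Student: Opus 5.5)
The plan is to derive \cref{cor:parity} directly from \cref{cor:general-sens-0}, by checking that \parityLan satisfies condition \eqref{eq:num-low-sens-strings-condition-0} for every length $\strlen \geq 1$. As computed in \cref{subsec:boolean-functions}, every $\str \in \booleanset^\strlen$ has $\sensitivityfunc{\str}{\parityLan} = \strlen$. Hence for $\strlen \geq 1$ no string has sensitivity zero, so $K_0(\strlen) = 0$.

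Next we compare $K_0(\strlen)$ with the threshold. The right-hand side $\strlen^{\frac{\hiddDim-1}{2\tfnumlayer}-5}$ is a positive real number for every $\strlen \geq 1$, whatever the sign of the exponent. Therefore $K_0(\strlen) = 0 < \strlen^{\frac{\hiddDim-1}{2\tfnumlayer}-5}$ holds for all $\strlen \geq 1$. In other words, the set of lengths on which \eqref{eq:num-low-sens-strings-condition-0} holds contains all $\strlen \geq 1$.

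Applying \cref{cor:general-sens-0}, for every margin $\margin > 0$ there is a full-measure set of $\param \in \paramspace$ such that $\transformernetwork(\param,\cdot)$ fails to recognize $\parityLan_\strlen$ at margin $\margin$ for all sufficiently large $\strlen$ in that set. Since the set is all $\strlen \geq 1$, this means failure for all sufficiently large $\strlen$. To pass from ``almost every $\param$'' to ``a uniformly random transformer almost surely'', we use that $\uniform(\paramspace)$ is absolutely continuous with respect to $\lambda_\paramDim$, because $\paramspace$ is compact with positive volume. A Lebesgue-null exceptional set therefore has probability zero.

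The only point needing care is that the statement does not fix $\margin$. The result holds for each fixed $\margin>0$. If a statement uniform in all margins is wanted, we intersect the full-measure sets over $\margin = 1/j$, $j \in \N$, which is a countable intersection. Because failure at a smaller margin implies failure at any larger margin, this gives the result for all $\margin>0$ at once. Nothing here is a real obstacle; all the substantive work is in \cref{thm:main-result}.
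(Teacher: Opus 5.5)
Your proposal is correct and is essentially the paper's own argument: since \parityLan has no sensitivity-zero strings for $\strlen \geq 1$, $K_0(\strlen)=0$ satisfies \eqref{eq:num-low-sens-strings-condition-0} for every $\strlen \geq 1$ and \cref{cor:general-sens-0} applies directly, with your absolute-continuity and countable-intersection-over-margins remarks being valid refinements that the paper leaves implicit. One caveat, inherited from the paper rather than introduced by you: your remark ``whatever the sign of the exponent'' takes \cref{cor:general-sens-0} at face value, but the bounds behind \cref{thm:main-result} certify even a single sensitivity-zero string almost surely only when a fixed string's failure probability $\bigO{\strlen^{2-(\hiddDim-1)\zeta/(2\tfnumlayer)}}$ is summable for some admissible $\zeta<1$, which forces $\hiddDim-1 > 6\tfnumlayer$, so both derivations tacitly assume $\hiddDim$ is large relative to $\tfnumlayer$.
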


\begin{restatable}{corollary}{MainCorollarySparseParity}\label{cor:sparse-parity}
    Since any $m$-\textsc{Sparse} \parityLan function has minimum sensitivity $m$ for all $\strlen$,
    a uniformly random transformer $\transformernetwork(\param, \cdot)$ almost surely fails to recognize it for all sufficiently large $\strlen$.
\end{restatable}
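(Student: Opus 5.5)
This follows directly from \cref{cor:general-sens-0}, so the plan is to check its hypothesis and then read off the conclusion. Fix $m$ and an $m$-\textsc{Sparse} \parityLan family $\functionf_\strlen(\str) = 1 - \sym{x}_{\idxn_1}\oplus\cdots\oplus\sym{x}_{\idxn_m}$, with $m \geq 1$ and the index set lying in $[\strlen]$.

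The first step is to compute the sensitivity of $\functionf_\strlen$ at every string. Take any $\str \in \booleanset^\strlen$ and any $j \in \{\idxn_1,\ldots,\idxn_m\}$. Flipping $\sym{x}_j$ flips the XOR, so $\functionf_\strlen(\flip{\str}{j}) \neq \functionf_\strlen(\str)$. Flipping any other bit leaves the output unchanged. Hence $\sensitivityfunc{\str}{\functionf_\strlen} = m$ for every $\str$, and in particular $\minsensitivityfunc{\functionf_\strlen}{\strlen} = m \geq 1$. This gives $K_0(\strlen) = 0$ for every $\strlen$.

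The second step is to verify the condition \cref{eq:num-low-sens-strings-condition-0}. The right-hand side $\strlen^{\frac{\hiddDim-1}{2\tfnumlayer}-5}$ is strictly positive for every $\strlen \geq 1$, whatever the sign of the exponent. So $0 < \strlen^{\frac{\hiddDim-1}{2\tfnumlayer}-5}$ holds for all $\strlen \geq 1$ with no restriction on $\hiddDim$ or $\tfnumlayer$.

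Finally, apply \cref{cor:general-sens-0} with any margin $\margin>0$. It gives a full-measure set of $\param \in \paramspace$ such that $\transformernetwork(\param,\cdot)$ fails to recognize $\functionf_\strlen$ for all sufficiently large $\strlen$ satisfying the condition, and that is every large $\strlen$. Since $\param \sim \uniform(\paramspace)$ has a law absolutely continuous with respect to $\lambda_\paramDim$, ``almost every $\param$'' becomes ``almost surely,'' which is the claim.

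The only point needing care is how the indices depend on $\strlen$. If the indices $\idxn_1,\ldots,\idxn_m$ vary with $\strlen$, or if infinitely many sparse parities are considered simultaneously, the argument still goes through. This is because \cref{cor:general-sens-0} is stated for an arbitrary family $\{\functionf_\strlen\}$, and the null set it produces does not need to be uniform over the choice of family. No step here is a genuine obstacle, since all the analytic work is already contained in \cref{thm:main-result}.
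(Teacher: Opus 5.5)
Your proposal is correct and is exactly the paper's argument: every string has sensitivity $m \geq 1$, so $K_0(\strlen)=0$ satisfies \cref{eq:num-low-sens-strings-condition-0} for every $\strlen$, and \cref{cor:general-sens-0} then gives almost-sure failure for all sufficiently large $\strlen$. One caution concerns your remark that no restriction on $\hiddDim,\tfnumlayer$ is needed: this is true of \cref{cor:general-sens-0} as literally stated, but its proof implicitly requires $\hiddDim-1>2\tfnumlayer$ (so that the range for $\zeta$ is nonempty), and when $\frac{\hiddDim-1}{2\tfnumlayer}\leq 5$ the Borel--Cantelli step in \cref{thm:main-result} is run with a non-integer $K(\strlen)<1$, so in that regime the paper's argument justifies even a single sensitivity-zero string (via the single-string bound with $k=\strlen$) only when $\frac{\hiddDim-1}{2\tfnumlayer}>3$.
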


\begin{restatable}{corollary}{MainCorollaryDictator}\label{cor:dictator}
    Since any dictator function has minimum sensitivity $1$ for all $\strlen$,
    a uniformly random transformer $\transformernetwork(\param, \cdot)$ almost surely fails to recognize it for all sufficiently large $\strlen$.
\end{restatable}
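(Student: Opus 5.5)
The plan is to obtain \cref{cor:dictator} as a direct instance of \cref{cor:general-sens-0}, in the same way as \cref{cor:parity} and \cref{cor:sparse-parity}.

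First, fix a dictator family $\functionf_\strlen = \dictLan_{\idxn}$. I am assuming the dictated index is fixed, or at least that $\idxn \le \strlen$ for each length considered, e.g.\ \firstLan with $\idxn = 1$. I will check that every string has sensitivity exactly $1$. For any $\str \in \booleanset^\strlen$, flipping bit $\idxn$ changes $\ind{\sym{x}_\idxn = \texttt{1}}$. Flipping any other bit leaves the output unchanged. Hence $\sensitivityfunc{\str}{\functionf_\strlen} = 1$ for all $\str$, so $\minsensitivityfunc{\functionf_\strlen}{\strlen} = 1$ and $K_0(\strlen) = 0$ for every $\strlen \ge \idxn$.

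Second, I will verify condition \cref{eq:num-low-sens-strings-condition-0}, namely $0 < \strlen^{\frac{\hiddDim-1}{2\tfnumlayer}-5}$. This holds for every $\strlen \geq 1$ and any architecture, since a positive real power of a positive integer is positive.

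Third, I will apply \cref{cor:general-sens-0}. For every margin $\margin>0$ and almost every $\param \in \paramspace$, the transformer $\transformernetwork(\param,\cdot)$ fails to recognize $\functionf_\strlen$ for all sufficiently large $\strlen$ satisfying the condition. Since the condition holds for all $\strlen \ge \max(1,\idxn)$, it fails for all sufficiently large $\strlen$. Because $\param\sim\uniform(\paramspace)$ is absolutely continuous with respect to Lebesgue measure, "almost every $\param$" becomes "almost surely" for a uniformly random transformer.

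The only step needing care is matching quantifiers: "almost surely" must hold simultaneously for all large $\strlen$, not separately for each length. \cref{cor:general-sens-0} already packages this through \cref{thm:main-result}. Concretely, on the probability-one event of \cref{thm:main-result}, the transformer has at least one sensitivity-zero string for all large $\strlen$. A correct recognizer at margin $\margin$ would force that string to have sensitivity $0$ under $\functionf_\strlen$, contradicting $K_0(\strlen)=0$. I expect no real obstacle beyond this bookkeeping. If the dictated index is allowed to vary with $\strlen$, the same argument still works, since $K_0(\strlen)=0$ regardless of which index is dictated.
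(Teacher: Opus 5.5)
Your proposal is correct and follows the paper's route: the paper also obtains this corollary directly from \cref{cor:general-sens-0}, observing that minimum sensitivity $1$ forces $K_0(\strlen)=0$, so \cref{eq:num-low-sens-strings-condition-0} holds trivially. One small wording fix: the exponent $\frac{\hiddDim-1}{2\tfnumlayer}-5$ may be negative, so it is not a ``positive real power,'' but $\strlen^{a}>0$ for every real $a$ and $\strlen\geq 1$, so your check of the condition still holds.
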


\begin{restatable}{corollary}{MainCorollaryFirst}\label{cor:first}
    Since \firstLan has minimum sensitivity $1$ for all $\strlen$,
    a uniformly random transformer  $\transformernetwork(\param, \cdot)$ almost surely fails to recognize it for all sufficiently large $\strlen$.
\end{restatable}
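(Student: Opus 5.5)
The statement to prove is \cref{cor:first}, which is a direct specialization of \cref{cor:general-sens-0}, itself a consequence of \cref{thm:main-result}. The plan is to check that \firstLan satisfies the hypothesis of \cref{cor:general-sens-0} at every length, so that its conclusion applies to all sufficiently large $\strlen$. Concretely, I will compute the sensitivity profile of \firstLan directly and show that $K_0(\strlen)=0$ for every $\strlen \geq 1$.

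\textbf{Step 1.} Fix $\str \in \booleanset^\strlen$ with $\strlen \geq 1$. Since $\firstLan(\str)=\ind{\sym{x}_1=\texttt{1}}$, flipping bit $1$ always changes the output, so $|\firstLan(\str)-\firstLan(\flip{\str}{1})|^2=1$. Flipping any bit $\idxn\geq 2$ leaves the output unchanged. Hence $\sensitivityfunc{\str}{\firstLan}=1$ for every $\str$. This gives $\minsensitivityfunc{\firstLan}{\strlen}=1$ and $K_0(\strlen)=0$.

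\textbf{Step 2.} For the threshold in \cref{eq:num-low-sens-strings-condition-0}, note that $\strlen^{\frac{\hiddDim-1}{2\tfnumlayer}-5}>0$ for every $\strlen\geq 1$, whatever the sign of the exponent. So $K_0(\strlen)=0<\strlen^{\frac{\hiddDim-1}{2\tfnumlayer}-5}$ holds for \emph{all} $\strlen\geq 1$.

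\textbf{Step 3.} Apply \cref{cor:general-sens-0} with margin $\margin>0$. It yields a full-measure set of $\param\in\paramspace$ such that $\transformernetwork(\param,\cdot)$ fails to recognize \firstLan at every sufficiently large length satisfying the condition. By Step 2, that is every sufficiently large $\strlen$.

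\textbf{Step 4.} Translate ``almost every $\param$'' into ``almost surely'' under $\param\sim\uniform(\paramspace)$. The uniform law is absolutely continuous with respect to $\lambda_\paramDim$ restricted to $\paramspace$, so any Lebesgue-null exceptional set has probability zero.

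There is essentially no real obstacle, since all the work sits in \cref{thm:main-result}. The only points that need care are that the inequality in Step 2 is strict for every $\strlen$ (possible because $K_0=0$), and that the ``almost every'' quantifier does not depend on $\strlen$. The latter holds because \cref{thm:main-result} gives a single probability-one event that works simultaneously for all large $\strlen$.
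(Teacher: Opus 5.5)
Your proposal is correct and takes the same route as the paper: you note that minimum sensitivity $1$ gives $K_0(\strlen)=0$, so \cref{eq:num-low-sens-strings-condition-0} holds at every length, and then apply \cref{cor:general-sens-0} directly. Your extra remarks, that the inequality is strict whatever the sign of the exponent and that ``almost every $\param$'' becomes ``almost surely'' under $\uniform(\paramspace)$, only make explicit what the paper leaves implicit.
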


These findings do \emph{not} contradict empirical observations of transformers successfully learning languages like \firstLan, as the impossibility result applies only when $\strlen$ is large enough.
Indeed, we show that transformers learn \firstLan reliably for moderate lengths (see \cref{sec:experiments}, \cref{sec:app-experiments}).
In practice, failure cases may not appear when testing on moderate-length sequences, but they necessarily exist for sufficiently long inputs.

\paragraph{\majorityLan and $m$-\textsc{Sparse Majorities}.}
Previous empirical studies indicate that transformers learn \majorityLan-type functions well. 
This motivates the question of whether \cref{cor:general-sens-0} can be used to obtain a corresponding formal result for \majorityLan. 
Our theorem, however, cannot be applied in this case: \majorityLan has an exponential number of sensitivity-zero inputs, as we saw in \cref{subsec:boolean-functions}, and thus fails to satisfy a polynomial bound like \cref{eq:num-low-sens-strings-condition-0}. 
The same is true for $m$-\textsc{Sparse Majority} functions for all $m > 1$, as can be verified through a straightforward sensitivity calculation.
The result does apply to $1$-\textsc{Sparse Majority} functions, which correspond precisely to the dictator functions shown to be unlearnable in \cref{cor:dictator}.

We have seen that having too few low-sensitivity strings forces unlearnability, but we do not know if the converse holds: whether an abundance of such strings guarantees learnability. Nevertheless, we conjecture that for \majorityLan, the large number of sensitivity-zero strings is reflected in a larger region of parameters that can express it.

\begin{conjecture}\label{conj:majority}
    The set of parameter vectors $\param \in \paramspace$ for which $\transformernetwork(\param, \cdot)$ recognizes \majorityLan with margin $\margin > 0$ has positive Lebesgue measure.
\end{conjecture}
We provide empirical support for this conjecture by showing that transformers reliably learn \majorityLan for moderate lengths (see \cref{sec:experiments}, \cref{sec:app-experiments}).

\section{Experiments}\label{sec:experiments}
We empirically validate our theoretical findings by analyzing the sensitivity profiles of both randomly initialized and trained transformers. We experiment with four initialization schemes: uniform, Gaussian, Xavier uniform, and Xavier Gaussian. For each scheme, we randomly initialize a number of transformer models with randomly sampled hyperparameters and compute their sensitivity spectra. Even more strongly than predicted by \cref{thm:main-result}, the sensitivity spectra of randomly initialized transformers are consistently heavily skewed toward zero across all initialization schemes and hyperparameter configurations. This confirms that sensitive functions occupy a vanishingly small region of parameter space, consistent with \cref{thm:main-result}.\looseness=-1

We additionally train transformers on \parityLan, \majorityLan, \firstLan, and several $m$-\textsc{Sparse Parity}, $m$-\textsc{Sparse Majority}, and randomly generated languages. We find that the low-sensitivity bias persists even after training. Additionally, the sensitivity spectra show that sensitivities cluster correctly at $1$ for \firstLan, and at $0$ and half the string length for \majorityLan. Since our unlearnability results are asymptotic, transformers can still learn \firstLan at moderate input lengths. For \majorityLan, our results provide empirical evidence for \cref{conj:majority}. 
Transformers trained on \parityLan fail to reach the correct sensitivity profile, corroborating our unlearnability result. Finally, for sparse parities and sparse majorities, we observe patterns that mirror those for their non-sparse counterparts. Full experimental details and results are provided in \cref{sec:app-experiments}.\looseness=-1

\section{Discussion}
Our results provide a theoretical explanation for the observed bias of transformers toward low-sensitivity functions.
This bias manifests not merely in terms of average sensitivity, but more fundamentally in the distributional properties of the functions they compute: for sufficiently long inputs, transformers are guaranteed to have sensitivity-zero strings. 
The result holds for a range of initialization schemes, demonstrating that the phenomenon is an intrinsic property of transformers.\looseness=-1

\paragraph{Is the low-sensitivity bias real?}
One might ask whether the results show a legitimate bias, or whether they merely reflect the fact that there are more low-sensitivity Boolean functions than high-sensitivity ones. The following computation shows that the former is the case.
Consider a uniformly random Boolean function $\functionf_\strlen\colon \booleanset^\strlen \to \booleanset$, i.e., $\functionf_\strlen(\str)$ is sampled with equal probability from $\booleanset$, independently for each $\str$. Then any given input string $\str$ has sensitivity $0$ with probability $2^{-\strlen}$. It can be shown that the number of sensitivity-zero strings converges in distribution to a $\mathrm{Poisson}(1)$ random variable, so the probability that at least one such string exists is approximately $1 - 1/e \approx 0.63$ for large $\strlen$. Meanwhile, our results show that a random transformer computes a function with sensitivity-zero strings with probability $1$ for large $\strlen$. This asymmetry confirms the bias toward low sensitivity.

\paragraph{Implications for trained transformers.}
Our results prove the low-sensitivity bias for \emph{randomly initialized} transformers: we show that parameter settings encoding functions with positive minimum sensitivity occupy a measure zero set. 
From a Bayesian perspective, this directly implies that the posterior likewise concentrates on low-sensitivity functions: any posterior is absolutely continuous with respect to the prior and therefore inherits its measure-zero sets. While training is not exactly Bayesian inference, recent work suggests a strong connection between initialization and generalization. \citet{buzaglo24a} formalize this intuition via PAC-Bayes: the generalization capacity of a random interpolating network \emph{after training} is tightly linked to the volume of parameter space consistent with the target function---which our results show to be zero for high-sensitivity functions. 
\citet{dziugaite25a} strengthen the result of \citet{buzaglo24a} further, extending it from interpolating networks to the Gibbs posterior, which softens the interpolation requirement. These results together suggest that the low-sensitivity bias we prove at initialization strongly constrains what trained transformers can generalize to: if the target function is extremely sparsely represented in parameter space, generalization toward it is very unlikely.
Empirically, we also observe that this bias
persists even after training (\cref{sec:experiments}, \cref{sec:app-experiments}).

\paragraph{How tight is the bound?}
Our analysis provides a \emph{polynomial} lower bound on the number of sensitivity-zero strings as a function of input length. However, the experiments on randomly initialized transformers from \cref{sec:experiments} suggest that the true scaling might in fact be \emph{exponential}. Proving this exponential scaling seems to require a different set of techniques, and remains an open question.

\paragraph{The results are asymptotic.}
Our theoretical results predict that certain functions, such as \firstLan, should be effectively unlearnable by transformers due to their low-sensitivity bias. Empirically, transformers successfully learn \firstLan on practical input lengths (\cref{sec:experiments}, \cref{sec:app-experiments}). This apparent contradiction highlights a crucial limitation of our analysis: our results are asymptotic in nature, and the threshold at which these sensitivity-zero strings necessarily appear may exceed the input lengths encountered in real-world applications.

\section{Conclusion}
We introduce a measure-theoretic framework that explains the disparity between expressivity and learnability of Boolean functions by transformers.
By shifting the focus from average sensitivity to the full sensitivity profile, we generalize and strengthen prior work, proving that for any sufficiently large input length, a randomly initialized transformer almost surely has a significant number of zero-sensitivity strings.
This constraint creates a fundamental learnability gap: functions that lack such strings, such as \parityLan or \firstLan, are provably unlearnable for long enough inputs. 
Experimental results across several initialization schemes corroborate our theoretical findings, confirming that sensitivity profiles of both randomly initialized and trained transformers are heavily skewed toward zero.\looseness=-1

\section*{Impact Statement}

This paper presents work whose goal is to advance the understanding of learning capabilities of language models. There are many potential societal consequences of our work, none of which we feel must be specifically highlighted here.

\bibliography{bibliography}
\bibliographystyle{icml2026}

\newpage
\appendix
\onecolumn

\let\addcontentsline\realaddcontentsline
\etocdepthtag.toc{appendix}
\etocsettagdepth{mainmatter}{none}
\etocsettagdepth{appendix}{subsection}
\etocsetnexttocdepth{subsection}
\etocsettocstyle
  {\section*{Contents of the Appendix}}
  {\vspace{0.5em}\hrule\vspace{1em}}
{\tableofcontents}

\newpage

\section{Proof of \cref{lemma:influence-blowup}}\label{app:influence-blowup}

   \LemmaInfluenceBlowup*

\begin{proof}[Proof of \cref{lemma:influence-blowup}]
    The proof follows that of Lemma 18 in \citet{hahn-rofin-2024-sensitive}, but can be simplified in not requiring a special case for the lowest layer.
    We perform induction over the layer $\idxl$ from 0 to $\tfnumlayer$.
    First,
    \begin{equation}
        \influence_\idxn(\activation_\idxm^{0}(\param, \cdot), \str) = \influence_\idxn(\preLN_\idxm^{0}(\param, \cdot), \str) = \bigO{1} \cdot \delta_{\idxm,\idxn}
            \leq  C   \Blowup^0(\param, \str) \Blowup^0(\param, \flip{\str}{\idxn})  
            \left(
            \frac{1}{\strlen}  + \delta_{\idxm,\idxn}
            \right)
        \end{equation}
        where the $\bigO{1}$ term is bounded in terms of the magnitudes of the input layer representations, with $\Blowup^0$ trivially being 1 by definition.
        
        Now for the inductive step, we show
        \begin{align}
            \influence_\idxn(\activation_\idxm^{\idxl}(\param, \cdot), \str)\leq & C^{(\idxl)} \cdot\normfactor_\idxm^{\idxl}(\param, \str) \cdot \normfactor_\idxm^{\idxl}(\param, \flip{\str}{\idxn})  \cdot \left(  
            \sum_{w=1}^{\strlen} (\delta_{w,\idxm} + \frac{1}{\strlen}) \influence_\idxn(\activation_w^{\idxl-1}(\param, \cdot), \str) \right) 
            \justification{Eq 52 in \citet{hahn-rofin-2024-sensitive}}\\
            \leq   &  \bigO{1} \cdot \left(\prod_{j=1}^\idxl C^{(j)}\right)   \Blowup^\idxl(\param, \str) \Blowup^\idxl(\param, \flip{\str}{\idxn})    \cdot \left(  
            \sum_{w=1}^{\strlen} 
            \sum_{v=1}^\strlen
            (\delta_{w,\idxm} + \frac{1}{\strlen})
            (\delta_{v,w} + \frac{1}{\strlen})
            \delta_{\idxn,v}
            \right) 
            \justification{induction hypothesis}\\
            = &  \bigO{1} \cdot \left(\prod_{j=1}^\idxl C^{(j)}\right)   \Blowup^\idxl(\param, \str) \Blowup^\idxl(\param, \flip{\str}{\idxn})    \cdot \left(  
            \sum_{w=1}^{\strlen} 
            (\delta_{w,\idxm} + \frac{1}{\strlen})
            (\delta_{\idxn,w} + \frac{1}{\strlen})
            \right) \\
            = &  \bigO{1} \cdot \left(\prod_{j=1}^\idxl C^{(j)}\right)   \Blowup^\idxl(\param, \str) \Blowup^\idxl(\param, \flip{\str}{\idxn})    \cdot \left(  
            \sum_{w=1}^{\strlen} 
            (\delta_{w,\idxm}\delta_{\idxn,w} + \frac{1}{\strlen}\delta_{\idxn,w} + 
            \delta_{w,\idxm}\frac{1}{\strlen} + \frac{1}{\strlen^2})
            \right) \\
            = &  \bigO{1} \cdot \left(\prod_{j=1}^\idxl C^{(j)}\right)   \Blowup^\idxl(\param, \str) \Blowup^\idxl(\param, \flip{\str}{\idxn})    \cdot \left(  
            \delta_{\idxn,\idxm} + \frac{2}{\strlen} + 
            \frac{1}{\strlen^2}
            \right) \\
            \leq & \bigO{1} \left(\prod_{j=1}^\idxl C^{(j)}\right)   \Blowup^\idxl(\param, \str) \Blowup^\idxl(\param, \flip{\str}{\idxn})    \cdot \left(  
            \delta_{\idxn,\idxm} + 
            \frac{1}{\strlen} 
            \right) 
        \end{align}
        where $C^{(\idxl)}$ is a constant depending on $\paramspace$ specified in \citet{hahn-rofin-2024-sensitive}, Equation (30) in terms of the norms of the parameter matrices and vectors. For our purposes, the key is that it is upper-bounded in terms of $\paramspace$.
    \end{proof}

\section{Proof of \cref{prop:tf-lipschitz}}\label{app:tf-lipschitz}

\tfLipschitzProp*\hspace{0.5em}
\begin{proof}
The transformer is a finite composition of the maps defined by Eqs.~\eqref{eq:layer-0}--\eqref{eq:layer-norm} plus the linear readout \eqref{eq:tf-output}. We argue that each is Lipschitz in its parameters with a constant uniform in $\strlen$; the proposition then follows from the fact that a finite composition of Lipschitz maps is Lipschitz, with constant given by the product of the individual constants.
Since $\symRep$ and $\posEnc$ are fixed, $\activation^0_\idxn(\str) = \symRep(\sym{x}_\idxn) + \posEnc(\idxn)$ is constant in $\param$.

\begin{enumerate}

\item \textbf{\Cref{eq:attn-score}.}
First, we turn to the attention scores.
We note $\attnScore_{\idxn\idxm}^{\idxl\idxh}(\str) = (\kMtx^{\idxl\idxh}\activation_\idxm^{\idxl-1}(\str))^\top \qMtx^{\idxl\idxh}\activation_\idxn^{\idxl-1}(\str)$ is a bilinear form in $\kMtx^{\idxl\idxh}$ and $\qMtx^{\idxl\idxh}$.
Because $\paramspace$ is compact, the operator norms $\|\kMtx^{\idxl\idxh}\|_{\mathrm{op}}$ and $\|\qMtx^{\idxl\idxh}\|_{\mathrm{op}}$ are uniformly bounded above on $\paramspace$, and one can show by induction on $\idxl$ that the activations satisfy $\|\activation_\idxn^{\idxl-1}\| \leq \Gamma_{\idxl-1}$ for some constants  $\Gamma_{\idxl-1}$, uniformly in $\str$.
Hence the bilinear form is Lipschitz in $(\kMtx^{\idxl\idxh}, \qMtx^{\idxl\idxh})$ with constant bounded by a product of these operator norms and $\Gamma_{\idxl-1}^2$---uniformly in $\strlen$.

\item \textbf{\Cref{eq:attn-weight}.}
Next, recall that the softmax map $\R^\strlen \to \R^\strlen$ is $1/4$-Lipschitz in the $2$-norm \citep{gao2017softmax}, and, importantly, $1/4$ is independent of $\strlen$.
Composing with the previous step, the attention weight $\attnWeight_{\idxn\idxm}^{\idxl\idxh}$ is Lipschitz in $(\kMtx^{\idxl\idxh}, \qMtx^{\idxl\idxh})$ with a constant uniform in $\strlen$.

\item \textbf{\Cref{eq:head-output}.}
Next, note that $\headOut_\idxn^{\idxl\idxh}(\str) = \sum_{\idxm=1}^{\strlen} \attnWeight_{\idxn\idxm}^{\idxl\idxh}(\str) \vMtx^{\idxl\idxh}\activation_\idxm^{\idxl-1}(\str)$ is a convex combination (since $\sum_{\idxm=1}^{\strlen} \attnWeight_{\idxn\idxm}^{\idxl\idxh} = 1$) of vectors of norm bounded by $\|\vMtx^{\idxl\idxh}\|_{\mathrm{op}} \cdot \Gamma_{\idxl-1}$.
The map is linear in $\vMtx^{\idxl\idxh}$ and Lipschitz in $(\kMtx^{\idxl\idxh}, \qMtx^{\idxl\idxh})$ by composition with the softmax step.
By compactness of $\paramspace$ and the convex-combination structure, the Lipschitz constant is bounded by a product of operator norms and activation bounds---uniformly in $\strlen$.

\item \textbf{\Cref{eq:pre-ln}.}
Each $\mlpfunc^\idxl$ is a feedforward network with Lipschitz activations, and its Lipschitz constant in both inputs and parameters is bounded on compact parameter sets \citep{virmaux2018lipschitz}; restricting to $\paramspace$ therefore gives a uniform Lipschitz constant.

\item \textbf{\Cref{eq:layer-norm}.}
$\activation_\idxn^\idxl = \normfactor_\idxn^\idxl \cdot (\preLN_\idxn^\idxl - \meanFun(\preLN_\idxn^\idxl)\boldsymbol{1})$ where $\normfactor_\idxn^\idxl = 1/\sqrt{\varFun(\preLN_\idxn^\idxl) + \eps}$.
Write $\boldsymbol{u} \defeq \preLN_\idxn^\idxl - \meanFun(\preLN_\idxn^\idxl)\boldsymbol{1}$. The map $\preLN_\idxn^\idxl \mapsto \boldsymbol{u}$ is a fixed linear projection and hence $1$-Lipschitz.
With $\eps > 0$ bounded away from zero, the normalizer satisfies $\normfactor_\idxn^\idxl \leq\ 1/ \sqrt{\eps} < \infty$, so the multiplication $\boldsymbol{u} \mapsto \normfactor_\idxn^\idxl \cdot \boldsymbol{u}$ is $(1/\sqrt{\eps})$-Lipschitz in $\boldsymbol{u}$ and $\|\boldsymbol{u}\| $-Lipschitz in $\normfactor_\idxn^\idxl$.
Composing, layer normalization is Lipschitz in $\preLN_\idxn^\idxl$ (and hence in the parameters that feed it) with a constant that depends only on $\eps$ and the bound $\Gamma_\idxl$---crucially, independent of $\strlen$.
\end{enumerate}

\textbf{Composition and the readout.}
The total map $\param \mapsto \transformernetwork(\param, \str) = \weights^\top \activation_{\strlen+1}^{\tfnumlayer}(\param, \str) + \bias$ is the composition of $\tfnumlayer$ layers (each comprising one application of Eqs.~\eqref{eq:attn-score}--\eqref{eq:layer-norm}) on top of the symbol representation \eqref{eq:layer-0}, followed by the linear readout \eqref{eq:tf-output}. As a function of $(\weights, \bias)$, the readout has gradient $(\activation_{\strlen+1}^{\tfnumlayer}, 1)$; its parameter-Lipschitz constant is therefore at most $\sqrt{\|\activation_{\strlen+1}^{\tfnumlayer}\|^2 + 1} \leq \sqrt{\Gamma_\tfnumlayer^2 + 1}$, again $\strlen$-independent. Each of the per-layer Lipschitz constants established above is finite and $\strlen$-independent. Taking $C$ to be the product of these constants times the readout constant gives the bound \eqref{eq:tf-uniform-lipschitz} uniformly in $\str$.
\end{proof}

\section{Proof of \cref{cor:tf-lipschitz-unlearnable}}\label{app:tf-lipschitz-unlearnable}

\tfLipschitzUnlearnable*\hspace{0.5em}
\begin{proof}
Fix any $\margin > 0$ and $\param \in \paramspace$. We show that $\transformernetwork(\param, \cdot)$ fails to recognize $\functionf_\strlen$ at margin $\margin$ for every sufficiently large $\strlen$ where
\begin{equation}\label{eq:max-sens-condition}
\maxsensitivityfunc{\functionf_\strlen}{\strlen} \geq 1.
\end{equation}

Suppose, for some such $\strlen$, $\transformernetwork(\param, \cdot)$ does recognize $\functionf_\strlen$ at margin $\margin$. By \cref{eq:max-sens-condition}, 
there exist $\str \in \booleanset^\strlen$ and a position $\idxn \in [\strlen]$ with $\functionf_\strlen(\str) \neq \functionf_\strlen(\flip{\str}{\idxn})$. Margin recognition then forces $\transformernetwork(\param, \str)$ and $\transformernetwork(\param, \flip{\str}{\idxn})$ to lie on opposite sides of the margin, so
\begin{equation}
\left|\transformernetwork(\param, \str) - \transformernetwork(\param, \flip{\str}{\idxn})\right| \;\geq\; 2\margin.
\end{equation}
Writing $\transformernetwork(\param, \str) = \weights^\top \activation_{\strlen+1}^{\tfnumlayer}(\param, \str) + \bias$, the same difference equals $\weights^\top\left(\activation_{\strlen+1}^{\tfnumlayer}(\param, \str) - \activation_{\strlen+1}^{\tfnumlayer}(\param, \flip{\str}{\idxn})\right)$, so by Cauchy--Schwarz
\begin{equation}
2\margin \;\leq\; \|\weights\|_2\,\|\activation_{\strlen+1}^{\tfnumlayer}(\param, \str) - \activation_{\strlen+1}^{\tfnumlayer}(\param, \flip{\str}{\idxn})\|_2 \;\leq\; C(\paramspace, \eps)/\strlen,
\end{equation}
where the last inequality uses \cref{cor:influence-blowup} (with $C(\paramspace, \eps)$ a constant depending on $\paramspace$, $\eps$, and the architecture, but not on $\strlen$) and the boundedness of $\|\weights\|_2$ on $\paramspace$. Rearranging gives $\strlen \leq C(\paramspace, \eps)/(2\margin)$.
Hence, for every $\strlen > C(\param, \eps)/(2\margin)$ at which the max-sensitivity hypothesis \cref{eq:max-sens-condition} applies, $\transformernetwork(\param, \cdot)$ fails to recognize $\functionf_\strlen$ at margin $\margin$.
\end{proof}

\section{Proof of \cref{prop:finite-class}}\label{app:finite-class}

\finiteClassProp*\hspace{0.5em}
\begin{proof}
For each $\functionf \in \functionClass_\strlen$, fix a witness $\param_\functionf \in \paramspace$ realizing margin recognition.

\textit{Step 1: Witnesses are separated.} Let $\functionf_1 \neq \functionf_2$ in $\functionClass_\strlen$, and pick $\str^\star \in \booleanset^\strlen$ with $\functionf_1(\str^\star) \neq \functionf_2(\str^\star)$; without loss of generality $\functionf_1(\str^\star) = 1$ and $\functionf_2(\str^\star) = 0$, so margin recognition gives
\begin{equation}
\transformernetwork(\param_{\functionf_1}, \str^\star) - \transformernetwork(\param_{\functionf_2}, \str^\star) \geq 2\margin.
\end{equation}
By \cref{prop:tf-lipschitz}, $\transformernetwork$ is uniformly Lipschitz in $\param$ across all inputs with constant $C = C_{\paramspace, \eps}$, so \cref{eq:tf-uniform-lipschitz} applied at $\str^\star$ yields
\begin{equation}
2\margin \;\leq\; |\transformernetwork(\param_{\functionf_1}, \str^\star) - \transformernetwork(\param_{\functionf_2}, \str^\star)| \;\leq\; C\,\|\param_{\functionf_1} - \param_{\functionf_2}\|_2,
\end{equation}
so $\|\param_{\functionf_1} - \param_{\functionf_2}\|_2 \geq 2\margin/C$.

\textit{Step 2: Disjoint balls.} Set $r = \margin/C$. Let $\mathring{B}(\param_\functionf, r) \defeq \{\param \in \R^{\paramDim} : \|\param - \param_\functionf\|_2 < r\}$ denote the open ball of radius $r$ around $\param_\functionf$.
$\{\mathring{B}(\param_\functionf, r)\}_{\functionf \in \functionClass_\strlen}$ are pairwise disjoint: any $\param \in \mathring{B}(\param_{\functionf_1}, r) \cap \mathring{B}(\param_{\functionf_2}, r)$ would satisfy $\|\param_{\functionf_1} - \param\|_2 < r$ and $\|\param - \param_{\functionf_2}\|_2 < r$, so by the triangle inequality $\|\param_{\functionf_1} - \param_{\functionf_2}\|_2 < 2r = 2\margin/C$, contradicting Step 1.
Each ball is contained in $\mathring{B}(\param_0, \paramDiam + r)$ for any fixed $\param_0 \in \paramspace$: given $\param \in \mathring{B}(\param_\functionf, r)$, ball membership gives $\|\param - \param_\functionf\|_2 < r$, and $\|\param_\functionf - \param_0\|_2 \leq \paramDiam$ since $\param_\functionf, \param_0 \in \paramspace$, so the triangle inequality yields
\begin{equation}
\|\param - \param_0\|_2 \;\leq\; \|\param - \param_\functionf\|_2 + \|\param_\functionf - \param_0\|_2 \;<\; r + \paramDiam.
\end{equation} Thus, 
\begin{equation}\label{eq:disjoint-union-contained}
    \bigsqcup_{\functionf \in \functionClass_\strlen} \mathring{B}(\param_\functionf, r) \subseteq \mathring{B}(\param_0, \paramDiam + r).
\end{equation}

\textit{Step 3: Volume comparison.}
Write $V_{\paramDim}$ for the volume of the unit ball in $\R^{\paramDim}$. The disjoint union on the left of \cref{eq:disjoint-union-contained} has total Lebesgue volume $|\functionClass_\strlen| \cdot V_{\paramDim}\, r^{\paramDim}$, and the containing ball on the right has volume $V_{\paramDim}\, (\paramDiam + r)^{\paramDim}$, so
\begin{equation}
|\functionClass_\strlen| \cdot V_{\paramDim}\, r^{\paramDim} \;\leq\; V_{\paramDim}\, (\paramDiam + r)^{\paramDim}.
\end{equation}
Cancelling $V_{\paramDim}$ on both sides,
\begin{equation}
|\functionClass_\strlen| \cdot r^{\paramDim} \;\leq\; (\paramDiam + r)^{\paramDim}.
\end{equation}
Dividing by $r^{\paramDim}$ and substituting $r = \margin/C$,
\begin{align*}
|\functionClass_\strlen|
\;\leq\; \left(\frac{\paramDiam + r}{r}\right)^{\paramDim}
\;=\; \left(1 + \frac{\paramDiam}{r}\right)^{\paramDim}
\;=\; \left(1 + \frac{C\,\paramDiam}{\margin}\right)^{\paramDim}.
\end{align*}
The right-hand side depends only on $\paramspace$ (through $\paramDiam$), the architecture (through $C$), and the margin $\margin$---in particular it is independent of $\strlen$.
\end{proof}

\section{Proof of \cref{thm:main-result}} \label{app:proofs}

First, let us introduce some notation.
For any parameter vector $\param \in \mathbb{R}^{\paramDim}$, 
let $\param^\idxl$ denote the vector containing only the parameters of layer $\idxl$: the key, query, and value matrices $\param(\kMtx^{\idxl\idxh}), \param(\qMtx^{\idxl\idxh}), \param(\vMtx^{\idxl\idxh})$ for each head $1 \leq \idxh \leq \tfheadnum$, as well as the weights
$\param(\mlpW_1^\idxl), \param(\mlpW_2^\idxl)$
and biases $\param(\mlpBias_1^\idxl), \param(\mlpBias_2^\idxl)$ of the $\idxl^\text{th}$ layer MLP. Let $\param^\text{out}$ denote the vector containing the recognizer parameters $\param(\weights)$ and $\param(\bias)$ in the readout.

\subsection{The blowup is bounded with high probability}

\paragraph{Low standard deviation is improbable.}
For any $\vz \in \R^\hiddDim$, recall that its (empirical) variance is $\varFun(\vz) \defeq \frac{1}{\hiddDim}\left\|\vz - \meanFun(\vz)\boldsymbol{1}\right\|_2^2$ as per \cref{eq:layer-norm-variance}.

\begin{remark}\label{remark:sd-projection}
    The variance admits a concise expression via projection. Let %
    $U \defeq \{\vz \in \R^\hiddDim \mid \langle \vz, \boldsymbol{1} \rangle = 0\}$ be the $(\hiddDim-1)$-dimensional subspace of mean-zero vectors, and let $\boldsymbol{P} \defeq I_\hiddDim - \frac{1}{\hiddDim}\boldsymbol{1}\boldsymbol{1}^\top$ be the orthogonal projection onto $U$. Then for any $\vz \in \R^\hiddDim$,
    \begin{equation}
        \varFun(\vz) = \frac{1}{\hiddDim} \|\boldsymbol{P}\vz\|_2^2.
    \end{equation}
\end{remark}

We first investigate the effect of perturbing a fixed vector, showing that the resulting standard deviation is small only with low probability. We obtain this by expressing the probability as a ratio of volumes in the perturbation space and upper-bounding it.

\begin{lemma}\label{lemma:sd-small-prob}
    Let $\vy \in \R^\hiddDim$ be a vector, $\rho, \eta > 0$, and let $\boldsymbol{Z} \sim \mathrm{Unif}(B_\infty^\hiddDim(\rho))$.
    Then
    \begin{equation}
        \mathbb{P}\left(\varFun(\vy + \boldsymbol{Z}) < \eta^2\right) \leq \left(\frac{\sqrt{\hiddDim}\,\eta}{\rho}\right)^{\hiddDim-1}.
    \end{equation}
\end{lemma}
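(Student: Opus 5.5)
The plan is to reduce the event to a geometric one and compare volumes. By \cref{remark:sd-projection}, $\varFun(\vy+\boldsymbol{Z}) < \eta^2$ holds iff $\|\boldsymbol{P}(\vy+\boldsymbol{Z})\|_2 < \sqrt{\hiddDim}\,\eta$. Hence the event is that $\boldsymbol{Z}$ lies in the set
\[
A \defeq \{\vz \in \R^\hiddDim : \|\boldsymbol{P}\vz + \boldsymbol{P}\vy\|_2 < \sqrt{\hiddDim}\,\eta\}.
\]
This is a cylinder: a $(\hiddDim-1)$-dimensional Euclidean ball of radius $r \defeq \sqrt{\hiddDim}\,\eta$ in $U$, centered at $-\boldsymbol{P}\vy$, extended infinitely along the direction $\boldsymbol{1}$. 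The probability to bound is
\[
\frac{\lambda_\hiddDim\bigl(A \cap B_\infty^\hiddDim(\rho)\bigr)}{(2\rho)^\hiddDim}.
\]

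To bound the numerator, choose orthonormal coordinates $(\vu, t)$ with $\vu \in U$ and $t = \langle \vz, \boldsymbol{1}\rangle/\sqrt{\hiddDim}$. This change of variables is a rotation, so it preserves Lebesgue measure. Apply Fubini by slicing along $\vu$. For each fixed $\vu$, the line $\{\vu + t\boldsymbol{1}/\sqrt{\hiddDim}\}$ meets the cube $B_\infty^\hiddDim(\rho)$ in a segment. Every coordinate of a point on this line moves at rate $1/\sqrt{\hiddDim}$ in $t$, so the segment has length at most $2\rho\sqrt{\hiddDim}$. The set of $\vu$ with nonempty slice lies in the $(\hiddDim-1)$-ball of radius $r$. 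Therefore
\[
\lambda_\hiddDim\bigl(A\cap B_\infty^\hiddDim(\rho)\bigr) \le 2\rho\sqrt{\hiddDim}\cdot V_{\hiddDim-1}\, r^{\hiddDim-1},
\]
where $V_{\hiddDim-1}$ is the volume of the unit ball in $\R^{\hiddDim-1}$.

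Dividing by $(2\rho)^\hiddDim$ gives
\[
\mathbb{P}\bigl(\varFun(\vy+\boldsymbol{Z}) < \eta^2\bigr) \le \sqrt{\hiddDim}\, V_{\hiddDim-1}\, 2^{-(\hiddDim-1)} \left(\frac{\sqrt{\hiddDim}\,\eta}{\rho}\right)^{\hiddDim-1}.
\]
It remains to check that $c_\hiddDim \defeq \sqrt{\hiddDim}\, V_{\hiddDim-1}\, 2^{-(\hiddDim-1)} \le 1$. This is the one step I expect to need care. The inequality $V_{\hiddDim-1} \le 2^{\hiddDim-1}$ always holds, since the unit ball sits inside the cube $[-1,1]^{\hiddDim-1}$, but that alone loses the factor $\sqrt{\hiddDim}$. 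Instead I would use the explicit value $V_{d} = \pi^{d/2}/\Gamma(d/2+1)$. The values $\sqrt{\hiddDim}\,V_{\hiddDim-1}/2^{\hiddDim-1}$ equal $1$ at $\hiddDim=1$, $\sqrt2 \cdot 2/2 = \sqrt2$ at $\hiddDim = 2$, and $\sqrt3\,\pi/4 \approx 1.36$ at $\hiddDim = 3$. So the constant is slightly above $1$ for small $\hiddDim$, and the crude slice bound is not quite enough.

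To fix this I would sharpen the slice count. The cube projects onto $U$ inside the $(\hiddDim-1)$-ball of radius $\rho\sqrt{\hiddDim}$. Moreover, the projected cube's fibres have length $2\rho\sqrt{\hiddDim}$ only near the center, so averaging fibre lengths is more precise. A cleaner route is the following: the fibre-length function $\ell(\vu)$ integrates to $(2\rho)^\hiddDim$ over the projection, and its maximum is exactly $2\rho\sqrt{\hiddDim}$. The only change from the crude bound is to integrate $\ell(\vu)\le 2\rho\sqrt{\hiddDim}$ over the radius-$r$ ball, which yields the constant $c_\hiddDim$. If this still exceeds $1$ for small $\hiddDim$, I would absorb the discrepancy. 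The lemma is only used inside $\bigO{\cdot}$ bounds later, so I would state it with $c_\hiddDim$ and note that $c_\hiddDim \to 0$ rapidly as $\hiddDim$ grows, because $V_{\hiddDim-1}$ decays superexponentially. Alternatively, when $\sqrt{\hiddDim}\,\eta \ge \rho$ the claimed bound is $\ge 1$ and the statement is trivial, so only $r<\rho$ matters. In that regime I would try to use the radius restriction to improve the estimate, for example by bounding $\lambda_{\hiddDim-1}$ of a radius-$r$ ball by the $(\hiddDim-1)$-cube of side $2r$, i.e. by $(2r)^{\hiddDim-1}$, and tracking where the $\sqrt{\hiddDim}$ factor can be saved.
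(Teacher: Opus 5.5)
Your reduction is the same as the paper's: the projection $\boldsymbol{P}$, the cylinder of radius $\kappa=\sqrt{\hiddDim}\,\eta$ around the line $-\boldsymbol{P}\vy+\R\boldsymbol{1}$, and slicing along $\boldsymbol{1}$. Your computation is also correct. Fibres of the cube in direction $\boldsymbol{1}$ have Euclidean length at most $2\rho\sqrt{\hiddDim}$, so
\[
\mathbb{P}\bigl(\varFun(\vy+\boldsymbol{Z})<\eta^2\bigr)\;\le\;c_\hiddDim\Bigl(\frac{\sqrt{\hiddDim}\,\eta}{\rho}\Bigr)^{\hiddDim-1},\qquad c_\hiddDim\defeq\frac{\sqrt{\hiddDim}\,V_{\hiddDim-1}}{2^{\hiddDim-1}},
\]
where $V_{\hiddDim-1}$ is the volume of the unit ball in $\R^{\hiddDim-1}$. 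The paper obtains constant $1$ only because of a dropped Jacobian. Its Cavalieri step writes the volume as $\int_{-\rho}^{\rho}\lambda_{\hiddDim-1}(\cdot)\,\mathrm{d}s$ over the slices $U+s\boldsymbol{1}$, but adjacent slices are $\|\boldsymbol{1}\|_2\,\mathrm{d}s=\sqrt{\hiddDim}\,\mathrm{d}s$ apart, so a factor $\sqrt{\hiddDim}$ is lost. With that factor restored, the paper's argument (which also replaces the $(\hiddDim-1)$-ball by its enclosing cube of side $2\kappa$) gives $\sqrt{\hiddDim}\,(\sqrt{\hiddDim}\,\eta/\rho)^{\hiddDim-1}$. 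That is weaker than your bound.

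The obstacle you hit cannot be removed, because the statement is false for $\hiddDim\in\{2,3,4\}$.

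For $\hiddDim=2$ one has $\varFun(\vz)=(z_1-z_2)^2/4$. With $\vy=\boldsymbol{0}$ and $\eta\le\rho$ the probability is exactly $1-(1-\eta/\rho)^2=2\eta/\rho-\eta^2/\rho^2$. This exceeds the claimed $\sqrt{2}\,\eta/\rho$ whenever $\eta/\rho<2-\sqrt{2}$; for example, at $\eta/\rho=0.1$ it is $0.19>0.142$.

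For $\hiddDim\in\{3,4\}$, take $\vy=\boldsymbol{0}$ and $\kappa<\rho$. Every fibre over the radius-$\kappa$ ball in $U$ has length at least $\sqrt{\hiddDim}\,(2\rho-2\kappa)$, so the probability is at least $c_\hiddDim(\kappa/\rho)^{\hiddDim-1}(1-\kappa/\rho)$. This beats the claimed $(\kappa/\rho)^{\hiddDim-1}$ once $1-\kappa/\rho>1/c_\hiddDim$, and indeed $c_3=\sqrt{3}\,\pi/4\approx1.36$ and $c_4=\pi/3\approx1.05$. You overlooked that $c_4>1$ as well.

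So your crude fibre bound is asymptotically tight at $\vy=\boldsymbol{0}$, and none of your proposed repairs can work:
\begin{itemize}
\item re-averaging fibre lengths just reproduces $c_\hiddDim$;
\item relaxing the ball to a cube of side $2r$ inflates the constant to $\sqrt{\hiddDim}\ge c_\hiddDim$;
\item exploiting $r<\rho$ cannot help either.
\end{itemize}
Conversely, $c_1=1$ and $c_\hiddDim<1$ for every $\hiddDim\ge5$ ($c_5\approx0.69$, decreasing afterwards). So your argument, thanks to using the exact ball volume, already proves the lemma verbatim in those dimensions.

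The correct fix is your fallback: state the lemma with the factor $c_\hiddDim$, or simply $\sqrt{\hiddDim}$. This is harmless. The only use of the lemma, in \cref{lemma:x-delta-blowup-per-layer}, absorbs all factors depending only on $\hiddDim$ into $\bigO{\cdot}$, so the downstream results are unaffected.
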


\begin{proof}
Let $\kappa\defeq\sqrt{\hiddDim} \eta$ and $\vu \defeq \boldsymbol{P}\vy$, where $\boldsymbol{P}$ is defined as in \cref{remark:sd-projection}. 
Then we can equivalently express 
\begin{align}
    \varFun(\vy + \boldsymbol{Z}) < \eta^2 &\iff\frac{1}{\hiddDim} \|\boldsymbol{P}(\vy + \boldsymbol{Z})\|_2^2 < \eta^2 \\
    &\iff \left\| \boldsymbol{P}(\vy+\boldsymbol{Z})\right\|_2 < \sqrt{\hiddDim}\eta =\kappa \\ 
    &\iff \left\| \boldsymbol{P}\boldsymbol{Z} + \vu\right\|_2 < \kappa.
\end{align}
We can express the probability as a ratio of volumes
\begin{equation}\label{eq:vol-ratio}
        \mathbb{P}
        (\|\boldsymbol{P}\boldsymbol{Z} + \vu\|_2 < \kappa) 
        = \frac{\lambda_\hiddDim(\{\vz \in B_\infty^\hiddDim(\rho) \colon \|\boldsymbol{P}\vz + \vu\|_2 < \kappa\})}{\lambda_\hiddDim(B_\infty^\hiddDim(\rho))}.
\end{equation}

The vector
$\boldsymbol{1}$ of all ones generates a line which is the orthogonal complement of $U$ as in defined in \cref{remark:sd-projection}. Therefore, by the orthogonal decomposition $\mathbb{R}^\hiddDim = U \oplus U^\perp$, each $\vz \in \R^\hiddDim$ can be uniquely decomposed as 
    \begin{equation}\label{eq-decomp}
        \vz = \vv + s \boldsymbol{1} \quad \text{with } \vv \in U, \, s \in \mathbb{R}.
    \end{equation}
Since $\boldsymbol{P}$ is the projection onto $U$, $\vv=\boldsymbol{P}\vz$. In order to estimate the volume in the numerator of \cref{eq:vol-ratio}, define the set
\begin{equation}
    T(\vu,\kappa) \defeq \{\vz \in \mathbb{R}^\hiddDim\colon \|\vu + \boldsymbol{P}\vz \|_2 < \kappa\}.
\end{equation} 
One can visualize $T(\vu, \kappa)$ as an infinite $\hiddDim$-dimensional cylinder, along the axis given by the line $\{-\vu + s\boldsymbol{1} \mid s \in \R\}$, with radius $\kappa$. Then the numerator of \cref{eq:vol-ratio} is precisely the volume of $B_\infty^\paramDim(\rho) \cap T(\vu, \kappa)$. We can express this volume as an integral over $s \in \R$ using Cavalieri's principle as follows.
We slice $B_\infty^\hiddDim(\rho) \cap T(\vu,\kappa)$ by the affine hyperplanes $U + s\boldsymbol{1}$ for $s \in \R$. For each $s$, the cross-section is $\{\vv \in U \colon \|\vv + \vu\|_2 < \kappa \text{ and } \|\vv + s\boldsymbol{1}\|_\infty \leq \rho\}$, and the volume equals the integral of the $(\hiddDim-1)$-dimensional cross-sectional areas over $s$:
\begin{equation}
   \lambda_\hiddDim(B_\infty^\hiddDim(\rho) \cap T(\vu,\kappa)) 
            = \int_{-\infty}^\infty \lambda_{\hiddDim-1}\left(\{\vv \in U \colon \|\vv+\vu\|_2 < \kappa \text{ and } \|\vv+s\boldsymbol{1}\|_\infty \leq \rho\}\right) \mathrm{d}s.
\end{equation}

\begin{figure}
    \centering
    \includegraphics[width=0.5\linewidth]{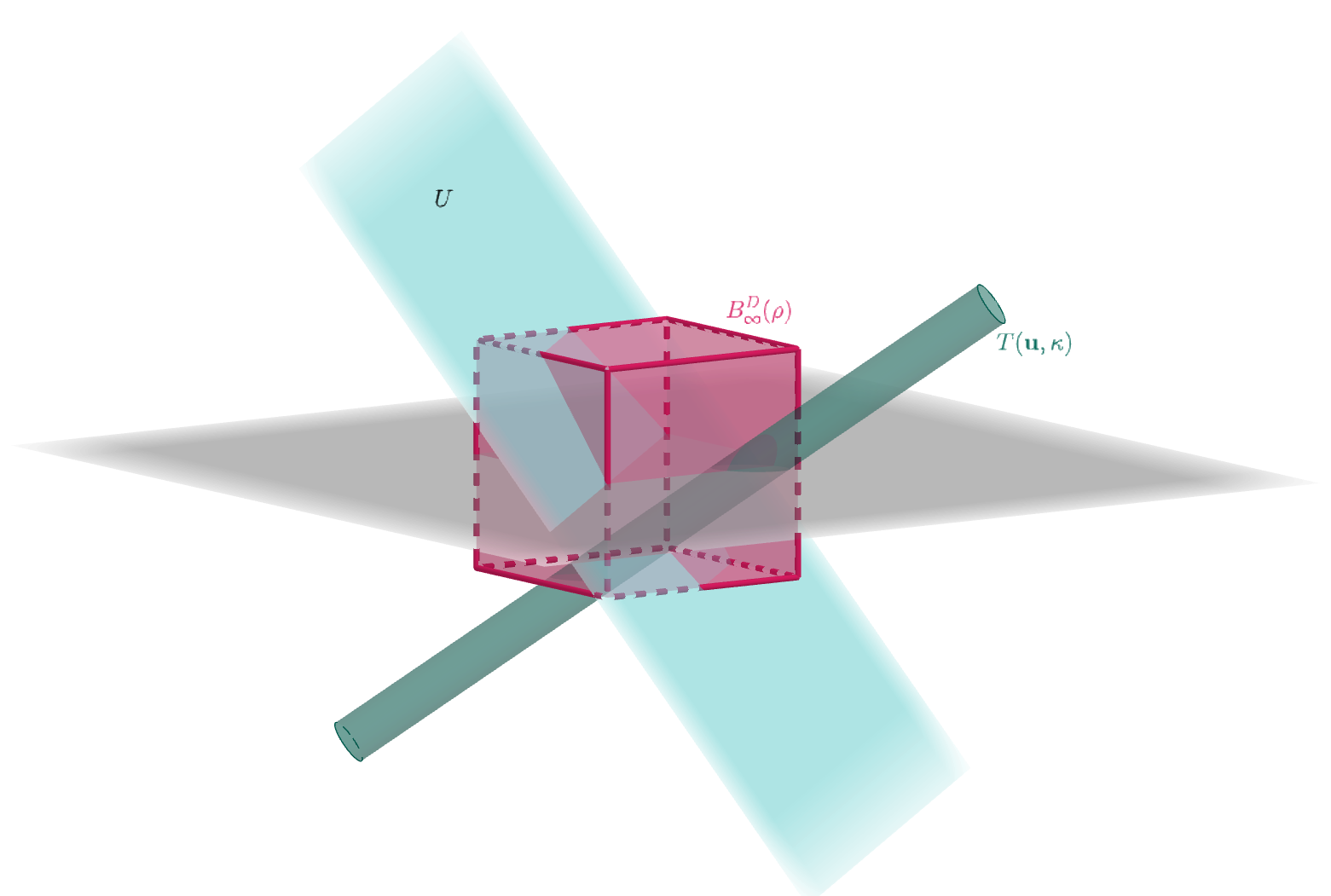}
    \caption{Geometric interpretation of low-variance events visualized for $\hiddDim=3$. The set $T(\vu, \kappa) \cap B_\infty^\hiddDim(\rho)$ captures the event of low variance, with the volume ratio given in \cref{eq:vol-ratio}.}
    \label{fig:cylinder-hypercube}
\end{figure}

The integrand vanishes for $|s| > \rho$: in this case, $\vv + s\boldsymbol{1}$ falls outside the hypercube for all $\vv \in U$ because $\|\vv + s\boldsymbol{1}\|_\infty \geq \|s\boldsymbol{1}\|_\infty = |s| > \rho$. For $|s| \leq \rho$, dropping the constraint $\|\vv+s\boldsymbol{1}\|_\infty \leq \rho$ only enlarges the domain of integration, giving the upper bound
\begin{align}
\lambda_\hiddDim(B_\infty^\hiddDim(\rho) \cap T(\vu,\kappa)) 
    &\leq \int_{-\rho}^{\rho} \lambda_{\hiddDim-1}\left(\{\vv \in U \colon \|\vv+\vu\|_2 < \kappa\}\right) \, \mathrm{d}s\\
    &=\lambda_{\hiddDim-1}(B_2^{\hiddDim-1}(\kappa)) \int_{-\rho}^{\rho} \mathrm{d}s\\
    &=2\rho \lambda_{\hiddDim-1}(B_2^{\hiddDim-1}(\kappa)) \\
    &\leq 2\rho \lambda_{\hiddDim-1}(B_2^{\hiddDim-1}(\kappa)) \\
    &=2\rho (2\kappa)^{\hiddDim-1}.
\end{align}
Dividing by $\lambda_\hiddDim(B_\infty^\hiddDim(\rho))=(2\rho)^\hiddDim$, we obtain
\begin{align}\label{eq-prob-bounded-by-ratio}
    \mathbb{P}(\|\boldsymbol{P}\boldsymbol{Z} + \vu\|_2 < \kappa) 
    &\leq \frac{2\rho  \, \lambda_{\hiddDim-1}(B_\infty^{\hiddDim-1}(\kappa))}{\lambda_\hiddDim(B_\infty^\hiddDim(\rho))} 
    = \left(\frac{\kappa}{\rho}\right)^{\hiddDim-1}
    = \left(\frac{\sqrt{\hiddDim}\eta}{\rho}\right)^{\hiddDim-1},
\end{align} concluding the proof.
\end{proof}

\paragraph{Layer-wise blowup bound.}    
For any $\strlen \in \N_0$, string $\str \in \booleanset^\strlen$, index set $S \subseteq [\strlen]$, and layer $\idxl \in [\tfnumlayer]$, denote by $E^\idxl_{\strlen,\str, S, \zeta}$ the event
    \begin{equation}\label{eq:per-layer-blowup-event}
    E^\idxl_{\strlen,\str, S, \zeta} \defeq
    \left\{ \tilde{\param} \in \paramspace \colon \perLayerBlowup^{\idxl}(\tilde{\param}, \str') \leq 1+\strlen^{\zeta/(2\tfnumlayer)}
        \quad \forall \str' \in \hammingNbhd_S(\str)
        \right\} \subset \paramspace
    \end{equation}
    that
    the blowup at layer $\idxl$ is bounded as $\strlen^{\zeta/(2\tfnumlayer)}$ on $\str$ and on $\flip{\str}{\idxn}$ for all $\idxn \in S$ simultaneously.

We now examine the effect of randomly perturbing the last bias of layer $\idxl$, given that all preceding perturbations are fixed. Applying \cref{lemma:sd-small-prob}, the variance of $\preLN_\idxn^\idxl$ is small only with low probability, so the normalizers $\normfactor_\idxn^\idxl$ and the blowup $\perLayerBlowup^\idxl$ at this layer are bounded with high probability.

\begin{definition}
    For a perturbation vector $\paramPert = \left[(\paramPert^1)^\top, \ldots, (\paramPert^\tfnumlayer)^\top, (\paramPert^\text{out})^\top\right]^\top$, denote by
    \begin{equation}
    \paramPert^{<\idxl} \defeq \left[(\paramPert^1)^\top, \ldots, (\paramPert^{\idxl-1})^\top,  
    \paramPert(\kMtx^{\idxl\idxh})^\top,
    \cdots,
    \paramPert(\mlpW_2^\idxl)^\top,
    \boldsymbol{0}^\top,
    \boldsymbol{0}^\top, \ldots, \boldsymbol{0}^\top\right]^\top
\end{equation} the vector only containing the perturbations to layers up to $\idxl-1$ and to the $\idxl^\text{th}$ layer excluding the last bias,
and by
\begin{equation}
    \paramPert^{\leq\idxl} \defeq \left[(\paramPert^1)^\top, \ldots, (\paramPert^{\idxl-1})^\top,  
    \paramPert(\kMtx^{\idxl\idxh})^\top,
    \cdots,
    \paramPert(\mlpW_2^\idxl)^\top,
    \paramPert(\mlpBias_2^\idxl)^\top, \boldsymbol{0}^\top, \ldots, \boldsymbol{0}^\top\right]^\top
\end{equation} the vector containing the perturbations to the entire first $\idxl$ layers. 
The same notation applies to fixed realizations $\paramPertReal$.
\end{definition}

\begin{lemma}\label{lemma:x-delta-blowup-per-layer}
Fix $\param \in \paramspace$, and let $\transformernetwork(\param, \cdot)$ be a $\hiddDim$-dimensional transformer with $\tfnumlayer$ layers, as defined in \cref{sec:transformer}.
Further, let $\str \in \booleanset^\strlen$, and $S \subseteq [\strlen]$ an index set with $|S| = k$.

Fix a layer $\idxl \in [\tfnumlayer]$,  
fix a realization $\paramPert^{< \idxl} = \paramPertReal^{< \idxl}$,
and draw the remaining perturbations $\paramPert(\mlpBias_2^\idxl), \paramPert^{\idxl + 1}, \ldots, \paramPert^\tfnumlayer, \paramPert^\text{out}$ uniformly at random. Then
\begin{equation}\label{eq:x-delta-per-layer-blowup-per-layer}
            \mathbb{P}
            \left( \param + \paramPert \in 
            E^\idxl_{\strlen,\str, S, \zeta} 
            \mid
            \paramPert^{< \idxl} = \paramPertReal^{< \idxl}
            \right)
            \geq 1 - (k+1)\left(\frac{1}{\rho}\right)^{\hiddDim - 1}\bigO{\strlen^{1 - (\hiddDim-1)\zeta/(2\tfnumlayer)}}.
        \end{equation}
    \end{lemma}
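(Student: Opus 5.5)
Conditioning on $\paramPert^{<\idxl}=\paramPertReal^{<\idxl}$ fixes everything that feeds into the pre-layer-norm vectors at layer $\idxl$ except the last MLP bias. Since $\activation^{\idxl-1}$ depends only on layers $1,\dots,\idxl-1$, and the attention and MLP of layer $\idxl$ (apart from $\mlpBias_2^\idxl$) are fixed by the conditioning, we can write, for every string $\str'$ and position $\idxn$,
\begin{equation}
\preLN_\idxn^\idxl(\param+\paramPert,\str') = \vy_\idxn(\str') + \paramPert(\mlpBias_2^\idxl),
\end{equation}
where $\vy_\idxn(\str')\in\R^\hiddDim$ is deterministic given $\paramPertReal^{<\idxl}$ and $\paramPert(\mlpBias_2^\idxl)\sim\uniform(B_\infty^\hiddDim(\rho))$. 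The perturbations of the later layers and of the readout do not affect $\perLayerBlowup^\idxl$, so they can be ignored. This reduces the event $E^\idxl_{\strlen,\str,S,\zeta}$ to an event about one uniformly random additive shift applied to finitely many fixed vectors.

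Next we translate the blowup condition into a variance condition. With $\eps=0$, $\perLayerBlowup^\idxl(\cdot,\str')\le 1+\strlen^{\zeta/(2\tfnumlayer)}$ holds iff $\normfactor_\idxn^\idxl\le \strlen^{\zeta/(2\tfnumlayer)}$ for all $\idxn\in[\strlen+1]$. That is, it holds iff $\varFun(\preLN^\idxl_\idxn)\ge \eta^2$ with $\eta\defeq \strlen^{-\zeta/(2\tfnumlayer)}$. The complement of $E^\idxl$ is therefore contained in the union, over $\str'\in\hammingNbhd_S(\str)$ (at most $k+1$ strings) and $\idxn\in[\strlen+1]$, of the events $\{\varFun(\vy_\idxn(\str')+\paramPert(\mlpBias_2^\idxl))<\eta^2\}$.

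We bound each of these events with \cref{lemma:sd-small-prob}, applied with $\vy=\vy_\idxn(\str')$ and $\boldsymbol Z=\paramPert(\mlpBias_2^\idxl)$. This gives probability at most $(\sqrt\hiddDim\,\eta/\rho)^{\hiddDim-1}=\hiddDim^{(\hiddDim-1)/2}\rho^{-(\hiddDim-1)}\strlen^{-(\hiddDim-1)\zeta/(2\tfnumlayer)}$. A union bound over the $(k+1)(\strlen+1)$ pairs then yields failure probability at most $(k+1)(1/\rho)^{\hiddDim-1}\bigO{\strlen^{1-(\hiddDim-1)\zeta/(2\tfnumlayer)}}$, using $\strlen+1=\bigO{\strlen}$. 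This is the claimed bound.

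The main point requiring care is that the bias $\mlpBias_2^\idxl$ enters $\preLN_\idxn^\idxl$ additively and identically for every position and every string. This holds because the residual stream adds $\mlpfunc^\idxl$, which ends in $+\mlpBias_2^\idxl$, so the shift is exactly the same uniform vector in all cases. It is also what makes conditioning on everything else legitimate: the conditional law of $\paramPert(\mlpBias_2^\idxl)$ remains uniform on the cube because the coordinates of $\paramPert$ are independent. One subtlety is that the perturbed parameter may leave $\paramspace$. The event $E^\idxl$ is defined as a subset of $\paramspace$, but the bound concerns only the blowup inequality, so we interpret membership through that inequality (or assume $\param+\paramPert\in\paramspace$ as arranged later by the cube covering). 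We do not expect this to affect the argument.
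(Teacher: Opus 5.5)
Your proposal is correct and follows essentially the same route as the paper: it isolates $\paramPert(\mlpBias_2^\idxl)$ as a uniform additive shift of the conditionally fixed pre-layer-norm vectors, applies \cref{lemma:sd-small-prob} with $\eta = \strlen^{-\zeta/(2\tfnumlayer)}$, union-bounds over the $\strlen+1$ positions and the $k+1$ strings of $\hammingNbhd_S(\str)$, and converts the variance lower bound into the bound on $\perLayerBlowup^\idxl$. The only differences are minor: the paper does that conversion via $\normfactor_\idxn^\idxl \le 1/\sqrt{\varFun(\preLN_\idxn^\idxl)}$, which covers any $\eps \ge 0$, whereas you phrase it for $\eps = 0$; and the subtlety you flag about $\param+\paramPert$ possibly leaving $\paramspace$ is real, but the paper's proof silently resolves it the same way you propose, by reading membership in $E^\idxl_{\strlen,\str,S,\zeta}$ through the blowup inequality alone.
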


\begin{proof}[Proof of \cref{lemma:x-delta-blowup-per-layer}]
Consider $\preLN_{\idxn}^{\idxl}(\param+\paramPert^{< \idxl}, \tilde{\str})$ and $\preLN_{\idxn}^{\idxl}(\param+\paramPert^{\leq \idxl}, \tilde{\str})$, as defined in \cref{eq:pre-ln}, at layer $\idxl$ on an arbitrary input string $\tilde{\str}$, at any position $\idxn$. Since we have fixed the value of every perturbation preceding $\paramPert(\mlpBias_2^\idxl)$, $\preLN_{\idxn}^{\idxl}(\param+\paramPert^{< \idxl}, \tilde{\str})$ is fixed while $\preLN_{\idxn}^{\idxl}(\param+\paramPert^{\leq \idxl}, \tilde{\str})$ is random: specifically, 
\cref{eq:pre-ln} and \cref{eq:mlp} together show that
\begin{equation}\label{eq:pre-ln-decomposition}
\preLN_{\idxn}^{\idxl}(\param+\paramPert^{\leq \idxl}, \tilde{\str})
=\preLN_{\idxn}^{\idxl}(\param+\paramPert^{< \idxl}, \tilde{\str})
+\paramPert(\mlpBias_2^\idxl).
\end{equation}  
Note that $\paramPert^{\idxl+1}, \ldots, \paramPert^\tfnumlayer, \paramPert^\text{out}$ do not affect $\preLN_{\idxn}^{\idxl}(\param+\paramPert^{\leq \idxl}, \tilde{\str})$, and the random coordinates of $\paramPert$ are independent. 
Hence we can directly invoke \Cref{lemma:sd-small-prob} with $\vy \defeq \preLN_{\idxn}^{\idxl}(\param+\paramPert^{< \idxl}, \tilde{\str})$, 
$\eta \defeq \strlen^{-\zeta/(2\tfnumlayer)}$,
and $\boldsymbol{Z} \defeq \paramPert(\mlpBias_2^\idxl)$ to obtain that
\begin{align}
\mathbb{P}\Big(
\varFun(\preLN_{\idxn}^{\idxl}(\param+\paramPert^{\leq \idxl}, \tilde{\str}))
&< \strlen^{-\zeta/\tfnumlayer}
\mid
            \paramPert^{< \idxl} = \paramPertReal^{< \idxl}
\Big)\\
&=
\mathbb{P}
\left(
\varFun(
\preLN_{\idxn}^{\idxl}(\param+\paramPert^{< \idxl}, \tilde{\str})
+\paramPert(\mlpBias_2^\idxl)
) < \strlen^{-\zeta/\tfnumlayer}
\mid
            \paramPert^{< \idxl} = \paramPertReal^{< \idxl}
\right)
\justification{\cref{eq:pre-ln-decomposition}}\\
&\leq
\left(\frac{\sqrt{\hiddDim}\strlen^{-\zeta/(2\tfnumlayer)}}{\rho}\right)^{\hiddDim-1}\justification{\cref{lemma:sd-small-prob}}
\end{align} 
for any fixed $\tilde{\str}$ and any position $\idxn$.
A union bound over all $\idxn \in [\strlen+1]$ gives
\begin{align}\label{eq:exists-n-prob-sd-small}
    \mathbb{P}
    \Big(\exists \idxn \in [\strlen +1]\colon 
    \varFun(\preLN_{\idxn}^{\idxl}(\param+\paramPert^{\leq \idxl}, \tilde{\str}))
&< \strlen^{-\zeta/\tfnumlayer}
\mid
            \paramPert^{< \idxl} = \paramPertReal^{< \idxl}
    \Big)\\
    &\leq (\strlen+1)
    \left(\frac{\sqrt{\hiddDim}\strlen^{-\zeta/(2\tfnumlayer)}}{\rho}\right)^{\hiddDim-1}
    = 
    \left(\frac{1}{\rho}\right)^{\hiddDim - 1}
    \bigO{\strlen^{1 - (\hiddDim-1)\zeta/(2\tfnumlayer)}}
\end{align}
where the factor depending only on $\hiddDim$ is absorbed into $\bigO{\cdot}$. For any string $\tilde{\str}$, \cref{eq:exists-n-prob-sd-small} provides an upper bound on the probability that the variance is close to zero at some position. Applying this to each $\str' \in \hammingNbhd_S(\str)$ separately therefore gives the lower bound
\begin{align}
    \mathbb{P}
    &\Big(\forall \str' \in \hammingNbhd_S(\str),\, \forall \idxn \in [\strlen+1] \quad \varFun(\preLN_{\idxn}^{\idxl}(\param+\paramPert^{\leq \idxl}, \str'))
    \geq \strlen^{-\zeta/\tfnumlayer}
    \mid
            \paramPert^{< \idxl} = \paramPertReal^{< \idxl}
    \Big)\\
    &=1-\mathbb{P}
    \Big(\exists \str' \in \hammingNbhd_S(\str),\, \exists \idxn \in [\strlen+1] \colon
    \varFun(\preLN_{\idxn}^{\idxl}(\param+\paramPert^{\leq \idxl}, \str'))
    \geq \strlen^{-\zeta/\tfnumlayer}
    \mid
            \paramPert^{< \idxl} = \paramPertReal^{< \idxl}
    \Big)
    \justification{complement event}\\
    &\geq 1-\sum_{\str' \in \hammingNbhd_S(\str)} \mathbb{P}
    \Big(\exists \idxn \in [\strlen + 1]
    \colon \varFun(\preLN_{\idxn}^{\idxl}(\param+\paramPert^{\leq \idxl}, \str'))
    < \strlen^{-\zeta/\tfnumlayer}
    \mid
            \paramPert^{< \idxl} = \paramPertReal^{< \idxl}
    \Big)\justification{union bound}\\
    &\geq 1- (k+1)
    \left(\frac{1}{\rho}\right)^{\hiddDim - 1}
    \bigO{\strlen^{1 - (\hiddDim-1)\zeta/(2\tfnumlayer)}}.
    \justification{\cref{eq:exists-n-prob-sd-small} for $\tilde{\str} \defeq \str'$}
\end{align}

It remains to show that if $\paramPert^{<\idxl} = \paramPertReal^{<\idxl}$ and $\varFun(\preLN_{\idxn}^{\idxl}(\param+\paramPert^{\leq\idxl}, \str')) \geq \strlen^{-\zeta/\tfnumlayer}$ for all $\str' \in \hammingNbhd_S(\str)$ and all $\idxn \in [\strlen+1]$, then $\param+\paramPert^{\leq\idxl} \in E^\idxl_{\strlen,\str,S,\zeta}$. 
For such $\paramPert$,
\begin{align}
    \normfactor_\idxn^{\idxl}(\param+\paramPert^{\leq\idxl}, \str') = \frac{1}{\sqrt{\varFun(\preLN_{\idxn}^{\idxl}(\param+\paramPert^{\leq\idxl}, \str')) + \eps}} \leq \frac{1}{\sqrt{\varFun(\preLN_{\idxn}^{\idxl}(\param+\paramPert^{\leq\idxl}, \str'))}} \leq \strlen^{\zeta/(2\tfnumlayer)}
\end{align} holds  for all $\str' \in \hammingNbhd_S(\str)$ and all $\idxn \in [\strlen +1]$.
Hence
\begin{equation}
\perLayerBlowup^{\idxl}(\param+\paramPert^{\leq\idxl}, \str')
= \max_{\idxn \in [\strlen+1]} \{1+\normfactor_\idxn^{\idxl}(\param+\paramPert^{\leq\idxl}, \str')\}
\leq 1+\strlen^{\zeta/(2\tfnumlayer)}
\end{equation}
for all $\str' \in \hammingNbhd_S(\str)$, which is precisely $\param+\paramPert^{\leq\idxl} \in E^\idxl_{\strlen,\str,S,\zeta}$.
\end{proof}

\begin{remark}
    The proof goes through for any $\eps \geq 0$. For $\eps > 0$, the normalization factor $\normfactor_\idxn^{\idxl}$ is deterministically upper-bounded by $1/\sqrt{\eps}$, independent of $\strlen$, which is stronger than the probabilistic bound derived here. The probabilistic argument is therefore only necessary in the $\eps = 0$ case.
\end{remark}

\paragraph{Full blowup bound.}
We prove the following stronger statement. Then the claim we need follows as a special case.

\begin{lemma}\label{lemma:x-delta-blowup-fixed-S}
    Let $\transformernetwork$ be a $\hiddDim$-dimensional transformer with $\tfnumlayer$ layers, as defined in \cref{sec:transformer}. Let $\str \in \booleanset^\strlen$, and let $S \subseteq [\strlen]$ with $|S| = k$ be a fixed index set. Fix $\zeta \in \left(2\tfnumlayer/(\hiddDim-1),\,1\right)$. Fix $\rho > 0$, and let $\paramPert \sim \uniform(B_\infty^\paramDim(\rho))$ be a perturbation. Then, with probability at least $1- (k+1)
    \left(\frac{1}{\rho}\right)^{\hiddDim - 1}
    \bigO{\strlen^{1 - (\hiddDim-1)\zeta/(2\tfnumlayer)}}$,
    \begin{equation}\label{eq:x-delta-per-layer-blowup-fixed-S}
        \perLayerBlowup^{\idxl}(\param+\paramPert, \str') \leq 1+\strlen^{\zeta/(2\tfnumlayer)}
    \end{equation}
    for all $\str' \in \hammingNbhd_S(\str)$ and all $\idxl \in [\tfnumlayer]$.
\end{lemma}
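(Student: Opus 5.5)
The plan is to combine the per-layer bound of \cref{lemma:x-delta-blowup-per-layer} over layers by a union bound. Write $F^\idxl$ for the complement of the event $\param + \paramPert \in E^\idxl_{\strlen,\str,S,\zeta}$. We must show
\begin{equation}
\mathbb{P}\bigl(\textstyle\bigcup_{\idxl=1}^{\tfnumlayer} F^\idxl\bigr) \leq \tfnumlayer\,(k+1)\left(\frac{1}{\rho}\right)^{\hiddDim-1}\bigO{\strlen^{1-(\hiddDim-1)\zeta/(2\tfnumlayer)}}.
\end{equation}
Since $\tfnumlayer$ is a fixed architectural constant, it is absorbed into the $\bigO{\cdot}$. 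Subadditivity gives $\mathbb{P}(\bigcup_\idxl F^\idxl) \le \sum_\idxl \mathbb{P}(F^\idxl)$, so it suffices to bound each unconditional $\mathbb{P}(F^\idxl)$ separately.

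For a fixed layer $\idxl$, the unconditional probability comes from the conditional one by the tower property. Because $\paramPert \sim \uniform(B_\infty^\paramDim(\rho))$, all coordinates are i.i.d.\ uniform on $[-\rho,\rho]$. Hence the block $\paramPert^{<\idxl}$, which holds the coordinates of layers $1,\dots,\idxl-1$ and all of layer $\idxl$ except $\mlpBias_2^\idxl$, is independent of the remaining coordinates. Conditioning on $\paramPert^{<\idxl} = \paramPertReal^{<\idxl}$ therefore leaves $\paramPert(\mlpBias_2^\idxl)$ and the later blocks uniform, which is exactly the setting of \cref{lemma:x-delta-blowup-per-layer}. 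We then write
\begin{equation}
\mathbb{P}(F^\idxl) = \int \mathbb{P}\bigl(F^\idxl \mid \paramPert^{<\idxl} = \paramPertReal^{<\idxl}\bigr)\,\mathrm{d}\mathbb{P}_{\paramPert^{<\idxl}}(\paramPertReal^{<\idxl}).
\end{equation}
The integrand is bounded by \cref{lemma:x-delta-blowup-per-layer} uniformly in $\paramPertReal^{<\idxl}$, because the hidden constant there depends only on $\hiddDim$ (it comes from $(\strlen+1)\hiddDim^{(\hiddDim-1)/2}$ in \cref{lemma:sd-small-prob}), so the same bound holds for $\mathbb{P}(F^\idxl)$.

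One subtlety must be checked. The event $E^\idxl$ depends only on $\preLN^\idxl$, hence only on the first $\idxl$ layers of the perturbed parameters. So membership of $\param+\paramPert$ coincides with membership of $\param + \paramPert^{\le \idxl}$, which is what the per-layer lemma controls. Measurability follows from continuity of $\normfactor$ where the variance is positive, so the integral above is legitimate.

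The main obstacle is the hypothesis $\param + \paramPert \in \paramspace$ implicit in $E^\idxl \subset \paramspace$. In this lemma it can be handled by reading $E^\idxl$ as the corresponding condition on $\R^\paramDim$. Note that \cref{lemma:sd-small-prob} never uses compactness of $\paramspace$, so the argument is unaffected. The restriction to $\paramspace$ is then deferred to the covering argument of \cref{lemma:x-theta-output}. Finally, the statement of \cref{lemma:x-delta-blowup} follows by taking any fixed $S$ with $|S|=k$ as the witness, and \cref{eq:x-delta-blowup} follows by multiplying the $\tfnumlayer$ per-layer bounds, giving $(1+\strlen^{\zeta/(2\tfnumlayer)})^\tfnumlayer = \bigO{\strlen^{\zeta/2}}$.
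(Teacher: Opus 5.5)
Your proposal is correct and follows essentially the same route as the paper. You drop the conditioning in \cref{lemma:x-delta-blowup-per-layer} via the law of total probability, using that the bound is uniform in $\paramPertReal^{<\idxl}$, then take a union bound over the $\tfnumlayer$ layers and absorb $\tfnumlayer$ into the $\bigO{\cdot}$; your remarks on independence of the hypercube coordinates, on $E^\idxl$ depending only on the first $\idxl$ layers, and on reading $E^\idxl$ in $\R^\paramDim$ rather than $\paramspace$ spell out points the paper leaves implicit.
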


\begin{proof}[Proof of \Cref{lemma:x-delta-blowup-fixed-S}]
Using the notation from \Cref{eq:per-layer-blowup-event}, our goal is to show that
\begin{equation}
\param + \paramPert \in \bigcap_{\idxl \in [\tfnumlayer]} E^\idxl_{\strlen,\str, S, \zeta}    
\end{equation}
with probability at least $1- (k+1)
\left(\frac{1}{\rho}\right)^{\hiddDim - 1}
\bigO{\strlen^{1 - (\hiddDim-1)\zeta/(2\tfnumlayer)}}$.
        
We first show that, for any fixed $\idxl \in [\tfnumlayer]$, we can drop the conditioning and still maintain the bound: i.e., that \cref{lemma:x-delta-blowup-per-layer} implies
        \begin{equation}\label{eq:per-layer-event-prob}
            \mathbb{P}
            \left(
            \param + \paramPert \in  E^\idxl_{\strlen,\str, S, \zeta}
            \right)
            \geq 
            1- (k+1)
\left(\frac{1}{\rho}\right)^{\hiddDim - 1}
\bigO{\strlen^{1 - (\hiddDim-1)\zeta/(2\tfnumlayer)}}.
        \end{equation} 
Let $\mathbb{P}$ denote the probability measure associated with the uniform distribution on the hypercube. Then we can express the left-hand side, using the law of total probability, as
        \begin{align}
            \mathbb{P}&
            \left(
            \param + \paramPert \in  E^\idxl_{\strlen,\str, S, \zeta}
            \right)
            =\mathbb{P}\left(
            \param + \paramPert^{\leq \idxl} \in  E^\idxl_{\strlen,\str, S, \zeta}
            \right)
            \\
            &=
            \int_
            {\paramPertReal^{< \idxl}}
            \left(\int_
            {\paramPertReal(\mlpBias_2^\idxl)}
            \mathbbm{1}\left\{
            \param + \paramPertReal^{\leq \idxl} \in  E^\idxl_{\strlen,\str, S, \zeta}
            \right\} 
            d\mathbb{P}(\paramPertReal(\mlpBias_2^\idxl))\right)
            d\mathbb{P}(
            \paramPertReal^{< \idxl}
            ) \\
            &=\int_
            {\paramPertReal^{< \idxl}}
    \mathbb{P}
            \left(
            \param + \paramPertReal^{\leq \idxl} \in  E^\idxl_{\strlen,\str, S, \zeta} 
            \mid
            \paramPert^{< \idxl} = \paramPertReal^{< \idxl}
            \right)
            d\mathbb{P}(\paramPertReal^{< \idxl})\\
    &\geq 1 - (k+1)\left(\frac{1}{\rho}\right)^{\hiddDim - 1}\bigO{\strlen^{1 - (\hiddDim-1)\zeta/(2\tfnumlayer)}}, \justification{\cref{lemma:x-delta-blowup-per-layer}}
\end{align} since we integrate with respect to a probability measure. Finally, a union bound over $\idxl \in [\tfnumlayer]$ gives
        \begin{align}
            \mathbb{P}&
            \left(\param + \paramPert \in \bigcap_{\idxl \in [\tfnumlayer]} E^\idxl_{\strlen,\str, S, \zeta}
            \right)
            = 1 - \mathbb{P}
            \left(\exists\idxl \in [\tfnumlayer]: \, 
            \param + \paramPert \notin E^\idxl_{\strlen,\str, S, \zeta}
            \right)\\
            &\geq 1 - \sum_{\idxl \in [\tfnumlayer]} \mathbb{P}
            \left(\param + \paramPert \notin E^\idxl_{\strlen,\str, S, \zeta}
            \right)
            \geq 1 - \tfnumlayer(k+1)\left(\frac{1}{\rho}\right)^{\hiddDim - 1}\bigO{\strlen^{1 - (\hiddDim-1)\zeta/(2\tfnumlayer)}}
            \justification{\cref{eq:per-layer-event-prob}}\\
            &= 1 - (k+1)\left(\frac{1}{\rho}\right)^{\hiddDim - 1}\bigO{\strlen^{1 - (\hiddDim-1)\zeta/(2\tfnumlayer)}}.
            \justification{$\tfnumlayer$ absorbed into $\bigO{\cdot}$}
        \end{align}   

\end{proof}

\XDeltaBlowupLemma*\hspace{0.5em}

\begin{proof}[Proof of \Cref{lemma:x-delta-blowup}]
    Immediate from \cref{lemma:x-delta-blowup-fixed-S}, since the bound holds for any fixed $S$.
\end{proof}

\subsection{The output is stable with high probability}

\XDeltaOutputCorollary*\hspace{0.5em}
\begin{proof}[Proof of \cref{cor:x-delta-output}]
\Cref{lemma:x-delta-blowup} shows that with probability at least $1- (k+1)
\left(\frac{1}{\rho}\right)^{\hiddDim - 1}
\bigO{\strlen^{1 - (\hiddDim-1)\zeta/(2\tfnumlayer)}}$, there exists a set $S \subseteq [\strlen]$ of $k$ indices such that 
$\param + \paramPert \in \bigcap_{\idxl \in [\tfnumlayer]} E^\idxl_{\strlen,\str, S, \zeta}$.

Let $\paramPert$ be such that $\param + \paramPert \in \bigcap_{\idxl \in [\tfnumlayer]} E^\idxl_{\strlen,\str, S, \zeta}$. Since each per-layer blowup is bounded as $\perLayerBlowup^{\idxl}(\param+\paramPert, \str') \leq 1+\strlen^{\zeta/(2\tfnumlayer)}$ on all $\str' \in \hammingNbhd_S(\str)$, the overall blowup is bounded as
\begin{equation}
\Blowup^\tfnumlayer(\param + \paramPert, \str') \defeq \prod_{\idxl=1}^\tfnumlayer \perLayerBlowup^{\idxl}(\param + \paramPert, \str') \leq
(1+\strlen^{\zeta/(2\tfnumlayer)})^\tfnumlayer = \bigO{\strlen^{\zeta/2}}.
\end{equation}
\Cref{lemma:influence-blowup} then shows that
\begin{align}  \influence_\idxn(\activation_\idxm^{\tfnumlayer}, \str) 
        \leq  C   \Blowup^\tfnumlayer(\param+\paramPert, \str) \Blowup^\tfnumlayer(\param+\paramPert, \flip{\str}{\idxn})  
        \left(
        \frac{1}{\strlen} + \delta_{\idxm,\idxn}
        \right)=\bigO{\strlen^\zeta}\left(
        \frac{1}{\strlen} + \delta_{\idxm,\idxn}
        \right)
\end{align} for any $\idxn \in S$ and $\idxm \in [\strlen+1]$. In particular,
    \begin{equation}
        \influence_\idxn(\activation_{\strlen+1}^{\tfnumlayer}, \str)=\left\|\activation_{\strlen+1}^{\tfnumlayer}(\str) - \activation_{\strlen+1}^{\tfnumlayer}(\flip{\str}{\idxn})\right\|_2 = \bigO{\strlen^{\zeta - 1}},    
    \end{equation} since we never flip the \eos token, and the Kronecker delta term vanishes. Finally, since the readout layer is Lipschitz, this yields
    \begin{align}
        \left|\transformernetwork(\param+\paramPert, \str)-\transformernetwork(\param+\paramPert,\flip{\str}{ \idxn})\right|
        &=\left|\weights^\top (\activation_{\strlen+1}^{\tfnumlayer}(\str) - \activation_{\strlen+1}^{\tfnumlayer}(\flip{\str}{\idxn}))\right|\\
        &\leq \left\|\weights\right\|_2 \left\|\activation_{\strlen+1}^{\tfnumlayer}(\str) - \activation_{\strlen+1}^{\tfnumlayer}(\flip{\str}{ \idxn}))\right\|_2
        =\bigO{\strlen^{\zeta - 1}}   
    \end{align} as we wanted.
    
\end{proof}

\subsection{From perturbation to random transformer}
\tfLowSensitivityFixedString*\hspace{0.5em}
\begin{proof}[Proof of \cref{lemma:x-theta-output}]

Let 
\begin{equation}
Q_{\strlen, \str, k, \zeta} \defeq \left\{ \tilde{\param} \in \R^\paramDim \colon
\exists S \subseteq [\strlen], \ |S|=k \colon 
| \transformernetwork(\tilde{\param}, \str) - \transformernetwork(\tilde{\param}, \flip{\str}{\idxn}) | = \bigO{\strlen^{\zeta - 1}} \quad \forall \str' \in \hammingNbhd_S(\str)
\right\} \subset \R^\paramDim
\end{equation}
denote the event that the difference in logits is bounded as $\bigO{\strlen^{\zeta - 1}}$ on a Hamming neighborhood of size $k$ of $\strlen$.
Our goal is to show that
\begin{equation}\label{eq:Q-prob-bound}
\mathbb{P}(\param \in Q_{\strlen, \str, k, \zeta}) = \frac{\lambda_{\paramDim}(Q_{\strlen, \str, k, \zeta} \cap \paramspace)}{\lambda_\paramDim(\paramspace)}
\geq 1-(k+1)\bigO{\strlen^{1 - (\hiddDim-1)\zeta/(2\tfnumlayer)}}.
\end{equation}

We first construct a $\chi$-approximating cover $\paramspace_\chi \supset \paramspace$ for each $\chi > 0$ such that
\begin{equation}\label{eq:cover-condition-approx}
\lambda_\paramDim(\paramspace_\chi) < \lambda_\paramDim(\paramspace) + \chi
\end{equation} and
\begin{equation}\label{eq:cover-condition-prob}
\frac{\lambda_\paramDim(Q_{\strlen, \str, k, \zeta} \cap \paramspace_\chi)}{\lambda_\paramDim(\paramspace_\chi)}
\geq 1-(k+1)\bigO{\strlen^{1 - (\hiddDim-1)\zeta/(2\tfnumlayer)}}.    
\end{equation} Then we show that this implies \cref{eq:Q-prob-bound} by taking an infimum over all $\chi > 0$.

\paragraph{Constructing $\paramspace_\chi$.}%
Let us fix $\chi > 0$. We construct the cover $\paramspace_\chi$ as an essentially disjoint union of small cubes.
For any $r \in \N$,\footnote{We will choose a suitable value of $r$ later.}
    ``partition'' $\mathbb{R}^{\paramDim}$ into a grid of cubes $C_\va$, $\va=(a_1, \ldots, a_{\paramDim}) \in \Z^{\paramDim}$ of side length $1/r$, where
    \begin{equation}
        C_\va^r \defeq \bigtimes_{i=1}^{\paramDim} \left[\frac{a_i}{r}, \frac{a_i+1}{r}\right].
    \end{equation}
Let $\mathcal{C}^r\defeq\{C_\va^r: \va \in \Z^{\paramDim}\}$. The elements of $\mathcal{C}^r$ do not form a partition in the strict sense since they are not disjoint. However, they are \emph{essentially} disjoint: they only intersect on a measure zero set, which is negligible for our purposes.

Denote by
    \begin{equation}
        \mathcal{C}^r(\paramspace)\defeq\{C_\va^r \in \mathcal{C}^r: C_\va^r \cap \paramspace \neq \emptyset\}
    \end{equation} the set of cubes which intersect $\paramspace$, and their union by%
    \begin{equation}
     \text{Cover}^{(r)}\defeq\bigcup_{C_\va^r \in \mathcal{C}^r(\paramspace)} C_\va^r.   
    \end{equation}
    It is clear that $\text{Cover}^{(r)} \supset \paramspace$, hence $\lambda_\paramDim(\paramspace) \leq \lambda_\paramDim\left(\text{Cover}^{(r)}\right)$ by monotonicity. Furthermore, since $\paramspace$ is bounded and its boundary is a null set, $\paramspace$ is Jordan measurable. Jordan measurability ensures that $\lambda_\paramDim\left(\text{Cover}^{(r)}\right) \to \lambda_\paramDim(\paramspace)$ as $r \to \infty$, so 
    there exists a large enough integer $r$ such that
    \begin{equation}\label{eq:cover-approx}
        \lambda_\paramDim\left(\text{Cover}^{(r)}\right) -\lambda_\paramDim(\paramspace) < \chi.
    \end{equation}
    
    Fix now this choice of $r$, and let $\paramspace_\chi\defeq\text{Cover}^{(r)}$. Condition \cref{eq:cover-condition-approx} is given precisely by \cref{eq:cover-approx}, while \cref{eq:cover-condition-prob} holds because 
    \cref{cor:x-delta-output} holds in each small cube independently. Note that size of the cubes in this partition is determined by $\paramspace$, hence we can treat $\rho$ as a constant and absorb the term $\left(\frac{1}{\rho}\right)^{\hiddDim - 1}$ in $\bigO{\cdot}$.
    
\paragraph{Proving \cref{eq:Q-prob-bound}.}
Now we turn to showing how \cref{eq:cover-condition-approx} and \cref{eq:cover-condition-prob} for all $\chi > 0$ imply \cref{eq:Q-prob-bound}. 

First, note that 
\begin{equation}
\frac{\lambda_\paramDim(Q_{\strlen, \str, k, \zeta} \cap \paramspace_\chi)}{\lambda_\paramDim(\paramspace_\chi)}
= 1 - \frac{\lambda_\paramDim(\paramspace_\chi \setminus Q_{\strlen, \str, k, \zeta} )}{\lambda_\paramDim(\paramspace_\chi)},
\end{equation}
and
\cref{eq:cover-condition-prob} is thus equivalent to
\begin{align}
\frac{\lambda_\paramDim(Q_{\strlen, \str, k, \zeta} \cap \paramspace_\chi)}{\lambda_\paramDim(\paramspace_\chi)}
\geq 1-(k+1)\bigO{\strlen^{1 - (\hiddDim-1)\zeta/(2\tfnumlayer)}} &\iff  \frac{\lambda_\paramDim(\paramspace_\chi \setminus Q_{\strlen, \str, k, \zeta} )}{\lambda_\paramDim(\paramspace_\chi)} \leq (k+1)\bigO{\strlen^{1 - (\hiddDim-1)\zeta/(2\tfnumlayer)}}.
\end{align}

We can then bound
\begin{align}
    \mathbb{P}(\param \in Q_{\strlen, \str, k, \zeta})&=\frac{\lambda_\paramDim(Q_{\strlen, \str, k, \zeta} \cap \paramspace)}{\lambda_\paramDim(\paramspace)}=\frac{\lambda_\paramDim(\paramspace) - \lambda_\paramDim(\paramspace \setminus Q_{\strlen, \str, k, \zeta})}{\lambda_\paramDim(\paramspace)}\\
    &\geq 1 - \frac{\lambda_\paramDim(\paramspace_\chi \setminus Q_{\strlen, \str, k, \zeta})}{\lambda_\paramDim(\paramspace)}
    \justification{$\paramspace_\chi \supset \paramspace$}\\
    &\geq 1 - 
    \frac{\lambda_\paramDim(\paramspace_\chi)
    (k+1)\bigO{\strlen^{1 - (\hiddDim-1)\zeta/(2\tfnumlayer)}}}
    {\lambda_\paramDim(\paramspace)}
    \justification{\cref{eq:cover-condition-prob}}\\
    &\geq 1 - 
    \frac{(\lambda_\paramDim(\paramspace)+\chi)
        (k+1)\bigO{\strlen^{1 - (\hiddDim-1)\zeta/(2\tfnumlayer)}}}
        {\lambda_\paramDim(\paramspace)}
    \justification{\cref{eq:cover-condition-approx}}\\
    &= 1-\left(1 +\frac{\chi}{\lambda_\paramDim(\paramspace)}\right)(k+1)\bigO{\strlen^{1 - (\hiddDim-1)\zeta/(2\tfnumlayer)}}.
    \end{align}
Taking the infimum over all $\chi > 0$, we obtain the desired bound \cref{eq:Q-prob-bound}.
\end{proof}

\subsection{Transformers must have low-sensitivity strings}
\numLowSensStringsTf*\hspace{0.5em}
\begin{proof}[Proof of \cref{thm:main-result}]
    Let $C$ be the constant of $\bigO{\strlen^{\zeta - 1}}$ in \cref{lemma:x-theta-output}, and let $\strlen_0$ denote the index such that $C \strlen^{\zeta-1} < 2\margin$ for all $\strlen \geq \strlen_0$. The existence of $\strlen_0$ is guaranteed since $\zeta < 1$ implies that $\strlen^{\zeta-1} \to 0$ as $\strlen \to \infty$. Then for $\strlen \geq \strlen_0$, $\left|\transformernetwork\left(\param, \str\right)-\transformernetwork(\param, \flip{\str}{\idxn})\right| \leq C\strlen^{\zeta - 1} < 2\margin$ implies that $\accept_{\margin}(\transformernetwork(\param, \cdot), \str)=\accept_\margin(\transformernetwork(\param, \cdot), \flip{\str}{\idxn})$.
    
    We first show that for any subset of strings $\mathcal{X} \subseteq \booleanset^\strlen$, $|\mathcal{X}|=K$ where $\strlen \geq \strlen_0$, with high probability all strings $\str \in \mathcal{X}$ have sensitivity $\sensitivityfunc{\str}{ \transformernetwork(\param, \cdot))} \leq \strlen-k$. Indeed,
    \begin{align}\label{eq:X-num-low-sens}
        \mathbb{P}\left(\bigcap_{\str \in \mathcal{X}} 
        \{\sensitivityfunc{\str}{ \transformernetwork(\param, \cdot))} \leq \strlen-k\}\right)
        &=1-\mathbb{P}\left(\bigcup_{\str \in \mathcal{X}} 
        \{\sensitivityfunc{\str}{ \transformernetwork(\param, \cdot))} > \strlen-k\}\right)\\
        &\geq 1- \sum_{\str \in \mathcal{X}} \mathbb{P}\left(\sensitivityfunc{\str}{ \transformernetwork(\param, \cdot))} > \strlen-k\right)
        \justification{union bound}\\
        &\geq 1- K \cdot (k+1)
        \bigO{\strlen^{1 - (\hiddDim-1)\zeta/(2\tfnumlayer)}}.
        \justification{\text{\cref{lemma:x-theta-output}}}
    \end{align}
    
    We now use the first Borel--Cantelli lemma~\citep[Theorem 2.3.1.]{Durrett2010Probability} to show that, for suitable choices of the function $K(\strlen)$, almost surely there exist $K(\strlen)$ strings $\str \in \booleanset^\strlen$ with sensitivity $\sensitivityfunc{\str}{\transformernetwork(\param, \cdot)} \leq \strlen-k$ for all large enough $\strlen$.
    
    \begin{lemma}[Borel--Cantelli]
        Let $\mathcal{A}_1, \mathcal{A}_2, \ldots$ be a sequence of events in a probability space. If $\sum_{\strlen=1}^\infty \mathbb{P}(\mathcal{A}_\strlen) < \infty$, then almost surely only finitely many of the events occur.
    \end{lemma}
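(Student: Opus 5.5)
The plan is to prove this by bounding the probability of the limsup event directly with countable subadditivity; no earlier result from the paper is needed. The event ``infinitely many of the $\mathcal{A}_\strlen$ occur'' is
\begin{equation}
\limsup_{\strlen\to\infty} \mathcal{A}_\strlen \defeq \bigcap_{m=1}^\infty \bigcup_{\strlen = m}^\infty \mathcal{A}_\strlen ,
\end{equation}
since an outcome lies in $\bigcup_{\strlen\ge m}\mathcal{A}_\strlen$ for every $m$ exactly when it lies in $\mathcal{A}_\strlen$ for arbitrarily large $\strlen$. The claim that almost surely only finitely many events occur is therefore the claim that $\mathbb{P}(\limsup_\strlen \mathcal{A}_\strlen) = 0$.

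First, for each fixed $m$, the limsup event is contained in the tail union $\bigcup_{\strlen \ge m}\mathcal{A}_\strlen$. Monotonicity and countable subadditivity then give
\begin{equation}
\mathbb{P}\Bigl(\limsup_{\strlen\to\infty} \mathcal{A}_\strlen\Bigr) \le \mathbb{P}\Bigl(\bigcup_{\strlen=m}^\infty \mathcal{A}_\strlen\Bigr) \le \sum_{\strlen=m}^\infty \mathbb{P}(\mathcal{A}_\strlen).
\end{equation}
The right-hand side is the tail of a convergent series of nonnegative terms, by the hypothesis $\sum_\strlen \mathbb{P}(\mathcal{A}_\strlen)<\infty$. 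So it tends to $0$ as $m\to\infty$. The left-hand side does not depend on $m$, hence it is $0$.

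An alternative route I could use instead: set $Z \defeq \sum_\strlen \ind{\mathcal{A}_\strlen}$. By monotone convergence, $\mathbb{E}[Z] = \sum_\strlen \mathbb{P}(\mathcal{A}_\strlen) < \infty$, so $Z<\infty$ almost surely, which says exactly that only finitely many events occur.

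There is no real obstacle here. The only points needing care are the identification of ``infinitely many occur'' with the limsup set, and the measurability of that set, which holds because it is a countable intersection of countable unions of events.

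In the subsequent application, I will take $\mathcal{A}_\strlen$ to be the failure event bounded in \cref{eq:X-num-low-sens}. The work there is to choose $K(\strlen)$ and $\zeta$ close to $1$ so that the resulting bound is summable in $\strlen$.
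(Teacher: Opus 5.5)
Your proof is correct and complete: bounding $\mathbb{P}(\limsup_\strlen \mathcal{A}_\strlen)$ by the tail sums $\sum_{\strlen \ge m}\mathbb{P}(\mathcal{A}_\strlen)\to 0$ is the standard argument, and your alternative via $\mathbb{E}\bigl[\sum_\strlen \ind{\mathcal{A}_\strlen}\bigr] = \sum_\strlen \mathbb{P}(\mathcal{A}_\strlen) < \infty$ is also valid. The paper gives no proof of its own and simply cites Durrett's Theorem 2.3.1, whose textbook proof is essentially your counting-variable route, so either version matches what the paper relies on.
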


    Let $K\colon \N_0 \to \R^+$.
    Let $\mathcal{B}_\strlen$ denote the event that there exists a subset of strings $\mathcal{X} \subseteq \booleanset^\strlen$, $|\mathcal{X}|=K(\strlen)$ such that every $ \str \in \mathcal{X}$ has sensitivity $\sensitivityfunc{\str}{ \transformernetwork(\param, \cdot)} \leq \strlen-k$, and let $\mathcal{A}_\strlen\defeq \mathcal{B}_\strlen^\textsf{c}$ denote its complement. Then for $\strlen \geq \strlen_0$, we can bound
    \begin{align}
        \mathbb{P}(\mathcal{B}_\strlen)
        =
        \mathbb{P}\left(\bigcup_{\substack{\mathcal{X} \subseteq \booleanset^\strlen\\ |\mathcal{X}|=K(\strlen)}}
        \bigcap_{\str \in \mathcal{X}} 
        \{\sensitivityfunc{\str}{\transformernetwork(\param, \cdot)} \leq \strlen-k\}\right)
        \geq
        1- K(\strlen)\cdot (k+1)
        \bigO{\strlen^{1 - (\hiddDim-1)\zeta/(2\tfnumlayer)}}
    \end{align} using \cref{eq:X-num-low-sens}, or equivalently, for the complement,
    \begin{equation}
        \mathbb{P}(\mathcal{A}_\strlen) \leq K(\strlen) \cdot (k+1)
        \bigO{\strlen^{1 - (\hiddDim-1)\zeta/(2\tfnumlayer)}}.
    \end{equation} Thus we can write
    \begin{equation}
        \sum_{\strlen=1}^\infty \mathbb{P}(\mathcal{A}_\strlen) 
        =  C' + \sum_{\strlen=\strlen_0}^\infty \mathbb{P}(\mathcal{A}_\strlen)
    \end{equation} where we aggregated the first $\strlen_0-1$ terms into a (finite) constant $C'$. We need to guarantee that the sum \begin{equation}
        \sum_{\strlen=\strlen_0}^\infty \mathbb{P}(\mathcal{A}_\strlen) \leq \sum_{\strlen=\strlen_0}^\infty K(\strlen)\cdot (k+1) \bigO{\strlen^{1 - (\hiddDim-1)\zeta/(2\tfnumlayer)}} \stackrel{?}{<} \infty
    \end{equation} is finite, in order to satisfy the conditions of Borel--Cantelli.
    A condition that suffices is
    \begin{equation}
        K(\strlen) (k+1) \bigO{\strlen^{1 - (\hiddDim-1)\zeta/(2\tfnumlayer)}} = \bigO{\strlen^{-1-\varsigma}}
    \end{equation} for some $\varsigma > 0$, which corresponds to
    \begin{equation}\label{eq:condition-on-K-strongest}
        K(\strlen) = \bigO{\frac{1}{k+1} \strlen^{-2+(\hiddDim-1)\zeta/(2\tfnumlayer)-\varsigma}}.
    \end{equation}
    Since we can take $\zeta < 1$ to be arbitrarily close to $1$ and $\varsigma > 0$ arbitrarily close to $0$, \cref{eq:condition-on-K-strongest} is implied if
    \begin{equation}\label{eq:condition-on-K-weakest}
        K(\strlen) = \bigO{\frac{1}{k+1} \strlen^{-4+(\hiddDim-1)/(2\tfnumlayer)}}.    
    \end{equation}
    
    Consider, in particular, the function $K(\strlen) \defeq \frac{1}{k+1} \strlen^{-4+(\hiddDim-1)/(2\tfnumlayer)}$. In light of the above discussion, we can conclude by Borel--Cantelli that almost surely only finitely many of the events $\mathcal{A}_\strlen$'s occur for this choice of $K(\strlen)$. In other words, almost surely there exist $\frac{1}{k+1} \strlen^{-4+(\hiddDim-1)/(2\tfnumlayer)}$ strings with sensitivity $\sensitivityfunc{\str}{\transformernetwork(\param, \cdot)} \leq \strlen-k$ for all sufficiently large $\strlen$, which was what we wanted to show.
\end{proof}

\subsection{Proof assuming Gaussian measure}

Here, we show a version of Lemma~\ref{lemma:x-theta-output} for a Gaussian measure over the parameters, instead of the uniform measure over a compact set. The rest of the proof remains identical for the Gaussian case.

\begin{restatable}{lemma}{tfLowSensitivityFixedStringGaussian}\label{lemma:x-theta-output-gaussian}
There is $\gamma > 0$ such that the following holds.
    Let $\transformernetwork$ be a $\hiddDim$-dimensional transformer with $\tfnumlayer$ layers, as defined in \cref{sec:transformer}, and let $\str \in \booleanset^\strlen$.
    Let $\param \sim \mathcal{N}(\boldsymbol{0},\frac{1}{\gamma \hiddDim} \boldsymbol{I}_{\paramDim})$ be a set of parameters drawn from a Gaussian distribution with mean $\boldsymbol{0}$ and covariance $\frac{1}{\gamma \hiddDim}$ times the identity.
    With probability at least
    $
    1-(k+1)\bigO{\strlen^{1 - (\hiddDim-1)\zeta/(2\tfnumlayer)}}$,
    there exists $S \subseteq [\strlen]$ with $|S| = k$ such that for all $\idxn \in S$,
    \begin{equation}\label{eq:x-theta-output-gaussian}  \left|\transformernetwork(\param, \str)-\transformernetwork(\param, \flip{\str}{\idxn})\right| = \bigO{\strlen^{\zeta - 1}}.
    \end{equation}
    
    Here the $\bigO{\cdot}$ terms include constants depending on the architecture, but not $\strlen$ or $\str$.
\end{restatable}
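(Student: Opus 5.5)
The plan is to deduce the Gaussian version from the uniform perturbation statement, \cref{cor:x-delta-output}, by conditioning. The key structural fact is that the perturbation lemma only used randomness in the last MLP bias $\mlpBias_2^\idxl$ of each layer, together with a conditioning argument over all other coordinates. I therefore plan to re-run that argument with Gaussian coordinates in place of uniform ones.

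The only step that changes is \cref{lemma:sd-small-prob}. For a fixed $\vy$ and $\boldsymbol{Z}\sim\mathcal{N}(\boldsymbol{0},\sigma^2\boldsymbol{I}_\hiddDim)$ with $\sigma^2 = 1/(\gamma\hiddDim)$, I need a bound of the form $\mathbb{P}(\varFun(\vy+\boldsymbol{Z})<\eta^2)\le C_\sigma\,\eta^{\hiddDim-1}$, uniformly in $\vy$. This follows directly:
\begin{itemize}
\item $\boldsymbol{P}\boldsymbol{Z}$ is a standard Gaussian of variance $\sigma^2$ on the $(\hiddDim-1)$-dimensional subspace $U$.
\item Its density is therefore bounded by $(2\pi\sigma^2)^{-(\hiddDim-1)/2}$.
\item The event in question is that $\boldsymbol{P}\boldsymbol{Z}$ lies in a Euclidean ball of radius $\sqrt{\hiddDim}\,\eta$ centered at $-\boldsymbol{P}\vy$.
\item Hence its probability is at most $(2\pi\sigma^2)^{-(\hiddDim-1)/2}\lambda_{\hiddDim-1}(B_2^{\hiddDim-1}(\sqrt{\hiddDim}\eta))$, which is $\bigO{\eta^{\hiddDim-1}}$ with a constant depending only on $\hiddDim$ and $\gamma$.
\end{itemize}
This bound is uniform in the center, which is exactly what the conditioning in \cref{lemma:x-delta-blowup-per-layer} requires.

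With this in hand, I repeat \cref{lemma:x-delta-blowup-per-layer} with $\eta=\strlen^{-\zeta/(2\tfnumlayer)}$. I condition on all coordinates of $\param$ other than $\mlpBias_2^\idxl$; under the Gaussian law these are independent of $\mlpBias_2^\idxl$. The pre-layer-norm vector is then $\preLN_\idxn^\idxl$ plus $\mlpBias_2^\idxl$, shifted by a fixed vector. Taking a union bound over the $\strlen+1$ positions and the $k+1$ strings in $\hammingNbhd_S(\str)$, and integrating out the conditioning, gives the per-layer event with probability $1-(k+1)\bigO{\strlen^{1-(\hiddDim-1)\zeta/(2\tfnumlayer)}}$. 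A further union bound over the $\tfnumlayer$ layers gives the bounded-blowup event, as in \cref{lemma:x-delta-blowup-fixed-S}.

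The main obstacle is the passage from bounded blowup to \cref{eq:x-theta-output-gaussian}. \Cref{lemma:influence-blowup} has a constant $C_\text{infl}$ depending on bounds on the parameter norms, and the final step needs a bound on $\|\weights\|_2$; a Gaussian has unbounded support. I plan to handle this by truncation:
\begin{enumerate}
\item Fix a radius $R$ and intersect with the event $\|\param\|_\infty\le R$.
\item On this event, apply \cref{lemma:influence-blowup} and the readout bound with constants determined by the compact set $B_\infty^\paramDim(R)$, exactly as in \cref{cor:x-delta-output}.
\item The complement has probability at most $\paramDim\,e^{-R^2\gamma\hiddDim/2}$.
\item Choosing $R$ as a large constant would leave a non-vanishing failure probability. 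Instead I let $R$ grow like $\sqrt{\log\strlen}$, so the tail becomes $\bigO{\strlen^{-c}}$ for any desired $c$ and is absorbed into the stated error term.
\end{enumerate}

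The cost of growing $R$ is that the constants in \cref{lemma:influence-blowup} (products of operator norms over $\tfnumlayer$ layers) become polylogarithmic in $\strlen$. The influence bound then reads $\mathrm{polylog}(\strlen)\cdot\strlen^{\zeta-1}$. To keep the stated $\bigO{\strlen^{\zeta-1}}$, I would slightly decrease $\zeta$ and restate the result with $\zeta'<\zeta$; since $\zeta$ is later taken arbitrarily close to $1$, this does not affect \cref{thm:main-result}.

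The role of $\gamma$ is to make the parameter scale small enough that the layer-0 and activation bounds $\Gamma_\idxl$ are well controlled. If one prefers to avoid the polylog bookkeeping, an alternative route is to show that the activations are layer-normalized and hence have norm $\sqrt{\hiddDim}$ deterministically. The parameter dependence of $C_\text{infl}$ is then only polynomial in $\|\param\|$, and its moments under the Gaussian are finite, so a Markov-type bound handles the tail.
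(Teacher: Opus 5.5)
Your Gaussian replacement for \cref{lemma:sd-small-prob} is correct: the density of $\boldsymbol{P}\boldsymbol{Z}$ on $U$ is at most $(2\pi\sigma^2)^{-(\hiddDim-1)/2}$, uniformly in the center. So is conditioning on all coordinates except $\mlpBias_2^\idxl$, and for the blowup event this is cleaner than the paper's route.

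The gap is in the step you yourself flagged. The constant hidden in \cref{lemma:influence-blowup} is not polynomial in $\|\param\|$. The $1/\strlen$ factor comes from bounding each attention weight by $e^{2A}/(\strlen+1)$, where $A$ bounds the attention scores. So the constant contains a factor like $\exp\bigl(4\hiddDim\sum_\idxl\max_\idxh\|(\kMtx^{\idxl\idxh})^\top\qMtx^{\idxl\idxh}\|\bigr)$, i.e.\ $\exp(\Theta(\|\param\|^2))$. This is not just a proof artifact: scores of size $A$ can put weight $e^{\Theta(A)}/\strlen$ on a single position, and on $\{\|\param\|_\infty\le R\}$ the scores reach $\Theta(R^2)$. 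Your tail bound needs $R^2\asymp\log\strlen/\gamma$, so the constant is $\strlen^{\Theta(1/\gamma)}$, not polylogarithmic, and you only get $\bigO{\strlen^{\zeta-1+\Theta(1/\gamma)}}$.

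Re-indexing $\zeta$ does not absorb this. Lowering the blowup exponent to compensate weakens the failure-probability bound $\strlen^{1-(\hiddDim-1)\zeta/(2\tfnumlayer)}$ correspondingly. At best you get mismatched exponents, with a $\gamma$ that must blow up as $\zeta\to 1$. Your Markov alternative rests on the same false premise. Layer norm fixes $\|\activation_\idxn^\idxl\|=\sqrt{\hiddDim}$, but it does nothing about how the scores depend on $\kMtx,\qMtx$, and moments of $C_\text{infl}$ are finite only for large $\gamma$.

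Relatedly, you misidentify the role of $\gamma$: layer-$0$ activations do not depend on $\param$ at all. In your argument the size of $\gamma$ is never used, so it would prove the lemma for $\gamma=1$, which the paper explicitly leaves open.

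The missing idea is to let the parameter-dependent constant multiply the \emph{failure probability} rather than the influence bound, and then average it against the Gaussian. That average converges only because $\gamma$ is large. The paper does this as follows:
\begin{itemize}
\item It tiles $\R^\paramDim$ by unit cubes $C_\va$.
\item In each cube it applies \cref{cor:x-delta-output} with a cube-dependent constant $\mathrm{Const}(C_\va)\lesssim\mathrm{poly}(\|\va\|)\,e^{O(\paramDim\|\va\|^2)}$ in front of $(k+1)\strlen^{1-(\hiddDim-1)\zeta/(2\tfnumlayer)}$.
\item It shows $\sum_\va\mathbb{P}(\param\in C_\va)\,\mathrm{Const}(C_\va)$ is finite and independent of $\strlen$ once $\sigma^2=1/(20\paramDim)$, because the Gaussian factor $e^{-10\paramDim\|\va\|^2}$ beats $e^{8\paramDim\|\va\|^2}$.
\end{itemize}

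Your framework can do the same without cubes. $C_\text{infl}$ is controlled by $\kMtx,\qMtx,\vMtx,\mlpW_1,\mlpW_2$ (the activations it involves are normalized), not by the bias you randomize. So you may choose the variance threshold per parameter, $\eta\propto(\|\weights\|\,C_\text{infl})^{1/(2\tfnumlayer)}\strlen^{-\zeta/(2\tfnumlayer)}$. This makes the logit gap at most a universal constant times $\strlen^{\zeta-1}$ on the good event. Your anti-concentration bound then gives a conditional failure probability carrying the factor $(\|\weights\|\,C_\text{infl})^{(\hiddDim-1)/(2\tfnumlayer)}$. The lemma follows once this factor has finite Gaussian expectation, which again requires $\gamma$ large.
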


\begin{remark}
    We believe that \cref{lemma:x-theta-output-gaussian} can be strengthened to hold for $\gamma=1$, which would extend the result to Xavier Gaussian initializations.
\end{remark}

\begin{proof}[Proof of \cref{lemma:x-theta-output-gaussian}]
    Let $\mathbb{P}$ be the measure associated with the Gaussian prior for $\param$.
    
    As in the proof of \cref{lemma:x-theta-output},
    let
    \begin{equation}
Q_{\strlen, \str, k, \zeta} \defeq \left\{ \tilde{\param} \in \R^\paramDim \colon
\exists S \subseteq [\strlen], \ |S|=k \colon 
| \transformernetwork(\tilde{\param}, \str) - \transformernetwork(\tilde{\param}, \flip{\str}{\idxn}) | = \bigO{\strlen^{\zeta - 1}} \quad \forall \str' \in \hammingNbhd_S(\str)
\right\} \subset \R^\paramDim
\end{equation} denote the event that the difference in logits is bounded as $\bigO{\strlen^{\zeta - 1}}$ on a Hamming neighborhood of size $k$ of $\strlen$.    
    
    We construct cubes $C_\va^r$ as in the proof of \cref{lemma:x-theta-output}. $\va=(a_1, \ldots, a_{\paramDim}) \in \Z^{\paramDim}$ of side length $1/r$, where
    \begin{equation}
        C_\va^r \defeq \bigtimes_{i=1}^{\paramDim} \left[\frac{a_i}{r}, \frac{a_i+1}{r}\right].
    \end{equation}
    We can apply \cref{cor:x-delta-output} separately to each $C_\va^r$. In each case, the $\bigO{\cdot}$ implies a constant $Const(C_\va^r)$ upper-bounded by $\sup_{\param \in C_\va^r} \bigO{\poly{\|\param\|, \paramDim} \cdot \exp(4 \paramDim \cdot \|\param\|^2)}$.
    $Const(C_\va^r)$  arises because the $\bigO{\cdot}$ in \cref{cor:x-delta-output} hides a constant that depends on the norms of the vectors in $\paramspace$. This poses no issue when covering a compact space with finitely many such $C_\va^r$'s, but becomes critical for an infinite number of such $C_\va^r$'s in the Gaussian case. In this latter scenario, one must account for how this hidden constant depends on $C_\va^r$.
    Namely, the constant is $C^{(1)} \cdots C^{(\tfnumlayer)}$ in Lemma 18 of \citet{hahn-rofin-2024-sensitive}, which by Appendix B.1 in \citet{hahn-rofin-2024-sensitive} is bounded by quantities polynomial in $\paramDim$ and $\|\param\|$, times $\exp(4\hiddDim \sum_\idxl \max_{\idxh} \|(\kMtx^{\idxl\idxh})^\top \qMtx^{\idxl\idxh}\|_\text{spectral}) \leq \exp(4\paramDim \sum_\idxl \max_\idxh \|(\kMtx^{\idxl\idxh})^\top \qMtx^{\idxl\idxh}\|_\text{spectral})$. Given that the spectral norm is upper-bounded by the Frobenius norm, this is upper-bounded by $\exp(4\paramDim \sum_\idxl \max_\idxh \|\kMtx^{\idxl\idxh}\|_F\|\qMtx^{\idxl\idxh}\|_F) \leq \exp(4\paramDim \|\param\|_2^2)$ where we used that the $\kMtx^{\idxl\idxh}$ and $\qMtx^{\idxl\idxh}$ matrices are disjoint slices of the vector $\param$.

     In fact, the choice of $r$ will play no role; we can take it to be $1$. Thus $\left(\frac{1}{\rho}\right)^{\hiddDim - 1}=1$ in \cref{eq:x-delta-blowup-prob} . We denote the cubes simply as $C_\va$ from now on.

We take $\gamma$ large enough ($\gamma = \frac{20 \paramDim}{\hiddDim}$) to make $\sigma^2 = \frac{1}{\gamma \hiddDim} = \frac{1}{20 \paramDim}$.
    
    Consider
    \begin{align}
        \mathbb{P}(\param \notin Q_{\strlen, \str, k, \zeta}) =& \sum_{\va} \mathbb{P}(\param \in C_\va) \mathbb{P}(\param \notin Q_{\strlen, \str, k, \zeta} \mid \param \in C_\va) \\
        \leq & \sum_{\va} \mathbb{P}(\param \in C_\va) (k+1) Const(C_\va) \strlen^{1-(\hiddDim-1)\zeta/(2\tfnumlayer)} \\
        = & \strlen^{1-(\hiddDim-1)\zeta/(2\tfnumlayer)} (k+1) \sum_{\va} \mathbb{P}(\param \in C_\va)  Const(C_\va)  
    \end{align}
    The sum is by definition independent of $\strlen$; all that remains is to show that it is also finite:
    \begin{align}
        & \sum_{\va} \mathbb{P}(\param \in C_\va) Const(C_\va) \\
        \leq & \sum_{\va} \mathbb{P}(\param \in C_\va) \sup_{\param \in C_\va} \bigO{\poly{\|\param\|, \hiddDim} \cdot \exp(4 \paramDim\cdot \|\param\|^2)} \\
        \leq & \sum_{\va} \left(\sup_{\param \in C_\va} \mathbb{P}(\param)\right) \lambda_\paramDim(C_\va) \sup_{\param \in C_\va} \bigO{\poly{\|\param\|, \hiddDim} \cdot \exp(4 \paramDim \cdot \|\param\|^2)} \\
        = & \frac{1}{\sqrt{(2\pi)}^{\paramDim} \sigma^{\paramDim}}\sum_{\va} \exp\left(-\frac{\sum_{i=1}^{\paramDim} a_i^2}{\sigma^2} \right)   \sup_{\param \in C_\va} \bigO{\poly{\|\param\|, \hiddDim} \cdot \exp(4 \paramDim \cdot \|\param\|^2)} \\
        \leq & \frac{1}{\sqrt{(2\pi)}^{\paramDim}\sigma^{\paramDim}}\sum_{\va} \exp\left(-\frac{\sum_{i=1}^{\paramDim} a_i^2}{\sigma^2}\right)  \bigO{\poly{\|\va\|, \hiddDim} \cdot \exp(8 \cdot \paramDim \cdot \|\va\|^2)} \\
        \leq & \frac{1}{\sqrt{(2\pi)}^{\paramDim}\sigma^{\paramDim}}\sum_{\va}   \poly{\|\va\|, \hiddDim} \cdot \exp\left(-\frac{\|\va\|_2^2}{2\sigma^2} + 8 \cdot \paramDim \cdot \|\va\|^2\right) \\
        = & \frac{1}{\sqrt{(2\pi)}^{\paramDim}\sigma^{\paramDim}}\sum_{\va}   \poly{\|\va\|, \hiddDim} \cdot \exp\left(\|\va\|_2^2 \left[\frac{-1}{2\sigma^2} + 8 \cdot \paramDim\right]\right) \\
        \leq & \frac{1}{\sqrt{(2\pi)}^{\paramDim}\sigma^{\paramDim}}\sum_{\va}   \poly{\|\va\|, \hiddDim} \cdot \exp\left(\|\va\|_2^2 \left[\frac{-20\paramDim}{2} + 8 \cdot \paramDim\right]\right) \\
        \leq & \frac{1}{\sqrt{(2\pi)}^{\paramDim}\sigma^{\paramDim}}\sum_{\va}   \poly{\|\va\|, \hiddDim} \cdot \exp\left(-2\paramDim\|\va\|_2^2\right) %
    \end{align}
    where $\mathbb{P}(\param)$ also denotes the Gaussian density (overloading notation).
    By comparing the last sum to integrals of the form $\int_{\mathbb{R}^{\paramDim}}   \|\va\|^p \cdot \exp(-\|\va\|_2^2) d\va$ for $p \in \N_0$, we find that the sum is finite.
As the sum is finite, we can absorb it into $\bigO{\cdot}$.
    
\end{proof}

\newpage
\section{Experiments}
\label{sec:app-experiments}

Here, we give further details about the experimental design and the full results summarized in \cref{sec:experiments}.

\begin{figure*}[!t]
    \small
    \tikzset{-}
    \pgfplotsset{
    height=1.1in,
    width=\textwidth,
    title style={yshift=-4.7ex}
    }
    \centering
    \begin{subfigure}[t]{0.4\textwidth}
        \includegraphics{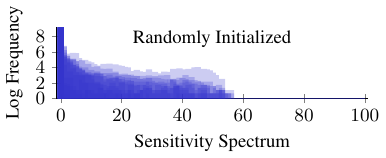}
    \end{subfigure}
    \begin{subfigure}[t]{0.4\textwidth}
        \includegraphics{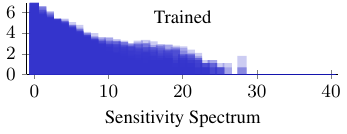}
    \end{subfigure}
    \begin{subfigure}[t]{0.4\textwidth}
        \includegraphics{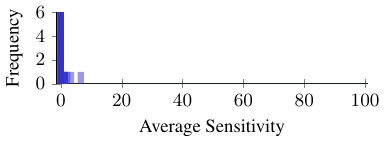}
    \end{subfigure}
    \begin{subfigure}[t]{0.4\textwidth}
        \includegraphics{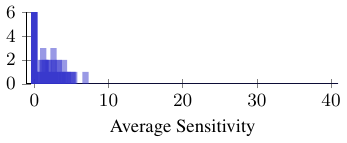}
    \end{subfigure}
    \caption{\textbf{Top}: Sensitivity spectra of $500$ ($50$ trials $\times$ $10$ datasets) randomly initialized transformers with uniform weights vs. trained transformers initialized using Xavier normal. Frequencies are shown on a log scale to improve visibility. \textbf{Bottom}: Distribution of average sensitivities of $500$ ($50$ trials $\times$ $10$ datasets) randomly initialized transformers with uniform weights vs. trained transformers initialized using Xavier normal. The $y$-axis is clipped to visualize better the tail of higher-sensitivity outliers.}
    \label{fig:sensitivity-spectrum}
\end{figure*}

\begin{figure}[!t]
    \small
        \tikzset{-}
        \pgfplotsset{
            height=1.4in,
            width=\textwidth,
            title style={yshift=-4.7ex}
        }
    \centering
    \begin{subfigure}[t]{0.33\textwidth}
        \includegraphics{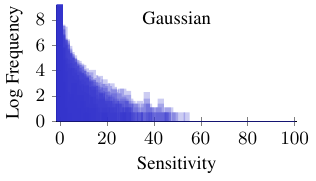}
    \end{subfigure}
    \begin{subfigure}[t]{0.33\textwidth}
        \includegraphics{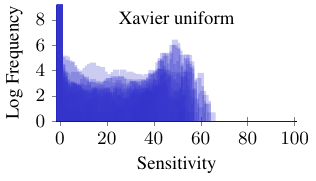}
    \end{subfigure}
    \begin{subfigure}[t]{0.33\textwidth}
        \includegraphics{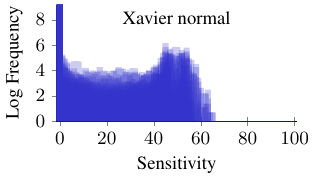}
    \end{subfigure}
    \begin{subfigure}[t]{0.33\textwidth}
        \includegraphics{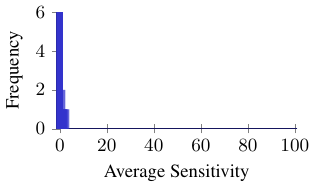}
    \end{subfigure}
    \begin{subfigure}[t]{0.33\textwidth}
        \includegraphics{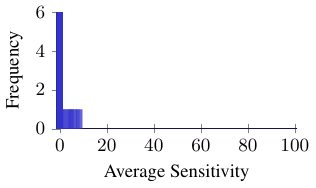}
    \end{subfigure}
    \begin{subfigure}[t]{0.33\textwidth}
        \includegraphics{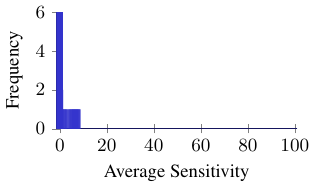}
    \end{subfigure}
    \caption{\textbf{Top}: Sensitivity spectra of $500$ ($50$ trials $\times$ $10$ datasets) randomly initialized transformers with Gaussian weights, Xavier uniform, and Xavier normal, respectively. Frequencies are shown on a log scale to improve visibility. \textbf{Bottom}: Distribution of average sensitivities of $500$ ($50$ trials $\times$ $10$ datasets) randomly initialized transformers with Gaussian weights, Xavier uniform, and Xavier normal, respectively. The $y$-axis is clipped to visualize the tail of higher-sensitivity outliers.}
    \label{fig:sensitivity-other-init}
\end{figure}

\subsection{Randomly initialized transformers}

For the randomly initialized models, we experiment with four parameter (weights and biases) initialization schemes: uniform, Gaussian, Xavier uniform, and Xavier Gaussian \cite{pmlr-v9-glorot10a}. For uniform and Gaussian initialization, parameters are drawn from $[-1,1]$ and $\mathcal{N}(0,1)$, respectively; for both Xavier uniform and Gaussian, weights are sampled with a gain of $1$.
We use fixed positional encodings, namely sinusoidal positional encodings. For each initialization method, we sample $10$ datasets containing $10k$ strings. We use strings of length $100$. For each initialization scheme and dataset, we randomly initialize $50$ transformers, and we compute the sensitivity spectrum and the average sensitivity. For each of these transformers, we randomly sample the hyperparameters, and use a small constant for $\eps$. 
We sample the number of layers uniformly from $[2,6]$. We sample the number of attention heads from $\{1,2,4,8\}$ and the head size from $\{16,32,64,128\}$, then we set the model dimension as their product. The feedforward dimension is set as a multiple of the model dimension, where the multiplier is drawn uniformly from $\{2,4,8\}$. \Cref{fig:sensitivity-spectrum} (Left) shows the sensitivity spectra and average sensitivities for the uniform initialization. 
In \cref{fig:sensitivity-other-init}, we show the sensitivity spectra and average sensitivities of transformers initialized using the other three methods. The results empirically validate our theoretical framework. As predicted by \cref{thm:main-result}, the sensitivity spectra (\cref{fig:sensitivity-spectrum}, Top left) reveal that the vast majority of strings have sensitivity 0 or close to 0. While prior work \cite{bhattamishra-etal-2023-simplicity, hahn-rofin-2024-sensitive} characterized this bias using average sensitivity, our analysis offers a finer-grained view.
The tight clustering of average sensitivities near zero (\cref{fig:sensitivity-spectrum}, Bottom left) confirms that high average-sensitivity transformers occupy a vanishingly small volume of the parameter space.\looseness=-1

\subsection{Trained transformers} 
We train transformers on the languages \parityLan, \majorityLan, \firstLan, and several $m$-\textsc{Sparse Parity}, $m$-\textsc{Sparse Majority}, and randomly generated languages. 
For each sparse variant, we generate $10$ distinct languages for each value of $m\in\{5, 10, 20, 50\}$. In all experiments, we use strings of length $100$, training sets of size $10k$, and validation and test sets of size $1k$.
For random languages, we use strings of length $40$, and training sets of size $10k$. The random languages are generated by sampling uniformly each bit of the string, as well as the label. 
We train $10$ different transformer models initialized with uniform weights on each language. The hyperparameters, including model dimensions, are randomly sampled as described for the random transformers. We initialize the models trained on the random languages using Xavier normal, as opposed to uniformly, because we observed that this aids learning in preliminary experiments. We randomly sample the batch size from a uniform distribution over $[128, 4096]$, and the initial learning rate from a log-uniform distribution over $[0.0001, 0.01]$.\looseness=-1

We train each model by optimizing the binary cross entropy between the true label and the prediction using Adam \cite{kingma-ba-2015-adam}. We take a checkpoint every epoch, where we evaluate the model on the validation set and update the learning rate and early stopping schedules. We multiply the learning rate by 0.5 after $5$ epochs of no decrease and stop early after $10$ epochs of no decrease. 
On the randomly sampled languages, we evaluate on the training set instead, as done by \citet{bhattamishra-etal-2023-simplicity}.\looseness=-1

We show in \cref{fig:sensitivity-trained}, \cref{fig:sensitivity-sparse-1}, and \cref{fig:sensitivity-sparse-2} the sensitivity spectra and the average sensitivities of the models trained on \parityLan, \majorityLan, and \firstLan, and the sparse parities and majorities, respectively. All models trained on \firstLan and \majorityLan learn the languages perfectly. Indeed, the sensitivity spectra show that sensitives cluster correctly at $1$ for \firstLan, and at $0$ and half the string length for \majorityLan. On \parityLan, the trained models have higher-sensitivity strings. We show in \cref{fig:sensitivity-spectrum} (Right) the sensitivity spectra and average sensitivities of trained transformers on randomly generated languages. Some of the sampled hyperparameter configurations lead to models that cannot learn these languages. For these models, the (average) sensitivities cluster around $0$.

\begin{figure*}[!t]
    \small
    \tikzset{-}
    \pgfplotsset{
    height=1.2in,
    width=\textwidth,
    title style={yshift=-4.7ex}
    }
    \centering
    \begin{subfigure}[t]{0.33\textwidth}
        \includegraphics{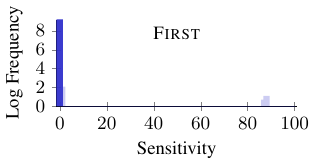}
    \end{subfigure}
    \begin{subfigure}[t]{0.33\textwidth}
        \includegraphics{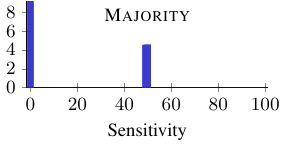}
    \end{subfigure}
    \begin{subfigure}[t]{0.33\textwidth}
        \includegraphics{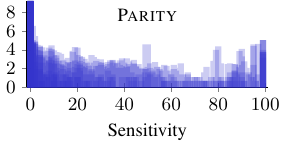}
    \end{subfigure}
    \begin{subfigure}[t]{0.33\textwidth}
        \includegraphics{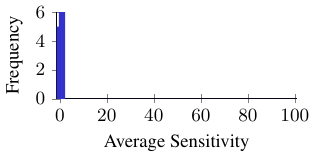}
    \end{subfigure}
    \begin{subfigure}[t]{0.33\textwidth}
        \includegraphics{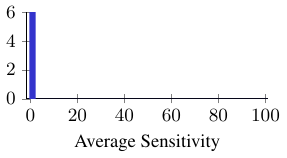}
    \end{subfigure}
    \begin{subfigure}[t]{0.33\textwidth}
        \includegraphics{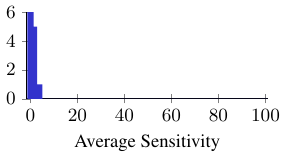}
    \end{subfigure}
    \caption{\textbf{Top}: Sensitivity spectra of $50$ transformers (per language) after training to convergence on the languages \firstLan, \parityLan, and \majorityLan. The parameters of all models are initialized uniformly. Frequencies are shown on a log scale to improve visibility. \textbf{Bottom}: Distribution of average sensitivities of $50$ transformers (per language) after training to convergence on the languages \firstLan, \parityLan, and \majorityLan. The parameters of all models are initialized uniformly. The $y$-axis is clipped to visualize better the higher-sensitivity outliers.}
    \label{fig:sensitivity-trained}
\end{figure*}

\begin{figure*}[!t]
    \small
        \tikzset{-}
        \pgfplotsset{
            height=1.2in,
            width=\textwidth,
            title style={yshift=-4.7ex}
        }
    \centering
    \begin{subfigure}[t]{0.24\textwidth}
        \includegraphics{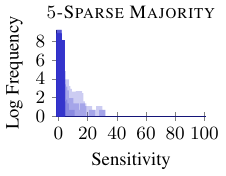}
    \end{subfigure}
    \begin{subfigure}[t]{0.24\textwidth}
        \includegraphics{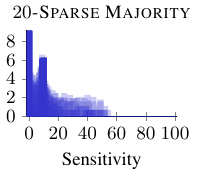}
    \end{subfigure}
    \begin{subfigure}[t]{0.24\textwidth}
        \includegraphics{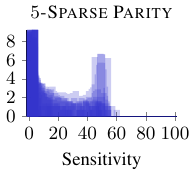}
    \end{subfigure}
    \begin{subfigure}[t]{0.24\textwidth}
        \includegraphics{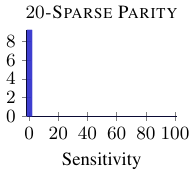}
    \end{subfigure}
    \begin{subfigure}[t]{0.24\textwidth}
        \includegraphics{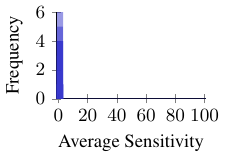}
    \end{subfigure}
    \begin{subfigure}[t]{0.24\textwidth}
        \includegraphics{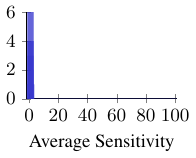}
    \end{subfigure}
    \begin{subfigure}[t]{0.24\textwidth}
        \includegraphics{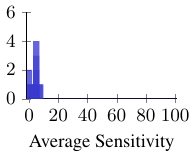}
    \end{subfigure}
    \begin{subfigure}[t]{0.24\textwidth}
        \includegraphics{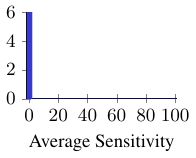}
    \end{subfigure}
    \caption{\textbf{Top}: Sensitivity spectra of $50$ transformers (per language) after training to convergence on $10$ $5$-\textsc{Sparse Majority}, $20$-\textsc{Sparse Majority}, $5$-\textsc{Sparse Parity}, and $20$-\textsc{Sparse Parity} languages each. The parameters of all models are initialized uniformly. Frequencies are shown on a log scale to improve visibility. \textbf{Bottom}: Distribution of average sensitivities of $50$ transformers (per language) after training to convergence on $10$ $5$-\textsc{Sparse Majority}, $20$-\textsc{Sparse Majority}, $5$-\textsc{Sparse Parity}, and $20$-\textsc{Sparse Parity} languages each. The parameters of all models are initialized uniformly. The $y$-axis is clipped to visualize better the higher-sensitivity outliers.}
    \label{fig:sensitivity-sparse-1}
\end{figure*}

\begin{figure*}[!t]
     \small
         \tikzset{-}
         \pgfplotsset{
             height=1.2in,
             width=\textwidth,
             title style={yshift=-4.7ex}
         }
     \centering
    \begin{subfigure}[t]{0.24\textwidth}
        \includegraphics{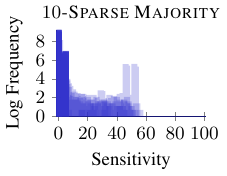}
    \end{subfigure}
    \begin{subfigure}[t]{0.24\textwidth}
        \includegraphics{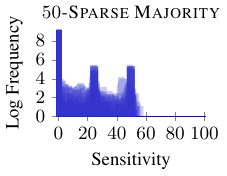}
    \end{subfigure}
    \begin{subfigure}[t]{0.24\textwidth}
        \includegraphics{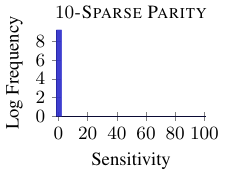}
    \end{subfigure}
    \begin{subfigure}[t]{0.24\textwidth}
        \includegraphics{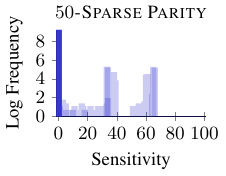}
    \end{subfigure}
    \begin{subfigure}[t]{0.24\textwidth}
        \includegraphics{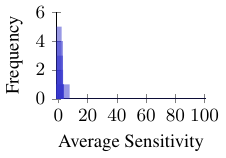}
    \end{subfigure}
    \begin{subfigure}[t]{0.24\textwidth}
        \includegraphics{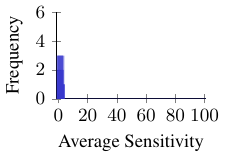}
    \end{subfigure}
    \begin{subfigure}[t]{0.24\textwidth}
        \includegraphics{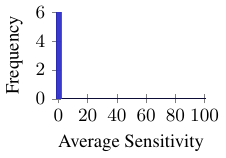}
    \end{subfigure}
    \begin{subfigure}[t]{0.24\textwidth}
        \includegraphics{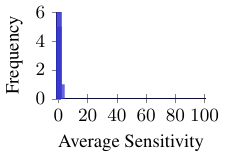}
    \end{subfigure}
    \caption{\textbf{Top}: Sensitivity spectra of $50$ transformers (per language) after training to convergence on $10$ $10$-\textsc{Sparse Majority}, $50$-\textsc{Sparse Majority}, $10$-\textsc{Sparse Parity}, and $50$-\textsc{Sparse Parity} languages each. The parameters of all models are initialized uniformly. Frequencies are shown on a log scale to improve visibility. \textbf{Bottom}: Distribution of average sensitivities of $50$ transformers (per language) after training to convergence on $10$ $10$-\textsc{Sparse Majority}, $50$-\textsc{Sparse Majority}, $10$-\textsc{Sparse Parity}, and $50$-\textsc{Sparse Parity} languages each. The parameters of all models are initialized uniformly. The $y$-axis is clipped to visualize better the higher-sensitivity outliers.}
    \label{fig:sensitivity-sparse-2}
\end{figure*}

\end{document}